\documentclass[11pt]{article}

\usepackage{comment,url,algorithm,algorithmic,graphicx,subcaption,relsize}
\usepackage{amssymb,amsfonts,amsmath,amsthm,amscd,dsfont,mathrsfs,mathtools,nicefrac}
\usepackage{float,psfrag,epsfig,color,xcolor,url,hyperref}
\usepackage{epstopdf,bbm,mathtools,enumitem}
\usepackage[toc,page]{appendix}
\usepackage[mathscr]{euscript}
\usepackage{xspace}

\usepackage[top=1in, bottom=1in, left=1in, right=1in]{geometry}

\def\balign#1\ealign{\begin{align}#1\end{align}}
\def\baligns#1\ealigns{\begin{align*}#1\end{align*}}
\def\balignat#1\ealign{\begin{alignat}#1\end{alignat}}
\def\balignats#1\ealigns{\begin{alignat*}#1\end{alignat*}}
\def\bitemize#1\eitemize{\begin{itemize}#1\end{itemize}}
\def\benumerate#1\eenumerate{\begin{enumerate}#1\end{enumerate}}

\newenvironment{talign*}
 {\csname align*\endcsname}
 {\endalign}
\newenvironment{talign}
 {\csname align\endcsname}
 {\endalign}

\def\balignst#1\ealignst{\begin{talign*}#1\end{talign*}}
\def\balignt#1\ealignt{\begin{talign}#1\end{talign}}

\let\originalleft\left
\let\originalright\right
\renewcommand{\left}{\mathopen{}\mathclose\bgroup\originalleft}
\renewcommand{\right}{\aftergroup\egroup\originalright}

\def\tinycitep*#1{{\tiny\citep*{#1}}}
\def\tinycitealt*#1{{\tiny\citealt*{#1}}}
\def\tinycite*#1{{\tiny\cite*{#1}}}
\def\smallcitep*#1{{\scriptsize\citep*{#1}}}
\def\smallcitealt*#1{{\scriptsize\citealt*{#1}}}
\def\smallcite*#1{{\scriptsize\cite*{#1}}}

\def\mbb#1{\mathbb{#1}}

\def\R{\mathbb{R}}

\def\<{\left\langle} %
\def\>{\right\rangle}

\def\E{\mbb{E}} %

\DeclareSymbolFont{rsfs}{U}{rsfs}{m}{n}
\DeclareSymbolFontAlphabet{\mathscrsfs}{rsfs}

\providecommand{\argmin}{\mathop\mathrm{arg min}}

\ifdefined\nonewproofenvironments\else
\ifdefined\ispres\else
\newtheorem{theorem}{Theorem}
\newtheorem{lemma}[theorem]{Lemma}
\newtheorem{corollary}[theorem]{Corollary}

\renewenvironment{proof}{\noindent\textbf{Proof.}\hspace*{.3em}}{\qed\\}
\newenvironment{proof-sketch}{\noindent\textbf{Proof Sketch}
  \hspace*{1em}}{\qed\bigskip\\}
\newenvironment{proof-idea}{\noindent\textbf{Proof Idea}
  \hspace*{1em}}{\qed\bigskip\\}
\newenvironment{proof-of-lemma}[1][{}]{\noindent\textbf{Proof of Lemma {#1}}
  \hspace*{1em}}{\qed\\}
\newenvironment{proof-of-theorem}[1][{}]{\noindent\textbf{Proof of Theorem {#1}}
  \hspace*{1em}}{\qed\\}
\newenvironment{proof-attempt}{\noindent\textbf{Proof Attempt}
  \hspace*{1em}}{\qed\bigskip\\}

\fi

\newtheorem{proposition}[theorem]{Proposition}

\newtheorem{assumption}{Assumption}
\fi
\makeatletter
\@addtoreset{equation}{section}
\makeatother

\hypersetup{
  colorlinks,
  linkcolor={red!50!black},
  citecolor={blue!50!black},
  urlcolor={blue!80!black}
}

\mathtoolsset{showonlyrefs}

\allowdisplaybreaks[4]

\newcommand{\rmd}{\mathrm d}

\definecolor{darkmidnightblue}{rgb}{0.0, 0.2, 0.4}
\definecolor{darkpowderblue}{rgb}{0.0, 0.2, 0.6}
\definecolor{dukeblue}{rgb}{0.0, 0.0, 0.61}

\hypersetup{
    colorlinks = true,
    citecolor= midnightblue,
    urlcolor= black,
    breaklinks=true,
    linkcolor = midnightblue,
    linkbordercolor = {white},
}

\definecolor{darkmidnightblue}{HTML}{003366}    
\definecolor{midnightblue}{HTML}{0059b3}
\definecolor{chromered}{HTML}{f14233}

\begin{document}

\title{On the Limits of Latent Reuse in Diffusion Models
}

 \author{
 Yifeng Yu\thanks{Department of Mathematical Sciences, Tsinghua University \texttt{yyf22@mails.tsinghua.edu.cn}}
 \and
 Lu Yu\thanks{
  Department of Data Science,
  City University of Hong Kong \texttt{lu.yu@cityu.edu.hk}
 }
}

\maketitle

\begin{abstract}
Diffusion models are often trained in low-dimensional latent spaces, which are then reused for related but shifted datasets. In this work, we study when such latent reuse remains reliable under distribution shift. We consider a source-target setting in which both datasets are approximately low-dimensional but may lie near different subspaces. We show that freezing and reusing a source latent space induces a target-domain score error governed by two quantities: the principal-angle misalignment between the source and target subspaces, and the target ambient noise amplified by the diffusion time scale. Motivated by these limits, we further study mixed source-target training and characterize how the required shared latent dimension depends on the relative geometry of the two distributions. 
Our results provide theoretical guidance on when latent reuse is reliable and when learning a shared representation may be necessary.
\end{abstract}

 \tableofcontents

\section{Introduction}

{Diffusion models} have recently achieved remarkable breakthroughs across multiple domains~\cite{ramesh2022hierarchical,popov2021grad,xu2022geodiff}. 
Their success is driven by a denoising principle: a forward process gradually corrupts data with Gaussian noise, while a learned score function is used to reverse this process and generate new samples~\cite{song2020score}.
In many modern applications, however, diffusion models are not trained directly in the original ambient space. 
Instead, high-dimensional data such as high-resolution images are first compressed by an encoder, often the encoder of a variational autoencoder (VAE), and the diffusion model is trained in the resulting lower-dimensional latent space~\cite{rombach2022high,podell2023sdxl,chen2023pixart,chen2024pixart,wu2025improved}. 
This strategy is computationally attractive and is consistent with the empirical observation that many high-dimensional datasets exhibit strong low-dimensional structure~\cite{pope2021intrinsic}. 
After pretraining, the representation map is often kept fixed and reused for related datasets, while a new score model is trained in the induced latent coordinates~\cite{chen2023pixart,chen2024pixart,leng2025repa,wu2025improved}.

While convenient, latent reuse raises a fundamental question:
\emph{when does a latent space learned from a source distribution remain reliable under target shift?}
The answer is not obvious, since a representation that is effective for the source distribution need not preserve all directions relevant to the target score.

Existing theory provides important insight into why diffusion models can exploit low-dimensional structure. 
A growing line of work establishes score approximation, score estimation, and distribution recovery guarantees when the data lie on, or near, an unknown low-dimensional subspace or manifold, with rates depending primarily on the intrinsic dimension rather than the ambient dimension~\cite{Chen2023_scorelowdim,li2024adapting,wang2025diffusion,hu2024statistical,oko2023diffusion,azangulov2024convergence,tang2024adaptivity,chen2025diffusion,yakovlev2025generalization,liang2025low,huang2026denoising,achilli2024losing}. 
These results, however, are largely single-distribution in nature: they explain the benefits of low-dimensional modeling, while leaving open how a learned representation behaves when reused under distribution shift.

A separate line of work studies transfer and adaptation of diffusion models across domains~\cite{cheng2025provable,ouyang2024transfer,song2025domain,kleutgens2025guided,moon2022fine,xie2023difffit,han2023svdiff}. 
For example, recent work represents the target-domain score through a pretrained source-domain score together with additional guidance~\cite{ouyang2024transfer,kleutgens2025guided}, while other work establishes sample-efficient transfer guarantees for conditional diffusion models through the lens of representation learning~\cite{cheng2025provable}. 
These works show how pretrained diffusion models or learned representations can be adapted across related tasks. 
Complementary to this perspective, we study a different widely used practice: freezing a data latent space learned from a source distribution and reusing it as the coordinate system for score estimation on a shifted target distribution.
This distinction is important because latent reuse can fail for reasons that are invisible in existing single-distribution low-dimensional theory. 
Even when both source and target distributions are approximately low-dimensional, the source latent space may be poorly aligned with the target signal subspace, and the target distribution may contain ambient noise outside the reused representation. 
In this case, no downstream score model trained only on the reused latent coordinates can fully recover the missing target directions.

Motivated by this gap, we take a first step toward a mathematical understanding of latent reuse under source-target distribution shift. 
We consider a model where both distributions are approximately low-dimensional but may concentrate near different subspaces, allowing us to isolate how representation mismatch affects score estimation. 
Our analysis shows that frozen reuse can fail for geometric reasons that cannot be resolved solely by more target samples or a more expressive downstream score model. 
This motivates mixed source--target training as an alternative strategy for learning a shared representation adapted to both distributions. 
Towards this, our contributions can be summarized as follows.
\begin{itemize}
    \item \textbf{Identifying the limits of frozen latent reuse.}
    In Theorem~\ref{prop:frozen_projection_lower_bound}, we show that frozen reuse can incur irreducible target-domain score error, even with an arbitrarily expressive downstream score model. The bound identifies two concrete failure mechanisms: principal-angle misalignment with the target subspace and target ambient noise amplified by the diffusion time scale.

    \item \textbf{Quantifying when frozen reuse is reliable.}
    In Proposition~\ref{prop:frozen_projection_upper_bound} and Theorem~\ref{thm:frozen_projector_statistical_bound}, we construct an explicit frozen-coordinate comparator and derive a finite-sample ReLU score-estimation guarantee. These results separate statistical, approximation, and geometric errors, thereby clarifying when the cost of latent reuse is small and when it remains intrinsic.

    \item \textbf{Mixed training beyond frozen reuse.}
    In Proposition~\ref{prop:optimal_mixed_projector}, Theorem~\ref{thm:mixed_oracle_projector_comparison}, and Corollary~\ref{cor:mixed_containment_regime}, we analyze mixed source--target training through a shared-projector oracle. The results show how learning a shared representation can reduce, and in favorable containment regimes remove, the target signal-mismatch term that persists under source-aligned reuse.
\end{itemize}
Overall, our results suggest that latent reuse should be viewed not only as a downstream score-estimation problem, but also as a representation-level constraint governed by the relative geometry of the source and target low-dimensional structures.
In particular, the bounds show that low-dimensionality alone does not guarantee reliable reuse: alignment, latent dimension, and ambient-noise effects all influence the resulting score error. 
These observations provide a step toward a more precise theoretical understanding of when frozen latent reuse is appropriate and when learning a shared representation may be beneficial.

\noindent\textbf{Notation.}
For a matrix \(A\), let \(\|A\|_{\rm op}\) and \(\|A\|_F\) denote its operator and Frobenius norms. 
For symmetric \(M\), let \(\lambda_{\max}(M)\) and \(\lambda_{\min}(M)\) denote its extreme eigenvalues. 
For orthonormal \(U\), write \(P_U:=UU^\top\) and \(P_U^\perp:=I-P_U\). 
For a metric class \((\mathcal F,d)\), let \(\mathcal N(\epsilon,\mathcal F,d)\) be its \(\epsilon\)-covering number. 
Denote \(a\lesssim b\) if \(a\leqslant Cb\) for an absolute constant \(C\), and use \(\widetilde{\mathcal O}(\cdot)\) to hide logarithmic factors.

\section{Preliminaries}

We begin by introducing the diffusion model used throughout the paper, followed by the source-target setting in which latent representations are learned, frozen, and reused under distribution shift.

\subsection{Diffusion Models}

\noindent\textbf{Framework.} 
We consider the Ornstein--Uhlenbeck forward process
\begin{align}\label{eq:forward_sde}
    \rmd X_t = -\frac{1}{2}X_t\,\rmd t + \rmd W_t,
    \qquad t\in[0,T],
\end{align}
where the initial point $X_0 \sim p_0$ follow an unknown data distribution $p_0$, and $W_t$ denotes a standard $D$-dimensional Brownian motion. 
This process admits the explicit transition kernel
\begin{align}\label{eq:transition_kernel}
    X_t \mid X_0 \sim \mathcal{N}\bigl(\alpha(t)X_0, h(t)I_D\bigr),
\end{align}
with $\alpha(t)=e^{-t/2}, h(t)=1-e^{-t}.$ Thus, the marginal density of $X_t$ is
\begin{align}\label{eq:marginal_density}
    p_t(x)
    =
    \int_{\mathbb{R}^D}
    \mathcal{N}\bigl(x;\alpha(t)x_0,h(t)I_D\bigr)
    p_0(x_0)\,\rmd x_0 .
\end{align}
Sampling from $p_0$ can be performed by reversing the noising process~\cite{anderson1982reverse,cattiaux2023time}. 
The corresponding reverse-time SDE is
\begin{align}\label{eq:reverse_sde}
    \rmd X^{\leftarrow}_t
    =
    \left[
        \frac{1}{2}X^{\leftarrow}_t
        +
        \nabla_x \log p_{T-t}\bigl(X^{\leftarrow}_t\bigr)
    \right]\rmd t
    +
    \rmd \bar W_t,
\end{align}
where $\bar W_t$ is a standard Brownian motion,  independent of $W_t$.
The vector field $\nabla_x\log p_t(x)$ is called the \textit{score function} for $p_t$. 
It is typically unknown, since $p_t$ is not available in closed form.

\noindent \textbf{Score Matching.} In practice, one trains a parametrized score network $s_\theta(x,t)$ to approximate $\nabla_x\log p_t(x)$, typically through an $L^2$ score-matching objective of the form
\begin{align}\label{eq:score_matching_objective}
    \underset{\theta\in\Theta}{\text{minimize}}~~~\E\left[
        \bigl\|
        s_\theta(X_t,t)-\nabla_x \log p_t(X_t)
        \bigr\|_2^2
    \right].
\end{align}
Substituting the learned score estimate into the reverse process~\eqref{eq:reverse_sde} yields the practical continuous-time backward SDE, which is typically approximated by a discretization method (e.g.~\cite{yu2026advancing,chen2023improved,de2022convergence}).

\subsection{Source-target Setting}

In this work, we study latent reuse in a source-target setting. 
The source distribution $p_1$ represents the distribution used to learn the latent representation, while the target distribution $p_2$ represents the shifted distribution on which this representation is reused.
We assume that both distributions are approximately low-dimensional, with each lying near a possibly different linear subspace and perturbed by isotropic ambient noise.
\begin{assumption}\label{asm:noisy_low_dim_shift}
For each domain $i\in\{1,2\}$, a data point $x\sim p_i$ is generated via
\[
   x = A_i z_i + \varepsilon_i,
\]
where $A_i\in\mathbb{R}^{D\times d_i}$ has orthonormal columns, i.e., $A_i^\top A_i=I_{d_i}$, 
$z_i\sim p_{i,z}$ is a latent variable in $\mathbb{R}^{d_i}$, and the ambient noise $\varepsilon_i \sim \mathcal{N}(0,\sigma_i^2 I_D)$
is independent of $z_i$, with $\sigma_i>0$. 

\end{assumption}

Assumption~\ref{asm:noisy_low_dim_shift} extends the exact linear-subspace model commonly used in low-dimensional diffusion theory~\cite{Chen2023_scorelowdim,hu2024statistical,yang2024few} by allowing isotropic ambient noise. 
Here, \(A_i z_i\) represents the intrinsic low-dimensional signal, while \(\varepsilon_i\) captures deviations from \(\operatorname{col}(A_i)\). 
The source and target subspaces are unknown and may differ, allowing us to quantify the effect of latent reuse through subspace misalignment and target ambient noise.

Under this model, Gaussian noising yields an explicit score decomposition.
\begin{lemma}\label{lem:noisy_score_decomp}
    {Let \(X_0\sim p_i\) satisfy Assumption~\ref{asm:noisy_low_dim_shift}.} Under the forward diffusion process $X_t | X_0 \sim \mathcal{N}(\alpha(t)X_0, h(t)I_D)$, define $\tilde{h}_i(t) := \alpha^2(t)\sigma_i^2 + h(t)$, then the exact score function $\nabla\log p_{i,t}(x)$ rigorously decomposes into mutually orthogonal components 
    \begin{align*}
        \nabla\log p_{i,t}(x) = \underbrace{A_i\nabla\log p_{i,t}^{\rm LD}(A_i^\top x)}_{s_\parallel(A_i^\top x,t):\text{ on-support score}} \underbrace{-\dfrac{1}{\tilde{h}_i(t)}(I_D-A_i A_i^\top)x}_{s_\perp(x,t):\text{ ortho. score}},
    \end{align*}
    where 
    $
        p_{i,t}^{\rm LD}(z')=\int_{\R^{d_i}}\phi_{i,t}(z'|z)p_{i,z}(z)\,\rmd  z
 $
    with $\phi_{i,t}(\cdot|z)$ being the Gaussian density function of $\mathcal N(\alpha(t)z, \tilde{h}_i(t)I_{d_i})$.
\end{lemma}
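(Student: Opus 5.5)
The plan is to compute the marginal density $p_{i,t}$ explicitly by integrating out the ambient noise, and then to factor it along the orthogonal decomposition $\R^D=\operatorname{col}(A_i)\oplus\operatorname{col}(A_i)^\perp$. Conditionally on $z_i$, we have $X_0=A_iz_i+\varepsilon_i$ and $X_t=\alpha(t)X_0+\sqrt{h(t)}\,\xi$ with $\xi\sim\mathcal N(0,I_D)$ independent of everything else. It follows that
\[
X_t\mid z_i \;\sim\; \mathcal N\bigl(\alpha(t)A_iz_i,\;(\alpha^2(t)\sigma_i^2+h(t))I_D\bigr)=\mathcal N\bigl(\alpha(t)A_iz_i,\tilde h_i(t)I_D\bigr),
\]
because a sum of independent Gaussians is Gaussian with added covariances. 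This gives
\[
p_{i,t}(x)=\int_{\R^{d_i}}\mathcal N\bigl(x;\alpha(t)A_iz,\tilde h_i(t)I_D\bigr)\,p_{i,z}(z)\,\rmd z .
\]

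Next, complete $A_i$ to an orthogonal matrix $[A_i,A_i^\perp]$ and write $x=A_iA_i^\top x+(I_D-A_iA_i^\top)x$. Since the mean $\alpha(t)A_iz$ lies in $\operatorname{col}(A_i)$ and $A_i^\top A_i=I_{d_i}$, Pythagoras gives
\[
\|x-\alpha(t)A_iz\|^2=\|A_i^\top x-\alpha(t)z\|^2+\|(I_D-A_iA_i^\top)x\|^2 .
\]
The isotropic Gaussian density therefore factors as $\phi_{i,t}(A_i^\top x\mid z)$ times $(2\pi\tilde h_i(t))^{-(D-d_i)/2}\exp\bigl(-\|(I_D-A_iA_i^\top)x\|^2/(2\tilde h_i(t))\bigr)$. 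The second factor does not depend on $z$, so it comes out of the integral, and we obtain
\[
p_{i,t}(x)=p^{\rm LD}_{i,t}(A_i^\top x)\cdot(2\pi\tilde h_i(t))^{-(D-d_i)/2}\exp\Bigl(-\tfrac{\|(I_D-A_iA_i^\top)x\|^2}{2\tilde h_i(t)}\Bigr).
\]

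We then take logarithms and gradients. The chain rule gives $\nabla_x\log p^{\rm LD}_{i,t}(A_i^\top x)=A_i\nabla\log p^{\rm LD}_{i,t}(A_i^\top x)$. Because $I_D-A_iA_i^\top$ is a symmetric idempotent projector, $\nabla_x\tfrac12\|(I_D-A_iA_i^\top)x\|^2=(I_D-A_iA_i^\top)x$. Together these yield the stated decomposition. The two components are orthogonal because the first lies in $\operatorname{col}(A_i)$ and the second lies in its orthogonal complement: $A_i^\top(I_D-A_iA_i^\top)=0$.

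The only step needing care is regularity. We must ensure that $p^{\rm LD}_{i,t}$ is positive and differentiable, and that we may differentiate under the integral sign. For $t>0$, or even at $t=0$ since $\sigma_i>0$, we have $\tilde h_i(t)>0$, so $p^{\rm LD}_{i,t}$ is a convolution of the law of $\alpha(t)z_i$ with a nondegenerate Gaussian. It is therefore strictly positive and $C^\infty$, with derivatives bounded by dominated convergence using Gaussian tails. This holds for any probability measure $p_{i,z}$, so no moment assumption is needed.
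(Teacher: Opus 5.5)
Your proposal is correct and follows essentially the same route as the paper. Both integrate out the ambient noise to obtain $X_t\mid z_i\sim\mathcal N(\alpha(t)A_iz_i,\tilde h_i(t)I_D)$, then use the Pythagorean split $\|x-\alpha(t)A_iz\|_2^2=\|A_i^\top x-\alpha(t)z\|_2^2+\|(I_D-A_iA_i^\top)x\|_2^2$ to factor the marginal density before differentiating. Your write-up is more explicit than the paper's, which simply defers to the exact-subspace lemma of prior work, and your positivity/differentiation-under-the-integral justification fills in a step the paper leaves implicit.
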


Lemma~\ref{lem:noisy_score_decomp} decomposes the score into a latent component along $\operatorname{col}(A_i)$ and an explicit orthogonal component. 
The latent component depends only on $A_i^\top x$, while the orthogonal component has magnitude governed by the effective variance $\widetilde h_i(t)=\alpha^2(t)\sigma_i^2+h(t)$. 
This decomposition is the basis for our analysis of latent reuse: when a source subspace is frozen and reused on a target distribution, any mismatch between \(\operatorname{col}(A_1)\) and \(\operatorname{col}(A_2)\) directly affects the target score.

For later use, we define $f_i^*(z,t) := \tilde{h}_i(t)\nabla\log p_{i,t}^{\rm LD}(z)+z$. 
Then, the decomposition in Lemma~\ref{lem:noisy_score_decomp} can be written equivalently as
\begin{align}\label{eq:nabla_decomp}
    \nabla\log p_{i,t}(x) = \frac{1}{\tilde{h}_i(t)}A_i f_i^*(A_i^\top x,t) - \frac{1}{\tilde{h}_i(t)}x.
\end{align}

\section{Main Results}

We now present our main theoretical results. 
Section~\ref{sec:frozen_bias} analyzes frozen latent reuse on the target distribution and identifies the structural bias induced by subspace mismatch and target ambient noise. 
Section~\ref{sec:frozen_upper} complements this with an explicit comparator and a finite-sample bound for target-domain training with a frozen projector. 
Finally, Section~\ref{sec:mixed_training} studies mixed source-target training as an alternative to frozen reuse.

\subsection{Irreducible Error under Frozen Latent Reuse}
\label{sec:frozen_bias}
We first identify a structural limitation of frozen latent reuse: even if the downstream score model is arbitrarily expressive, a fixed source latent space may leave irreducible target-domain error.

Let \(V_1\in\mathbb R^{D\times d_1}\) be the source projector, with \(V_1^\top V_1=I_{d_1}\). 
In frozen reuse, \(V_1\) is kept fixed, and the target score model is restricted to the projected coordinates \(V_1^\top x\).
We consider score networks
\[
    s_{V_1,\theta}(x,t)
    =
    \frac{1}{h(t)}V_1 f_\theta(V_1^\top x,t)-\frac{1}{h(t)}x,
\]
where \(f_\theta:\mathbb R^{d_1}\times[t_0,T]\to\mathbb R^{d_1}\) belongs to the ReLU class
$
    \mathcal F_{\rm ReLU}^{(d_1)}
    :=
    \mathcal F_{\rm ReLU}^{(d_1)}(L,M,J,K,\kappa,\gamma,\gamma_t).
$
This class consists of ReLU networks with depth \(L\), width at most \(M\), sparsity at most \(J\), output bound \(K\), parameter bound \(\kappa\), and Lipschitz constants \(\gamma,\gamma_t\) in the spatial and time variables, respectively\footnote{Due to space constraints, we defer the explicit definition of $\mathcal F_{\rm ReLU}^{(d_1)}(L,M,J,K,\kappa,\gamma,\gamma_t)$ to the appendix.}. 
The frozen ReLU class is
\[
    \mathcal S_{\rm freeze}(V_1)
    :=
    \{s_{V_1,\theta}: f_\theta\in\mathcal F_{\rm ReLU}^{(d_1)}\},
\]
and its oracle risk is
\[
    \mathcal B_{\rm freeze}(V_1)
    :=
    \inf_{s\in\mathcal S_{\rm freeze}(V_1)}\mathcal L_2(s),
    \quad
    \mathcal L_2(s)
    :=
    \frac{1}{T-t_0}\int_{t_0}^T
    \E_{X_t\sim p_{2,t}}
    \|s(X_t,t)-\nabla\log p_{2,t}(X_t)\|_2^2\,\rmd t .
\]

To isolate the geometric limitation of the frozen projector from ReLU approximation error, we define the enlarged structural class
\[
    \mathcal S_{\rm str}(V_1)
    :=
    \left\{
    s_{V_1,f}(x,t)=\frac{1}{h(t)}V_1f(V_1^\top x,t)-\frac{1}{h(t)}x:
    f \text{ measurable}
    \right\},
\]
and
\[
    \mathcal B_{\rm str}(V_1)
    :=
    \inf_{s\in\mathcal S_{\rm str}(V_1)}\mathcal L_2(s).
\]
Since \(\mathcal S_{\rm freeze}(V_1)\subseteq\mathcal S_{\rm str}(V_1)\), we have
$
    \mathcal B_{\rm freeze}(V_1)\geqslant \mathcal B_{\rm str}(V_1).
$
Thus, a lower bound on \(\mathcal B_{\rm str}(V_1)\) is also a lower bound on the practical frozen ReLU oracle risk.
Let \(r=\min\{d_1,d_2\}\), and let \(0\leqslant\theta_1\leqslant\cdots\leqslant\theta_r\leqslant\pi/2\) be the principal angles between \(\operatorname{col}(V_1)\) and \(\operatorname{col}(A_2)\), defined by
\[
    \sigma_j(V_1^\top A_2)=\cos\theta_j,\qquad j=1,\ldots,r.
\]

We now establish a lower bound on the structural bias induced by a frozen projection.
\begin{theorem}
\label{prop:frozen_projection_lower_bound}
Suppose Assumption~\ref{asm:noisy_low_dim_shift} holds for \(p_2\). 
Let
\[
    g_2(y,t):=h(t)\nabla\log p_{2,t}^{\rm LD}(y)+y.
\]
For \(Y_t:=A_2^\top X_t\), define
\[
    M_{g,2}(t):=\E[g_2(Y_t,t)g_2(Y_t,t)^\top],
    \qquad
    \mu_2(t):=\lambda_{\min}(M_{g,2}(t)).
\]
Then,
\[
\begin{aligned}
    \mathcal B_{\rm str}(V_1)
    \geqslant
    \frac{1}{T-t_0}\int_{t_0}^T
    \left[
    \frac{\mu_2(t)}{h^2(t)}
    \left(d_2-\sum_{j=1}^r\cos^2\theta_j\right)
    +
    \frac{\sigma_2^4\alpha^4(t)}
    {h^2(t)\widetilde h_2(t)}
    \left(D-d_1-d_2+\sum_{j=1}^r\cos^2\theta_j\right)
    \right]\rmd t.
\end{aligned}
\]
Consequently, the same lower bound holds for \(\mathcal B_{\rm freeze}(V_1)\).
\end{theorem}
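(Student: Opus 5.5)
The plan is to bound $\mathcal B_{\rm str}(V_1)$ from below pointwise in $t$ and then integrate. For any measurable $f$, the loss at a fixed time $t$ does not couple different times. Hence
\[
\inf_f \frac{1}{T-t_0}\int_{t_0}^T(\cdots)\,\rmd t \;\geqslant\; \frac{1}{T-t_0}\int_{t_0}^T \inf_{f(\cdot,t)}(\cdots)\,\rmd t ,
\]
and it suffices to lower bound, for each fixed $t$, the quantity
\[
\inf_{f}\frac{1}{h^2(t)}\,\E\bigl\|V_1 f(V_1^\top X_t,t)-G_t\bigr\|_2^2,
\qquad
G_t:=h(t)\nabla\log p_{2,t}(X_t)+X_t .
\]
This follows by multiplying $s_{V_1,f}-\nabla\log p_{2,t}$ by $h(t)$.

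First I split $G_t=P_{V_1}G_t+P_{V_1}^\perp G_t$. Since $V_1 f$ always lies in $\operatorname{col}(V_1)$, Pythagoras gives
\[
\E\|V_1 f-G_t\|^2=\E\|V_1 f-P_{V_1}G_t\|^2+\E\|P_{V_1}^\perp G_t\|^2 \;\geqslant\; \E\|P_{V_1}^\perp G_t\|^2 .
\]
The first term is minimized by the conditional expectation $f=V_1^\top\E[G_t\mid V_1^\top X_t]$, but we only need to drop it. So the task reduces to computing $\E\|P_{V_1}^\perp G_t\|^2$.

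Next I make $G_t$ explicit using Lemma~\ref{lem:noisy_score_decomp} with $i=2$. That lemma gives $\nabla\log p_{2,t}(x)=A_2\nabla\log p^{\rm LD}_{2,t}(A_2^\top x)-\widetilde h_2^{-1}P_{A_2}^\perp x$. Writing $x=A_2A_2^\top x+P_{A_2}^\perp x$, this yields
\[
G_t=A_2\,g_2(Y_t,t)+c(t)\,P_{A_2}^\perp X_t,
\qquad
c(t):=1-\frac{h(t)}{\widetilde h_2(t)}=\frac{\sigma_2^2\alpha^2(t)}{\widetilde h_2(t)} .
\]
The key structural fact concerns the decomposition $X_t=\alpha A_2z_2+\alpha\varepsilon_2+\sqrt h\,\xi$. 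Here $P_{A_2}^\perp X_t=P_{A_2}^\perp(\alpha\varepsilon_2+\sqrt h\,\xi)$ is an isotropic Gaussian component with covariance $\widetilde h_2 P_{A_2}^\perp$. Orthogonal projections of an isotropic Gaussian are independent, and $z_2$ is independent of the noise. Hence $P_{A_2}^\perp X_t$ is independent of $Y_t=A_2^\top X_t$ and has mean zero, so the cross term in $\E\|P_{V_1}^\perp G_t\|^2$ vanishes.

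It remains to evaluate the two surviving terms.

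The signal term is
\[
\E\|P_{V_1}^\perp A_2 g_2\|^2=\operatorname{tr}\bigl(A_2^\top P_{V_1}^\perp A_2\,M_{g,2}(t)\bigr)\;\geqslant\; \mu_2(t)\operatorname{tr}(A_2^\top P_{V_1}^\perp A_2).
\]
The inequality holds because $A_2^\top P_{V_1}^\perp A_2\succeq0$, and $\operatorname{tr}(A_2^\top P_{V_1}^\perp A_2)=d_2-\|V_1^\top A_2\|_F^2=d_2-\sum_j\cos^2\theta_j$.

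The noise term is
\[
c^2\,\widetilde h_2\operatorname{tr}\bigl(P_{V_1}^\perp P_{A_2}^\perp\bigr)=\frac{\sigma_2^4\alpha^4}{\widetilde h_2}\bigl(D-d_1-d_2+\|V_1^\top A_2\|_F^2\bigr),
\]
which follows by expanding the trace of the product of complementary projectors.

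Dividing both terms by $h^2(t)$ and integrating over $[t_0,T]$ gives the claimed bound. The statement for $\mathcal B_{\rm freeze}$ then follows from the inclusion $\mathcal S_{\rm freeze}(V_1)\subseteq\mathcal S_{\rm str}(V_1)$.

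The only delicate step is the independence and zero-mean claim that kills the cross term. I will verify it by writing the Gaussian part as a single isotropic vector of variance $\widetilde h_2$ and noting that its $A_2$ and $A_2^\perp$ projections are uncorrelated jointly Gaussian vectors, independent of $z_2$.
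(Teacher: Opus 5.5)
Your proposal is correct and follows essentially the same route as the paper. You multiply the residual by $h(t)$, drop the $\operatorname{col}(V_1)$ component by orthogonality, write $G_t=A_2 g_2(Y_t,t)+\rho_2(t)P_2^\perp X_t$ with $\rho_2=\alpha^2\sigma_2^2/\widetilde h_2$, kill the cross term by independence of $Y_t$ and $P_2^\perp X_t$, and bound the signal and noise traces via $\mu_2(t)$ and $\operatorname{Tr}(P_{V_1}^\perp P_2^\perp)$ before converting to principal angles. Your explicit independence check through the single isotropic Gaussian $\alpha\varepsilon_2+\sqrt{h}\,\xi\sim\mathcal N(0,\widetilde h_2 I_D)$ is a slightly more careful version of the step the paper only cites from the score-decomposition lemma.
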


The first term captures target signal directions missed by the frozen source subspace, and it vanishes when 
\(\operatorname{col}(A_2)\subseteq \operatorname{col}(V_1)\). 
The second term captures target ambient noise outside the frozen output subspace, amplified by the diffusion time scale. 
Thus, the lower bound identifies a genuine limit of latent reuse: even an arbitrarily expressive core map cannot recover target score components that are absent from the frozen coordinates.

\subsection{Achievability and Statistical Error under Frozen Reuse}
\label{sec:frozen_upper}

The preceding lower bound identifies regimes in which frozen reuse incurs intrinsic error. 
We next complement this result by studying what can be achieved when the frozen subspace is sufficiently aligned with the target distribution. 
To this end, we introduce an explicit frozen-coordinate comparator and derive the corresponding statistical guarantee.

To establish the bounds, we require mild regularity of the target latent distribution and its induced score map.
\begin{assumption}
\label{asm:p2_decay}
The target latent density \(p_{2,z}>0\) is twice continuously differentiable. 
Moreover, there exist constants \(B_2,C_{1,2},C_{2,2}>0\) such that, for all \(\|z\|_2\geqslant B_2\),
it holds that $$
    p_{2,z}(z)
    \leqslant
    (2\pi)^{-d_2/2}C_{1,2}
    \exp\{-C_{2,2}\|z\|_2^2/2\}.$$
\end{assumption}

\begin{assumption}
\label{asm:s_paral_Lip}
The target core score map
$
    f_2^*(z,t)
    :=
    \widetilde h_2(t)\nabla\log p_{2,t}^{\rm LD}(z)+z
$
is uniformly \(\beta_2\)-Lipschitz in \(z\in\mathbb R^{d_2}\) for all \(t\in[0,T]\).
\end{assumption}
These assumptions are standard in theoretical analyses of diffusion models~\cite{Chen2023_scorelowdim,block2020generative}. 
They ensure that the target latent score is regular enough to be approximated by the frozen-coordinate comparator introduced below.
Let
\[    
B:=V_1^\top A_2,    \qquad    g_2(y,t):=h(t)\nabla\log p_{2,t}^{\rm LD}(y)+y .
\]
We use the explicit frozen-coordinate comparator
\[    
f_{\rm comp}(z,t):=B g_2(B^\dagger z,t),
\]
where \(B^\dagger\) is the Moore--Penrose pseudoinverse. 
This comparator reconstructs a target latent coordinate from the frozen coordinate \(z=V_1^\top x\) through \(B^\dagger z\), applies the target latent score map \(g_2\), and maps it back to the frozen coordinates through \(B\). 
The induced frozen score model is
\[    
s_{\rm comp}(x,t)    :=    \frac{1}{h(t)}V_1f_{\rm comp}(V_1^\top x,t)-\frac{1}{h(t)}x,
\]
and we denote $\mathcal U_{\rm comp}(V_1):=\mathcal L_2(s_{\rm comp}).$
Since $s_{\rm comp}\in\mathcal S_{\rm str}(V_1)$, we have $\mathcal B_{\rm str}(V_1)\leqslant \mathcal U_{\rm comp}(V_1).$
It remains to bound the comparator risk \(\mathcal U_{\rm comp}(V_1)\). 
The next result expresses this bound in terms of the principal angles between the frozen source subspace and the target signal subspace, together with the target ambient noise.
\begin{proposition}\label{prop:frozen_projection_upper_bound}
Suppose Assumptions~\ref{asm:noisy_low_dim_shift}, \ref{asm:p2_decay}, and~\ref{asm:s_paral_Lip} hold for \(p_2\). 
Let \(\theta_1,\ldots,\theta_r\) be the principal angles between \(\operatorname{col}(V_1)\) and \(\operatorname{col}(A_2)\), where \(r=\min\{d_1,d_2\}\). 
Let \(L_g(t)\) be a Lipschitz constant of \(g_2(\cdot,t)\), and define
\[
    \lambda_{g,2}^{\max}(t)
    :=
    \lambda_{\max}\!\left(
    \E[g_2(Y_t,t)g_2(Y_t,t)^\top]
    \right),
    \qquad 
    M_{Y,2}(t):=\E[Y_tY_t^\top],
    \qquad
    Y_t:=A_2^\top X_t .
\]

\emph{If \(d_1\geqslant d_2\)} and \(B=V_1^\top A_2\) has full column rank, it holds that
\[
\begin{aligned}
\mathcal U_{\rm comp}(V_1)
&\leqslant
\frac{1}{T-t_0}\int_{t_0}^T
\Bigg[
\frac{\lambda_{g,2}^{\max}(t)}{h^2(t)}
\sum_{j=1}^{d_2}\sin^2\theta_j
+
\frac{2\|B\|_{\rm op}^2\widetilde h_2(t)L_g^2(t)}{h^2(t)}
\sum_{j=1}^{d_2}\tan^2\theta_j
\\
&\qquad +
\frac{\sigma_2^4\alpha^4(t)}
{h^2(t)\widetilde h_2(t)}
\left(
D+d_1-d_2-\sum_{j=1}^{d_2}\cos^2\theta_j
\right)
\Bigg]\rmd t .
\end{aligned}
\]

\emph{If \(d_1<d_2\)  and \(B=V_1^\top A_2\) has full row  rank,} it holds that
\[
\begin{aligned}
\mathcal U_{\rm comp}(V_1)
& \leqslant
\frac{1}{T-t_0}\int_{t_0}^T
\Bigg[
\frac{\lambda_{g,2}^{\max}(t)}{h^2(t)}
\left(d_2-\sum_{j=1}^{d_1}\cos^2\theta_j\right)
+
\frac{2\|B\|_{\rm op}^2L_g^2(t)}{h^2(t)}
\operatorname{Tr}\!\left((I_{d_2}-B^\dagger B)M_{Y,2}(t)\right)
\\
&\qquad +
\frac{2\|B\|_{\rm op}^2\widetilde h_2(t)L_g^2(t)}{h^2(t)}
\sum_{j=1}^{d_1}\tan^2\theta_j
+
\frac{\sigma_2^4\alpha^4(t)}
{h^2(t)\widetilde h_2(t)}
\left(
D+d_1-d_2-\sum_{j=1}^{d_1}\cos^2\theta_j
\right)
\Bigg]\rmd t .
\end{aligned}
\]
\end{proposition}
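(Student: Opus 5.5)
The plan is to write the target score in the form \eqref{eq:nabla_decomp} with the normalization $h(t)$ instead of $\widetilde h_2(t)$, and subtract it from $s_{\rm comp}$ pointwise. Conditioning on the latent variable shows that $X_t\sim p_{2,t}$ has the form $X_t=A_2Y_t+N_t$. Here $Y_t=A_2^\top X_t$, and $N_t\sim\mathcal N(0,\widetilde h_2(t)P_{A_2}^\perp)$ is independent of $Y_t$. Lemma~\ref{lem:noisy_score_decomp} then gives
\[
\nabla\log p_{2,t}(x)=\tfrac{1}{h}\bigl[A_2g_2(A_2^\top x,t)-x\bigr]+\Bigl(\tfrac1h-\tfrac1{\widetilde h_2}\Bigr)P_{A_2}^\perp x ,
\]
since $A_2\nabla\log p^{\rm LD}_{2,t}=\tfrac1h(A_2g_2-P_{A_2}x)$ and $-\tfrac1{\widetilde h_2}P^\perp_{A_2}x$ is the orthogonal score. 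Note that $\tfrac1h-\tfrac1{\widetilde h_2}=\tfrac{\sigma_2^2\alpha^2}{h\widetilde h_2}$. So $h\,(s_{\rm comp}-\nabla\log p_{2,t})$ splits into three parts:
\[
\underbrace{V_1Bg_2(B^\dagger V_1^\top X_t)-A_2g_2(Y_t)}_{\text{signal}}\;-\;\tfrac{\sigma_2^2\alpha^2}{\widetilde h_2}N_t .
\]
The signal part I write as
\[
(V_1B-A_2)g_2(Y_t)+V_1B\bigl[g_2(B^\dagger V_1^\top X_t)-g_2(Y_t)\bigr],
\]
using $V_1^\top X_t=BY_t+V_1^\top N_t$. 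I will expand the square and bound it term by term. Cross terms are controlled by $\|a+b+c\|^2\le$ sums with constants; the stated factor $2$ and the clean noise coefficient suggest the following. The noise term is uncorrelated with functions of $Y_t$. It is handled exactly where possible, and otherwise by inequalities like $2\langle u,v\rangle\le\|u\|^2+\|v\|^2$, which absorb the cross term with the Lipschitz term.

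\textbf{Term 1 (mismatch).} We have $V_1B-A_2=-(I-P_{V_1})A_2$. Hence
\[
\E\|(V_1B-A_2)g_2\|^2=\operatorname{Tr}\bigl(A_2^\top P_{V_1}^\perp A_2M_{g,2}\bigr)\le\lambda^{\max}_{g,2}\operatorname{Tr}(I-B^\top B)=\lambda^{\max}_{g,2}\Bigl(d_2-\sum_j\cos^2\theta_j\Bigr).
\]
This equals $\lambda^{\max}_{g,2}\sum\sin^2\theta_j$ when $d_1\ge d_2$.

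\textbf{Term 2 (Lipschitz).} This term is bounded by $\|B\|_{\rm op}^2L_g^2\,\E\|B^\dagger V_1^\top X_t-Y_t\|^2$, and
\[
B^\dagger V_1^\top X_t-Y_t=(B^\dagger B-I)Y_t+B^\dagger V_1^\top N_t .
\]
If $d_1\ge d_2$ with full column rank, then $B^\dagger B=I$. The only remaining piece is $\E\|B^\dagger V_1^\top N_t\|^2=\widetilde h_2\operatorname{Tr}(B^\dagger V_1^\top P_{A_2}^\perp V_1B^{\dagger\top})$. Using a principal-angle SVD $B=U\cos\Theta W^\top$, the matrix $V_1^\top P^\perp_{A_2}V_1$ restricted to the range of $U$ has diagonal $\sin^2\theta_j$. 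This gives $\sum\tan^2\theta_j$. If $d_1<d_2$ with full row rank, the first piece gives $\operatorname{Tr}((I-B^\dagger B)M_{Y,2})$, and the same computation over $d_1$ angles gives $\sum_{j\le d_1}\tan^2\theta_j$. The factor $2$ comes from $\|a+b\|^2\le2\|a\|^2+2\|b\|^2$, and it also serves to absorb cross terms.

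\textbf{Term 3 (noise).} The contribution is $\frac{\sigma_2^4\alpha^4}{\widetilde h_2^2}\E\|N_t\|^2$, restricted to the components not already cancelled, plus its cross term with the Lipschitz part. The pure term gives $\frac{\sigma_2^4\alpha^4}{\widetilde h_2}(D-d_2)$. The count $D+d_1-d_2-\sum\cos^2\theta_j$ equals $(D-d_2)+\operatorname{Tr}(V_1^\top P^\perp_{A_2}V_1)$. This indicates that the cross term, which involves $V_1^\top N_t$ entering through $g_2$ and $V_1B$, is bounded by an extra copy of the projected-noise variance $\widetilde h_2\operatorname{Tr}(V_1^\top P_{A_2}^\perp V_1)$ scaled by $\sigma_2^4\alpha^4/\widetilde h_2^2$, via Young's inequality.

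\textbf{Main obstacle.} The hard step is exactly this bookkeeping: choosing the split and the Young's weights so that the constants come out as stated, namely coefficient $1$ on the mismatch and noise terms and $2$ on the Lipschitz terms. I would first try the grouping $\|a+b+c\|^2\le\|a\|^2+2\|b\|^2+2\|c\|^2$, plus the fact that $a$ and $c$ are orthogonal in expectation. Finally I would integrate over $t$ and divide by $T-t_0$. Assumptions~\ref{asm:p2_decay}--\ref{asm:s_paral_Lip} are needed only to guarantee that $L_g(t)<\infty$ (from $\beta_2$ and the relation between $g_2$ and $f_2^*$) and that all the second moments are finite.
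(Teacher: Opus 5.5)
Your decomposition and all the individual computations match the paper's proof. This includes the representation $h(s_{\rm comp}-\nabla\log p_{2,t})=V_1Bg_2(B^\dagger V_1^\top X_t)-A_2g_2(Y_t)-\rho_2N_t$ with $\rho_2=\sigma_2^2\alpha^2/\widetilde h_2$, the independence of $Y_t$ and $N_t$, the mismatch trace bound, and the $\operatorname{Tr}((I-B^\dagger B)M_{Y,2})$ and $\sum_j\tan^2\theta_j$ identities. The gap is in the constant bookkeeping you flag as the obstacle. Your plan does not resolve it, and the grouping you propose to try first gives a weaker bound than the one stated.

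Write $a=-P_{V_1}^\perp A_2g_2(Y_t,t)$, $b=V_1B[g_2(B^\dagger V_1^\top X_t,t)-g_2(Y_t,t)]$ and $c=-\rho_2N_t$. The inequality $\E\|a+b+c\|^2\le\E\|a\|^2+2\E\|b\|^2+2\E\|c\|^2$ is valid, because $a\perp b$ pointwise and $\E\langle a,c\rangle=0$. However, it puts $2\rho_2^2\widetilde h_2(D-d_2)$ on the noise. The statement requires $(D-d_2)+\operatorname{Tr}(V_1^\top P_2^\perp V_1)=D+d_1-d_2-\sum_j\cos^2\theta_j$. This is strictly smaller than $2(D-d_2)$ unless $\operatorname{col}(A_2)^\perp\subseteq\operatorname{col}(V_1)$, and for $d_1\ll D$ your version roughly doubles the leading ambient-noise term.

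The missing observation is that $b\in\operatorname{col}(V_1)$. Hence $2\langle b,c\rangle=2\langle b,P_{V_1}c\rangle\le\|b\|^2+\|P_{V_1}c\|^2$, which gives
\[
\E\|a+b+c\|_2^2\le\E\|a\|_2^2+2\E\|b\|_2^2+\E\|c\|_2^2+\E\|P_{V_1}c\|_2^2 .
\]
Here $\E\|P_{V_1}c\|^2=\rho_2^2\widetilde h_2\operatorname{Tr}(V_1^\top P_2^\perp V_1)$ is exactly the ``extra copy'' you anticipated in Term 3, so the stated constants follow. The paper reaches the same inequality differently. It first splits the error into its $\operatorname{col}(V_1)$ part $\mathcal I_t$, where it applies $(x+y)^2\le 2x^2+2y^2$ to $b$ and $P_{V_1}c$. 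It then handles the complementary part $\mathcal O_t$, where the cross term between $a$ and $P_{V_1}^\perp c$ vanishes by independence.

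Relatedly, in Term 2 do not attribute the factor $2$ to splitting $B^\dagger Z_t-Y_t=(B^\dagger B-I)Y_t+B^\dagger V_1^\top N_t$. Compute $\E\|B^\dagger Z_t-Y_t\|^2$ exactly instead: its cross term vanishes because $Y_t$ is independent of $N_t$ and $\E N_t=0$. The only factor $2$ in the statement comes from the $b$--$c$ cross term above, and applying both would inflate the Lipschitz coefficient to $4$.
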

{When \(d_1\geqslant d_2\), the frozen space has sufficient dimension to represent the target subspace, and the main difficulty is stability, as reflected by \(\tan^2\theta_j\). 
When \(d_1<d_2\), the frozen space is dimensionally insufficient, leading to the additional information-loss term $\operatorname{Tr}\left((I_{d_2}-B^\dagger B)M_{Y,2}(t)\right)$ in addition to the output-side residual \(d_2-\sum_{j=1}^{d_1}\cos^2\theta_j\). 
Thus, frozen reuse can fail either because the reused space is poorly aligned with the target subspace, or because it is too small to contain the target-relevant directions.}

We now connect the measurable comparator bound to the practical ReLU estimator.
The argument has two steps. First, the frozen ReLU oracle risk is controlled by the measurable comparator risk plus the cost of approximating the comparator within the ReLU class. Second, the empirical denoising-risk minimizer is controlled by this frozen oracle risk up to a finite-sample error.

Recall that
$
\mathcal B_{\rm freeze}(V_1)
:=
\inf_{s\in\mathcal S_{\rm freeze}(V_1)}
\mathcal L_2(s)
$
is the population oracle risk over the frozen ReLU class. To pass from the measurable comparator to this class, 
define
\[
\mathcal A_{\rm comp}(f_\theta;V_1)
:=
\frac{1}{T-t_0}
\int_{t_0}^T
\frac{1}{h^2(t)}
\E_{X_t\sim p_{2,t}}
\left[
\|f_\theta(V_1^\top X_t,t)-f_{\rm comp}(V_1^\top X_t,t)\|_2^2
\right]
\,\rmd t .
\]
This term measures the approximation error incurred by replacing the measurable comparator \(f_{\rm comp}\) with a ReLU network on the frozen-coordinate distribution.
\begin{lemma}[Comparator-to-ReLU transfer]
\label{lem:comparator_transfer}
Let
\begin{align*}
s_{\rm comp}(x,t):=\frac{1}{h(t)}V_1f_{\rm comp}(V_1^\top x,t)-\frac{1}{h(t)}x,
\end{align*}
and let \(\mathcal U_{\rm comp}(V_1):=\mathcal L_2(s_{\rm comp})\). Then, for any \(\eta>0\),
\begin{align*}
\mathcal B_{\rm freeze}(V_1)\leqslant (1+\eta)\,\mathcal U_{\rm comp}(V_1)+\left(1+\frac{1}{\eta}\right)\inf_{f_\theta\in\mathcal F_{\rm ReLU}^{(d_1)}}\mathcal A_{\rm comp}(f_\theta;V_1).
\end{align*}
\end{lemma}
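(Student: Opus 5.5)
The argument is a single pointwise triangle inequality, in its weighted form, applied to one fixed ReLU network and then optimized over the class. Fix an arbitrary $f_\theta\in\mathcal F_{\rm ReLU}^{(d_1)}$ and write $s_\theta:=s_{V_1,\theta}\in\mathcal S_{\rm freeze}(V_1)$. By the definition of $\mathcal B_{\rm freeze}(V_1)$ as an infimum, $\mathcal B_{\rm freeze}(V_1)\leqslant \mathcal L_2(s_\theta)$, so it suffices to bound $\mathcal L_2(s_\theta)$.

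For each $(x,t)$ we decompose
\[
s_\theta(x,t)-\nabla\log p_{2,t}(x)=\bigl(s_\theta(x,t)-s_{\rm comp}(x,t)\bigr)+\bigl(s_{\rm comp}(x,t)-\nabla\log p_{2,t}(x)\bigr).
\]
We then apply the elementary inequality $\|a+b\|_2^2\leqslant(1+\eta)\|b\|_2^2+(1+1/\eta)\|a\|_2^2$, which follows from Young's inequality $2\langle a,b\rangle\leqslant\eta\|b\|^2+\eta^{-1}\|a\|^2$. The $-x/h(t)$ terms cancel in the first difference, leaving
\[
s_\theta-s_{\rm comp}=\frac{1}{h(t)}V_1\bigl(f_\theta(V_1^\top x,t)-f_{\rm comp}(V_1^\top x,t)\bigr).
\]
Since $V_1^\top V_1=I_{d_1}$, the map $V_1$ is an isometry, so
\[
\|s_\theta-s_{\rm comp}\|_2^2=h(t)^{-2}\,\|f_\theta(V_1^\top x,t)-f_{\rm comp}(V_1^\top x,t)\|_2^2.
\]

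Next we take the expectation over $X_t\sim p_{2,t}$ and average over $t\in[t_0,T]$ with weight $1/(T-t_0)$. The second term becomes exactly $(1+\eta)\,\mathcal U_{\rm comp}(V_1)$, and the first becomes exactly $(1+1/\eta)\,\mathcal A_{\rm comp}(f_\theta;V_1)$. This gives
\[
\mathcal B_{\rm freeze}(V_1)\leqslant(1+\eta)\,\mathcal U_{\rm comp}(V_1)+\Bigl(1+\frac1\eta\Bigr)\mathcal A_{\rm comp}(f_\theta;V_1),
\]
and taking the infimum over $f_\theta$ yields the claim.

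The one point requiring care, and the main potential obstacle, is measurability and integrability. We need $f_{\rm comp}$ to be measurable, which holds because it is a composition of a linear map, the measurable function $g_2$, and a linear map. If either $\mathcal U_{\rm comp}(V_1)$ or $\mathcal A_{\rm comp}$ is infinite, the inequality holds trivially, so linearity of the integral can be applied to the nonnegative integrands without further conditions.
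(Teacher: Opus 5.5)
Your proposal is correct and matches the paper's proof essentially step for step. Both use the same pointwise decomposition through $s_{\rm comp}$, the weighted inequality $\|a+b\|_2^2\leqslant(1+\eta)\|a\|_2^2+(1+1/\eta)\|b\|_2^2$ together with $V_1^\top V_1=I_{d_1}$, averaging over $X_t\sim p_{2,t}$ and $t\in[t_0,T]$, and finally the infimum over $f_\theta$; your remark on measurability and on the case of infinite risks is a harmless, slightly more careful addition.
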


By Lemma~\ref{lem:comparator_transfer}, for any \(\eta>0\),
\begin{align}
\label{eq:Bfreeze}
\mathcal B_{\rm freeze}(V_1)
\leq
(1+\eta)\,\mathcal U_{\rm comp}(V_1)
+
\left(1+\frac{1}{\eta}\right)
\inf_{f_\theta\in\mathcal F_{\rm ReLU}^{(d_1)}}
\mathcal A_{\rm comp}(f_\theta;V_1).
\end{align}
Thus, the frozen ReLU oracle risk is controlled by the geometry-dependent comparator error and the ReLU approximation error for \(f_{\rm comp}\).

We next pass from the frozen oracle risk to finite-sample training. Let \(x_1,\ldots,x_{N_2}\) be i.i.d. samples from \(p_2\). For any score network \(s\), define the target-domain denoising loss
\[
\ell_2(x;s)
:=
\frac{1}{T-t_0}
\int_{t_0}^T
\E_{X_t\sim p_{2,t}(\cdot\mid X_0=x)}
\left[
\left\|
\nabla_{X_t}\log\phi_t(X_t\mid x)-s(X_t,t)
\right\|_2^2
\right]
\,\rmd t ,
\]
where \(\phi_t(\cdot\mid x)\) denotes the Gaussian transition density of \(X_t\) conditional on \(X_0=x\). Define
\[
\mathcal R_2(s):=\E_{X_0\sim p_2}[\ell_2(X_0;s)],
\qquad
\widehat{\mathcal R}_2(s):=\frac{1}{N_2}\sum_{i=1}^{N_2}\ell_2(x_i;s),
\]
and let
\[
\widehat s_{\rm freeze}
\in
\argmin_{s\in\mathcal S_{\rm freeze}(V_1)}
\widehat{\mathcal R}_2(s).
\]

Although \(\widehat s_{\rm freeze}\) is trained by minimizing the empirical denoising risk, our target quantity is the population score error \(\mathcal L_2\). The denoising score-matching identity implies that there exists a constant \(E_2\), independent of \(s\), such that
\[
\mathcal R_2(s)=\mathcal L_2(s)+E_2 .
\]
The identity between the denoising objective and score matching follows from the denoising score matching argument of Vincent~\cite{vincent2011connection}.
Therefore, population minimization of \(\mathcal R_2\) is equivalent to population minimization of \(\mathcal L_2\). Moreover, if
\[
\mathrm{StatErr}_{N_2}
:=
2\sup_{s\in\mathcal S_{\rm freeze}(V_1)}
\left|
\widehat{\mathcal R}_2(s)-\mathcal R_2(s)
\right|,
\]
then the empirical minimizer satisfies the oracle inequality
\[
\mathcal L_2(\widehat s_{\rm freeze})
\leqslant
\mathcal B_{\rm freeze}(V_1)
+
\mathrm{StatErr}_{N_2}.
\]
Combining this inequality with the comparator decomposition in~\eqref{eq:Bfreeze} gives the following bound.
\begin{theorem}\label{thm:frozen_projector_statistical_bound}
Suppose Assumptions~\ref{asm:noisy_low_dim_shift}, \ref{asm:p2_decay}, and~\ref{asm:s_paral_Lip} hold for \(p_2\). 
Let \(V_1\in\mathbb R^{D\times d_1}\) be fixed independently of the target samples, {with \(V_1^\top V_1=I_{d_1}\)}. 
For \(\delta\in(0,1)\), set
\[
R_z=C_z(1+\sigma_2)
\left(\sqrt{d_2}+\sqrt{\log\frac{8N_2}{\delta}}\right),
\qquad
R_\perp=C_\perp\sigma_2
\left(\sqrt{D-d_2}+\sqrt{\log\frac{8N_2}{\delta}}\right),
\]
and define
\[
\mathcal T_2(R_z,R_\perp)
:=
\left\{
x\in\mathbb R^D:
\|A_2^\top x\|_2\le R_z,\ 
\|P_2^\perp x\|_2\le R_\perp
\right\}.
\]
For \(\iota>0\), let
\[
\mathcal N_2(\iota)
:=
\mathcal N
\left(
\iota,
\left\{
\ell_2(\cdot;s)\mathbf 1_{\mathcal T_2(R_z,R_\perp)}(\cdot):
s\in\mathcal S_{\rm freeze}(V_1)
\right\},
\|\cdot\|_\infty
\right).
\]
If the ReLU architecture is chosen so that
\[
\inf_{f_\theta\in\mathcal F_{\rm ReLU}^{(d_1)}}
\mathcal A_{\rm comp}(f_\theta;V_1)
\lesssim
\frac{(d_1+1)\epsilon^2}{t_0(T-t_0)},
\]
then, with probability at least \(1-\delta\), it holds that
\[
\mathcal L_2(\widehat s_{\rm freeze})
\leqslant
\widetilde{\mathcal O}
\left(
\frac{K^2+R_z^2+R_\perp^2}
{N_2t_0(T-t_0)\epsilon^2}
\left[
\log\mathcal N_2(\iota)+\log\frac{8}{\delta}
\right]
+\iota\right)
+
\mathcal O\!\left(\frac{(d_1+1)\epsilon^2}{t_0(T-t_0)}\right)
+
\mathcal O\!\left(\mathcal U_{\rm comp}(V_1)\right).
\]
Here, \(K\) is the uniform output bound of the ReLU network class, and \(C_z,C_\perp\) depend only on the target latent tail constants and absolute Gaussian concentration constants.

\end{theorem}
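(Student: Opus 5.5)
The proof combines three ingredients already set up before the statement. These are the oracle inequality $\mathcal L_2(\widehat s_{\rm freeze})\le \mathcal B_{\rm freeze}(V_1)+\mathrm{StatErr}_{N_2}$, the comparator decomposition \eqref{eq:Bfreeze}, and a high-probability bound on $\mathrm{StatErr}_{N_2}$. For \eqref{eq:Bfreeze} we take $\eta=1$ and use the assumed architecture bound $\inf_{f_\theta}\mathcal A_{\rm comp}(f_\theta;V_1)\lesssim (d_1+1)\epsilon^2/(t_0(T-t_0))$. This immediately gives the terms $\mathcal O(\mathcal U_{\rm comp}(V_1))$ and $\mathcal O((d_1+1)\epsilon^2/(t_0(T-t_0)))$. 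The remaining work is the uniform deviation $\sup_{s\in\mathcal S_{\rm freeze}(V_1)}|\widehat{\mathcal R}_2(s)-\mathcal R_2(s)|$. Since $V_1$ is independent of the target samples, the class is fixed and a standard covering argument applies.

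\textbf{Truncation.} We first restrict to the region $\mathcal T_2(R_z,R_\perp)$. Under Assumption~\ref{asm:noisy_low_dim_shift}, $A_2^\top x=z_2+A_2^\top\varepsilon_2$ and $P_2^\perp x=P_2^\perp\varepsilon_2$.
\begin{itemize}
\item The subgaussian tail of $z_2$ (Assumption~\ref{asm:p2_decay}) and Gaussian concentration of $\|A_2^\top\varepsilon_2\|$ give $\|A_2^\top x\|\le R_z$ with probability at least $1-\delta/(8N_2)$.
\item Gaussian norm concentration gives $\|P_2^\perp\varepsilon_2\|\le R_\perp$ with the same probability.
\end{itemize}
A union bound over the $N_2$ samples then shows that all $x_i\in\mathcal T_2$ with probability at least $1-\delta/4$. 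On this event $\widehat{\mathcal R}_2(s)$ equals the empirical mean of the truncated loss $\ell_2(\cdot;s)\mathbf 1_{\mathcal T_2}$. For the population side, we bound $\E[\ell_2(X_0;s)\mathbf 1_{\mathcal T_2^c}]$ via Cauchy--Schwarz, using the polynomial growth of $\ell_2$ together with the tail probability of $\mathcal T_2^c$. This contribution is absorbed into the $\widetilde{\mathcal O}$ term, up to constant tuning of $C_z,C_\perp$.

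\textbf{Boundedness and concentration.} On $\mathcal T_2$ we need a uniform bound on the loss. We have $\nabla\log\phi_t(X_t\mid x)=-(X_t-\alpha x)/h$ and $s=h^{-1}(V_1f_\theta-X_t)$, so the integrand is $h^{-2}\|V_1 f_\theta(V_1^\top X_t,t)-\alpha(t)x\|^2$, which is at most $2h^{-2}(K^2+\|x\|^2)$.

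Naively $\int_{t_0}^T h^{-2}\,\rmd t\sim 1/t_0$, which would give a prefactor $(K^2+R_z^2+R_\perp^2)/(t_0(T-t_0))$. This is the source of the $t_0(T-t_0)$ factor in the statement. The extra $\epsilon^{-2}$ arises from balancing: we want a deviation $\lesssim\epsilon^2$-scale relative to the approximation term, so Hoeffding's inequality is applied at a level whose ratio produces $1/\epsilon^2$. If this balancing is awkward, an alternative is a Bernstein/localization step, using $\mathrm{Var}\le \text{bound}\times\text{mean}$, which yields the fast rate $\text{bound}\cdot\log\mathcal N/N_2$ plus a multiple of the oracle risk. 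In either case the uniform bound over the class follows from Hoeffding or Bernstein on an $\iota$-net in $\|\cdot\|_\infty$ of the truncated loss class, followed by a union bound over $\mathcal N_2(\iota)$ elements and a discretization cost of $\iota$. With probability at least $1-3\delta/4$ this gives the $\widetilde{\mathcal O}\big(\frac{K^2+R_z^2+R_\perp^2}{N_2t_0(T-t_0)\epsilon^2}[\log\mathcal N_2(\iota)+\log\frac8\delta]+\iota\big)$ term.

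The main obstacle is the truncation step. The population risk involves unbounded $X_0$, so the truncated and untruncated losses must be controlled uniformly over the class. This requires a crude growth bound $\ell_2(x;s)\lesssim (K^2+\|x\|^2)/t_0$ valid for every $s$, together with Gaussian/subgaussian tail integrals, so that the discarded mass is $\lesssim\iota$ or $1/N_2$ and is absorbed into the $\widetilde{\mathcal O}$ term. We would handle this first, then carry out the net and union argument, and finally combine everything with \eqref{eq:Bfreeze} and a union bound over the events.
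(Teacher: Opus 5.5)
\textbf{There is a genuine gap: your argument does not produce the $1/(N_2\epsilon^2)$ rate.} Several pieces of your skeleton agree with the paper:
\begin{itemize}
\item the truncation to $\mathcal T_2(R_z,R_\perp)$ with a union bound over the $N_2$ samples;
\item the pointwise bound $\ell_2(x;s)\le C_T(K^2+\|x\|_2^2)/(t_0(T-t_0))$;
\item the sup-norm $\iota$-net;
\item \eqref{eq:Bfreeze} with $\eta=1$.
\end{itemize}
The failing step is the one meant to yield $\frac{K^2+R_z^2+R_\perp^2}{N_2t_0(T-t_0)\epsilon^2}\bigl[\log\mathcal N_2(\iota)+\log\frac8\delta\bigr]$.

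Your framework is $\mathcal L_2(\widehat s_{\rm freeze})\le\mathcal B_{\rm freeze}(V_1)+2\sup_s|\widehat{\mathcal R}_2(s)-\mathcal R_2(s)|$. Write $L:=\log\mathcal N_2(\iota)+\log(1/\delta)$ and let $B_u=C_T(K^2+R_z^2+R_\perp^2)/(t_0(T-t_0))$ be the range of the truncated loss. Hoeffding on the net then gives $B_u\sqrt{L/N_2}$, an $N_2^{-1/2}$ rate. There is no ``level'' at which Hoeffding becomes $1/(N_2\epsilon^2)$. The only conversion is AM--GM, $B_u\sqrt{L/N_2}\le B_uL/(N_2\epsilon^2)+\epsilon^2B_u$. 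Its leftover $\epsilon^2(K^2+R_z^2+R_\perp^2)/(t_0(T-t_0))$ is not among the stated terms: it carries $K^2$, $\sigma_2^2D$ and $\log N_2$ where the statement has $d_1+1$.

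Bernstein does not rescue a \emph{symmetric} uniform deviation either. Its variance proxy is $B_u\mathcal R_2(s)$, and $\mathcal R_2(s)=\mathcal L_2(s)+E_2\ge E_2>0$. So the Bernstein bound on $\mathrm{StatErr}_{N_2}$ still contains an $N_2^{-1/2}$ term, namely $\sqrt{B_u\mathcal R_2(s)L/N_2}$.

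\textbf{The missing idea is the paper's one-sided, ratio-type decomposition, used in place of $\mathrm{StatErr}_{N_2}$.} Work on the event that all samples lie in $\mathcal T_2$. Use only $\widehat{\mathcal R}_2(\widehat s_{\rm freeze})\le\widehat{\mathcal R}_2(\bar s_\epsilon)$, where $\bar s_\epsilon$ is a fixed, sample-independent network realizing the approximation bound. The paper then shows
\[
\mathcal L_2(\widehat s_{\rm freeze})\le\mathfrak T_2(R_z,R_\perp)+\mathfrak S_2(a;R_z,R_\perp)+(1+a)^2\mathcal L_2(\bar s_\epsilon)+(2a+a^2)E_2 .
\]
Here $\mathfrak S_2$ collects two pieces:
\begin{itemize}
\item $\sup_s\bigl[\mathcal R_2^{\rm trunc}(s)-(1+a)\widehat{\mathcal R}_2^{\rm trunc}(s)\bigr]_+$;
\item $(1+a)$ times the reverse deviation at the single function $\bar s_\epsilon$.
\end{itemize}
The Bernstein-type ratio inequality (Lemma 15 of \cite{Chen2023_scorelowdim}) on the $\iota$-net gives $\mathfrak S_2\lesssim(2+a)\iota+(1+1/a)B_u\bigl[\log\mathcal N_2(\iota)+\log\frac8\delta\bigr]/N_2$, which is a fast rate. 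The factor $\epsilon^{-2}$ is exactly $1/a$ with $a=\epsilon^2$, together with $\eta=1$ and $\iota=1/(N_2t_0(T-t_0))$.

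Your Bernstein alternative points the right way, but it has to be applied one-sidedly at $\widehat s_{\rm freeze}$ through the ERM property, not through the symmetric $\mathrm{StatErr}_{N_2}$. The slack also multiplies the full denoising risk, not just the score risk. Your phrase ``plus a multiple of the oracle risk'' therefore omits the additive $(2a+a^2)E_2$, which the paper controls via $E_2\le\frac{D}{T-t_0}\log\frac{e^T-1}{e^{t_0}-1}$.

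Your truncation step is fine. Cauchy--Schwarz works once $C_z,C_\perp$ are enlarged so that $\mathbb P(\mathcal T_2^c)\lesssim(\delta/N_2)^2$. The paper instead uses the layer-cake identity to get $\mathfrak T_2\lesssim\frac{K^2+R_z^2+R_\perp^2}{t_0(T-t_0)}\frac{\delta}{N_2}$ directly.
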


The bound decomposes the target-domain error into three terms: finite-sample estimation error, ReLU approximation error, and the geometry-dependent comparator error \(\mathcal U_{\rm comp}(V_1)\). The first term decreases as the number \(N_2\) of target samples increases, and the second term can be reduced by enlarging the ReLU class. In contrast, \(\mathcal U_{\rm comp}(V_1)\) is determined by the compatibility between the frozen representation and the target distribution, as quantified in Proposition~\ref{prop:frozen_projection_upper_bound}.

This reveals a fundamental limitation of latent reuse. If the frozen subspace is not well aligned with the target subspace, or if its dimension is too small to contain the target-relevant directions, then the geometric term~\(\mathcal U_{\rm comp}(V_1)\) may remain large. In this case, latent reuse can fail even with many target samples and an expressive downstream score network, because the relevant target information has already been lost or distorted by the frozen representation. 
This motivates a different strategy: rather than reusing a fixed source representation, we train on mixed source--target data so that the learned representation can better reflect the target-domain structure.

\subsection{Mixed Source-target Training}\label{sec:mixed_training}

In this section, we study this idea at the oracle level by comparing two fixed projectors: the frozen source projector \(V_1\) and a newly learned shared projector \(W_k\) of dimension \(k\).
Let
\[
    p_{\rm mix}:=\omega_1 p_1+\omega_2 p_2,
\]
where \(\omega_1,\omega_2>0\) and \(\omega_1+\omega_2=1\).
For a \(k\)-dimensional projector \(W\in\mathbb R^{D\times k}\) and weights \(c_1,c_2>0\), define the weighted signal residual
\[
    \Gamma_k(W)
    :=
    \omega_1c_1\|P_W^\perp A_1\|_F^2
    +
    \omega_2c_2\|P_W^\perp A_2\|_F^2\,.
\]
Here, \(c_i\) scales the contribution of the signal residual from component \(i\). Thus, \(\Gamma_k(W)\) measures the weighted amount of source and target signal lost outside \(\operatorname{col}(W)\).

\begin{proposition}
\label{prop:optimal_mixed_projector}
Let \(A_i\in\mathbb R^{D\times d_i}\) have orthonormal columns, and constants \(c_1, c_2>0\). 
For \(W\in\mathbb R^{D\times k}\) with \(W^\top W=I_k\), let
$
    \Gamma_k(W)
    =
    \sum_{i=1}^2\omega_i c_i\|P_W^\perp A_i\|_F^2\, .
$
Set
$
    M_{\rm mix}:=\sum_{i=1}^2\omega_i c_i A_iA_i^\top,
$
and let \(\lambda_1(M_{\rm mix})\geqslant\cdots\geqslant\lambda_D(M_{\rm mix})\geqslant 0\). 
Then, for any \(k\in\{\max(d_1,d_2),\ldots,D\}\), it holds that
\[
    \inf_{W:W^\top W=I_k}\Gamma_k(W)
    =
    \operatorname{Tr}(M_{\rm mix})-\sum_{j=1}^k\lambda_j(M_{\rm mix})
    =
    \sum_{j=k+1}^D\lambda_j(M_{\rm mix}).
\]
The infimum is attained by any \(W_k\) whose columns span a top-\(k\) eigenspace of \(M_{\rm mix}\). 
In particular, the optimal residual is nonincreasing in \(k\) and vanishes if
$
    \operatorname{span}(A_1)+\operatorname{span}(A_2)
    \subseteq
    \operatorname{span}(W_k).
$
\end{proposition}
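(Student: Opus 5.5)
The plan is to rewrite \(\Gamma_k(W)\) as a trace and then apply Ky Fan's maximum principle. Since \(P_W^\perp\) is a symmetric idempotent, for each \(i\) we have
\[
\|P_W^\perp A_i\|_F^2=\operatorname{Tr}(A_i^\top P_W^\perp A_i)=\operatorname{Tr}(P_W^\perp A_iA_i^\top)=\operatorname{Tr}(A_iA_i^\top)-\operatorname{Tr}(W^\top A_iA_i^\top W).
\]
Summing with weights \(\omega_ic_i\) gives the identity
\[
\Gamma_k(W)=\operatorname{Tr}(M_{\rm mix})-\operatorname{Tr}(W^\top M_{\rm mix}W).
\]
Minimizing \(\Gamma_k\) over \(W^\top W=I_k\) is therefore the same as maximizing \(\operatorname{Tr}(W^\top M_{\rm mix}W)\) over the Stiefel manifold. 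The matrix \(M_{\rm mix}\) is symmetric positive semidefinite, because \(\omega_i,c_i>0\) and each \(A_iA_i^\top\succeq 0\). So its eigenvalues are real and nonnegative, which justifies the ordering in the statement.

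Next I would invoke the Ky Fan principle: \(\max_{W^\top W=I_k}\operatorname{Tr}(W^\top M W)=\sum_{j\le k}\lambda_j(M)\), attained by any orthonormal basis of a top-\(k\) eigenspace. To keep the argument self-contained, I would prove the upper bound as follows. Write \(M=\sum_j\lambda_ju_ju_j^\top\) and set \(w_j:=\|W^\top u_j\|_2^2\in[0,1]\). Then \(\sum_j w_j=\|W\|_F^2=k\) and \(\operatorname{Tr}(W^\top MW)=\sum_j\lambda_jw_j\). A linear program over \(\{0\le w_j\le1,\ \sum_j w_j=k\}\) with decreasing coefficients is maximized by putting \(w_j=1\) for \(j\le k\), which gives \(\sum_{j\le k}\lambda_j\). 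Attainment follows by taking \(W=[u_1,\dots,u_k]\), or any orthonormal basis of a top-\(k\) eigenspace when eigenvalues tie, since the trace then equals the same sum. Combining this with the trace identity yields
\[
\inf_{W^\top W=I_k}\Gamma_k(W)=\operatorname{Tr}(M_{\rm mix})-\sum_{j\le k}\lambda_j=\sum_{j>k}\lambda_j.
\]
The restriction \(k\ge\max(d_1,d_2)\) is not needed for this identity; it only matches the paper's regime of interest.

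For the remaining claims, monotonicity in \(k\) is immediate, because \(\sum_{j>k}\lambda_j\) drops the nonnegative term \(\lambda_{k+1}\) when \(k\) increases. For vanishing, suppose \(\operatorname{span}(A_1)+\operatorname{span}(A_2)\subseteq\operatorname{span}(W_k)\). Then \(P_{W_k}^\perp A_i=0\) for both \(i\), so \(\Gamma_k(W_k)=0\). Since \(\Gamma_k\ge0\), the infimum is \(0\). Equivalently, \(\operatorname{rank}M_{\rm mix}\le\dim(\operatorname{span}A_1+\operatorname{span}A_2)\le k\), so \(\lambda_j=0\) for all \(j>k\).

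The only step with real content is the Ky Fan bound, and the doubly-substochastic weight argument above handles it. The rest is bookkeeping with traces and projections, so I expect no serious obstacle.
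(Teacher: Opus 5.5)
Your proposal is correct and follows the paper's proof. Both use the identity \(\Gamma_k(W)=\operatorname{Tr}(M_{\rm mix})-\operatorname{Tr}(W^\top M_{\rm mix}W)\), then the Ky Fan/Rayleigh--Ritz maximum, and read off monotonicity and vanishing from the eigenvalue formula. The only difference is that you prove the Ky Fan bound directly via the weights \(w_j=\|W^\top u_j\|_2^2\) instead of citing it, which is a valid self-contained addition.
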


Proposition~\ref{prop:optimal_mixed_projector} gives a simple geometric reason to retrain a shared latent space. 
If the frozen source projector \(V_1\) misses target directions, then \(W_k\) minimizes the joint source--target signal residual among all \(k\)-dimensional projectors. 
When \(W_k\) contains the union of the source and target signal subspaces, this signal-side residual is completely removed.

We next connect the geometric improvement of the mixed projector to the mixed score-oracle error. 
For a fixed orthonormal projector \(U\in\mathbb R^{D\times m}\), with \(U=V_1\) or \(U=W_k\), define
\[
    \mathcal B_{\rm mix}(U)
    :=
    \inf_{s\in\mathcal S_{\rm mix}(U)}
    \mathcal L_{\rm mix}(s),
\]
where
\[
 \mathcal S_{\rm mix}(U)
    :=
    \left\{
    s_{U,\theta}(x,t)
    =
    \frac{1}{h(t)}Uf_\theta(U^\top x,t)-\frac{1}{h(t)}x
    :
    f_\theta\in\mathcal F_{\rm ReLU}^{(m)}
    \right\}
\]
is the score class using the projected coordinates \(U^\top x\), and \(\mathcal L_{\rm mix}\) denotes the integrated score risk under \(p_{\rm mix}\):
\begin{align*}
\mathcal L_{\rm mix}(s):=\frac{1}{T-t_0}\int_{t_0}^T\E_{X_t\sim p_{{\rm mix},t}}\left[\|s(X_t,t)-\nabla\log p_{{\rm mix},t}(X_t)\|_2^2\right]\,\rmd t.
\end{align*}
The leading projector-dependent quantity is
\[
    \Gamma(U)
    :=
    \sum_{i=1}^2
    \omega_i\bar c_i
    \|P_U^\perp A_i\|_F^2\,, \quad \text{with}\quad \bar c_i
    :=
    \frac{1}{T-t_0}
    \int_{t_0}^T
    \frac{\lambda_{g,i}^{\max}(t)}{h^2(t)}\,\rmd t \,,
\]
where
\[
    \lambda_{g,i}^{\max}(t)
    :=
    \lambda_{\max}\!\left(M_{g,i}(t)\right),
    \qquad
    M_{g,i}(t)
    :=
    \E_{X_t\sim p_{i,t}}
    \left[
    g_i(A_i^\top X_t,t)g_i(A_i^\top X_t,t)^\top
    \right],
\]
with
\[
    g_i(y,t):=h(t)\nabla\log p_{i,t}^{\rm LD}(y)+y.
\]
Thus, \(\Gamma(U)\) measures the weighted source--target signal residual outside the shared latent space. 
With the natural weights \(c_i=\bar c_i\), the next result bounds the mixed oracle risk and uses \(W_k\) to minimize the leading signal-residual term.
\begin{theorem}
\label{thm:mixed_oracle_projector_comparison}
Suppose Assumption~\ref{asm:noisy_low_dim_shift} and~\ref{asm:s_paral_Lip} hold for both \(p_1\) and \(p_2\), together with the regularity assumptions required for the mixed comparators, whose precise statement is deferred to Appendix~\ref{app:mixed_training_proofs}. 
Let \(W_k\) be the optimizer in Proposition~\ref{prop:optimal_mixed_projector} with \(c_i=\bar c_i\). 
Then, for \(U\in\{V_1,W_k\}\) and any \(\eta>0\),
\[
\begin{aligned}
\mathcal B_{\rm mix}(U)
&\leqslant
(1+\eta)
\Bigg[
    \Gamma(U)
    +
    \sum_{i=1}^2\omega_i\bar n_i\operatorname{Tr}(P_U^\perp P_i^\perp)
    +
    2\sum_{i=1}^2\omega_i\mathfrak R_i(U)
    +
    2\mathfrak P(U)
\Bigg] \\
&\qquad +
\left(1+\frac{1}{\eta}\right)
\inf_{f_\theta\in\mathcal F_{\rm ReLU}^{(m)}}
\mathcal A_{\rm mix}(f_\theta;U).
\end{aligned}
\]
\end{theorem}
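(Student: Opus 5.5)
This follows the template of Lemma~\ref{lem:comparator_transfer}. First, construct an explicit measurable mixed comparator $f_{\rm comp}^{\rm mix}(\cdot,t)$ on the coordinates $U^\top x$. Then bound its population risk $\mathcal U_{\rm mix}(U):=\mathcal L_{\rm mix}(s_{\rm comp}^{\rm mix})$ by the bracketed quantity. Finally, transfer to the ReLU class using $\|a+b\|^2\le(1+\eta)\|a\|^2+(1+1/\eta)\|b\|^2$. Define $\mathcal A_{\rm mix}(f_\theta;U)$ as the $h^{-2}$-weighted mixed $L^2$ distance between $f_\theta(U^\top X_t,t)$ and $f^{\rm mix}_{\rm comp}(U^\top X_t,t)$ under $p_{{\rm mix},t}$. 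Then, for every $f_\theta\in\mathcal F_{\rm ReLU}^{(m)}$, we get $\mathcal B_{\rm mix}(U)\le\mathcal L_{\rm mix}(s_{U,\theta})\le(1+\eta)\mathcal U_{\rm mix}(U)+(1+1/\eta)\mathcal A_{\rm mix}(f_\theta;U)$, and taking the infimum gives the stated form. The whole task therefore reduces to bounding $\mathcal U_{\rm mix}(U)$.

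\textbf{The mixture score.} Write $\nabla\log p_{{\rm mix},t}=\sum_i\pi_i(x,t)\nabla\log p_{i,t}$ with posterior weights $\pi_i=\omega_ip_{i,t}/p_{{\rm mix},t}$. By \eqref{eq:nabla_decomp}, each $\nabla\log p_{i,t}$ equals $\tilde h_i^{-1}(A_if_i^*(A_i^\top x,t)-x)$.

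\textbf{The comparator.} Set
\[
f_{\rm comp}^{\rm mix}(z,t)=\sum_i\widehat\pi_i(z,t)\,U^\top A_i\,g_i\bigl((U^\top A_i)^\dagger z,t\bigr),
\]
where $\widehat\pi_i$ are the posterior weights computed from the projected mixture law of $U^\top X_t$. This mirrors $f_{\rm comp}$ from Proposition~\ref{prop:frozen_projection_upper_bound}.

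\textbf{Error decomposition.} Expand the error $s_{\rm comp}^{\rm mix}-\nabla\log p_{{\rm mix},t}$ on each component $i$ and split it into four pieces.
\begin{itemize}
\item The signal output residual $P_U^\perp A_ig_i/h$. Its second moment is at most $\lambda_{g,i}^{\max}\|P_U^\perp A_i\|_F^2/h^2$, which integrates to $\omega_i\bar c_i\|P_U^\perp A_i\|_F^2$. Summing over $i$ gives $\Gamma(U)$.
\item The ambient-noise residual from the mismatch between $1/h$ and $1/\tilde h_i$ on directions in $P_i^\perp$ lying outside $\operatorname{col}(U)$. Repeating the Gaussian computation of Theorem~\ref{prop:frozen_projection_lower_bound}, this yields $\bar n_i\operatorname{Tr}(P_U^\perp P_i^\perp)$ with $\bar n_i:=\frac{1}{T-t_0}\int\sigma_i^4\alpha^4/(h^2\tilde h_i)\,\rmd t$.
\item The reconstruction and stability error from replacing $A_i^\top x$ by $(U^\top A_i)^\dagger U^\top x$. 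This is controlled by $L_{g_i}$ and the principal angles exactly as in Proposition~\ref{prop:frozen_projection_upper_bound}, and is collected into $\mathfrak R_i(U)$.
\item The gating error $\sum_i(\widehat\pi_i-\pi_i)(\cdots)$, which I name $\mathfrak P(U)$.
\end{itemize}
The factors of $2$ come from the inequality $\|a+b\|^2\le 2\|a\|^2+2\|b\|^2$ used to split off the last two pieces. Because the first two pieces are orthogonal to $\operatorname{col}(U)$, they can be kept with coefficient $1$ by Pythagoras against the in-subspace pieces. Jensen's inequality over the mixture weights turns the per-component bounds into the weighted sums with $\omega_i$.

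\textbf{Main obstacle.} The hardest step is the cross-component interaction: the posterior weights $\pi_i$ depend on the full $x$, not only on $U^\top x$. My plan is to put all of this into $\mathfrak P(U)$, defined as the weighted $L^2$ discrepancy between the true and projected posteriors times the component score magnitudes. The "regularity assumptions for mixed comparators" (e.g.\ bounded $g_i$, or Lipschitz posteriors) are what make $\mathfrak P$ finite. The statement treats $\mathfrak R_i$ and $\mathfrak P$ as defined quantities, so the proof only needs to verify that the decomposition is exact and that orthogonality holds for the first two terms. That orthogonality requires the output-residual term to be collected as $P_U^\perp(\cdot)$ before mixing, so I would organize the algebra around the projections $P_U$ and $P_U^\perp$ from the start.

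Finally, take $c_i=\bar c_i$. By Proposition~\ref{prop:optimal_mixed_projector}, $W_k$ minimizes the $\Gamma$ term, so $\Gamma(W_k)\le\Gamma(V_1)$. This comparison is a remark following the bound rather than part of it.
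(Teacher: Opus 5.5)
Your proposal follows the paper's proof: the same comparator $\sum_i\pi_i^U H_i(U)g_i(H_i(U)^\dagger z,t)$ and the same $(1+\eta)/(1+1/\eta)$ transfer. It also uses the same Pythagorean split $\mathcal U_{\rm mix}=\mathcal O_{\rm mix}+\mathcal I_{\rm mix}$, with Jensen plus the change of measure $\pi_i p_{{\rm mix},t}=\omega_i p_{i,t}$ for the out-of-subspace part, and the identity $\sum_i\pi_i e_{i,U}+(\pi_1^U-\pi_1)(\psi_{1,U}-\psi_{2,U})$ with $\|a+b\|_2^2\le 2\|a\|_2^2+2\|b\|_2^2$ for the in-subspace part. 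One bookkeeping point: the in-subspace ambient-noise piece $\rho_i(t)U^\top P_i^\perp X_t$ is missing from your four listed pieces and must be absorbed into $\mathfrak R_i(U)$, as the paper does by taking $e_{i,U}=\psi_{i,U}(U^\top X_t,t)-U^\top G_{i,t}(X_t)$; this is exactly what produces the $2\sum_i\omega_i\bar n_i(k-d_i)$ term in the containment corollary.
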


Theorem~\ref{thm:mixed_oracle_projector_comparison} separates the mixed oracle risk into a leading signal-residual term, an ambient-noise term, reconstruction $\mathfrak R_i(U)$ and posterior-compression terms $\mathfrak P(U)$. 
The auxiliary quantities \(\mathfrak R_i(U)\) and \(\mathfrak P(U)\) capture, respectively, the componentwise reconstruction error from the projected coordinates \(U^\top x\) and the loss of mixture-component information after projection. 
Their explicit forms, together with the full mixed oracle decomposition, are given in Appendix~\ref{app:mixed_training_proofs}.

The leading projector-dependent term in the upper bound of Theorem~\ref{thm:mixed_oracle_projector_comparison} is \(\Gamma(U)\). 
Since
$
    M_{\rm mix}=\sum_{i=1}^2\omega_i\bar c_i A_iA_i^\top,
$
Proposition~\ref{prop:optimal_mixed_projector} implies that the optimal \(k\)-dimensional mixed projector \(W_k\) satisfies
$
    \Gamma(W_k)
    =
    \sum_{j=k+1}^D\lambda_j(M_{\rm mix}).
$
Thus, compared with frozen reuse, the shared projector \(W_k\) reduces the leading signal residual by
\[
    \Gamma(V_1)-\Gamma(W_k)
    =
    \Gamma(V_1)-\sum_{j=k+1}^D\lambda_j(M_{\rm mix}).
\]
Thus, mixed training improves the oracle upper bound when this reduction dominates the corresponding changes in the ambient-noise, reconstruction, posterior-compression, and ReLU approximation terms. The following corollary illustrates the most favorable case, where the mixed projector contains both source and target signal subspaces.
\begin{corollary}
\label{cor:mixed_containment_regime}
Suppose $W_k$ satisfies
$
    \operatorname{span}(A_1)+\operatorname{span}(A_2)
    \subseteq
    \operatorname{span}(W_k).
$
Then, for any \(\eta>0\),
\[
\begin{aligned}
\mathcal B_{\rm mix}(W_k)
&\leqslant
(1+\eta)
\left[
    \sum_{i=1}^2\omega_i\bar n_i(D-k)
    +
    2\sum_{i=1}^2\omega_i\bar n_i(k-d_i)
    +
    2\mathfrak P(W_k)
\right] \\
&\qquad
+
\left(1+\frac{1}{\eta}\right)
\inf_{f_\theta\in\mathcal F_{\rm ReLU}^{(k)}}
\mathcal A_{\rm mix}(f_\theta;W_k)\,,
\end{aligned}
\]
where $ \bar n_i:=\frac{1}{T-t_0}\int_{t_0}^T\frac{\alpha^4(t)\sigma_i^4}{h^2(t)\tilde h_i(t)}\,\rmd t$. In particular, if \(V_1=A_1\), then
\[
    \Gamma(V_1)
    =
    \omega_2\bar c_2\|P_{A_1}^\perp A_2\|_F^2,
    \qquad
    \Gamma(W_k)=0.
\]
Thus, in the containment regime, mixed training removes the target signal-mismatch term that remains under source-aligned reuse.

\end{corollary}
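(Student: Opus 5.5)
The plan is to specialize Theorem~\ref{thm:mixed_oracle_projector_comparison} to \(U=W_k\) (so \(m=k\)) and evaluate each projector-dependent term under the containment hypothesis
\[
\operatorname{span}(A_1)+\operatorname{span}(A_2)\subseteq\operatorname{span}(W_k).
\]
The posterior-compression term \(\mathfrak P(W_k)\) and the ReLU approximation term are left as they are. The bound then follows once we show three things: \(\Gamma(W_k)=0\), the ambient-noise term equals \(\sum_i\omega_i\bar n_i(D-k)\), and \(\mathfrak R_i(W_k)\le \bar n_i(k-d_i)\). Since \(\eta>0\) enters only through the prefactors \((1+\eta)\) and \(1+1/\eta\), which are unchanged, any bound on the bracketed terms carries over directly.

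\textbf{Signal and ambient-noise terms.} Containment gives \(P_{W_k}A_i=A_i\), so \(P_{W_k}^\perp A_i=0\) for \(i=1,2\). Hence \(\Gamma(W_k)=\sum_i\omega_i\bar c_i\|P_{W_k}^\perp A_i\|_F^2=0\), which is consistent with the vanishing statement in Proposition~\ref{prop:optimal_mixed_projector}.

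For the ambient-noise term, write \(P_i^\perp=I-A_iA_i^\top\). Then
\[
\operatorname{Tr}(P_{W_k}^\perp P_i^\perp)=\operatorname{Tr}(P_{W_k}^\perp)-\operatorname{Tr}(P_{W_k}^\perp A_iA_i^\top)=(D-k)-\|P_{W_k}^\perp A_i\|_F^2=D-k .
\]
This yields the first term \(\sum_i\omega_i\bar n_i(D-k)\). Here \(\bar n_i\) is read off as the time-averaged coefficient \(\alpha^4\sigma_i^4/(h^2\widetilde h_i)\) of the ambient-noise term, exactly as in Theorem~\ref{prop:frozen_projection_lower_bound}.

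\textbf{Reconstruction terms.} This step needs the explicit form of \(\mathfrak R_i(U)\) from Appendix~\ref{app:mixed_training_proofs}. The expected structure is as follows. The componentwise comparator for component \(i\) reconstructs \(A_i^\top x\) from \(U^\top x\), and it must also output the part of the true component score lying in \(\operatorname{col}(U)\) but orthogonal to \(\operatorname{col}(A_i)\). That part is the ambient-noise score \(-\widetilde h_i^{-1}P_i^\perp x\) plus the \(h^{-1}x\) correction, restricted to \(P_UP_i^\perp\).

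Under containment, \(A_i^\top x\) is an exact linear function of \(U^\top x\), namely \(A_i^\top x=(U^\top A_i)^\top U^\top x\) with \(U^\top A_i\) having orthonormal columns. So the signal reconstruction is lossless, and no \(\tan^2\) or \(\operatorname{Tr}((I-B^\dagger B)M_Y)\)-type term of the kind appearing in Proposition~\ref{prop:frozen_projection_upper_bound} survives. What remains is the noise mismatch on \(\operatorname{col}(U)\cap\operatorname{col}(A_i)^\perp\).

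That mismatch is computed exactly as in the \(\sigma_2^4\alpha^4/(h^2\widetilde h_2)\) term of Theorem~\ref{prop:frozen_projection_lower_bound}. The coefficient gap is \(\big(\tfrac1h-\tfrac1{\widetilde h_i}\big)^2=\tfrac{\alpha^4\sigma_i^4}{h^2\widetilde h_i^2}\), and the variance of \(P_UP_i^\perp X_t\) under component \(i\) is \(\widetilde h_i\) per dimension. Their product is \(\tfrac{\alpha^4\sigma_i^4}{h^2\widetilde h_i}\) per dimension. The dimension count is
\[
\operatorname{Tr}(P_UP_i^\perp)=k-\operatorname{Tr}(P_UA_iA_i^\top)=k-d_i .
\]
Time-averaging then gives \(\mathfrak R_i(W_k)\le\bar n_i(k-d_i)\), and this is the source of the term \(2\sum_i\omega_i\bar n_i(k-d_i)\).

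\textbf{Main obstacle and final claim.} The main obstacle is precisely matching the appendix definition of \(\mathfrak R_i\). If the definition of \(\mathfrak R_i\) includes a Lipschitz reconstruction contribution, I would show that it vanishes because the reconstruction map \(U^\top x\mapsto A_i^\top x\) is exact.

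For the final claim with \(V_1=A_1\), we have \(P_{A_1}^\perp A_1=0\). This kills the \(i=1\) summand and leaves \(\Gamma(V_1)=\omega_2\bar c_2\|P_{A_1}^\perp A_2\|_F^2\). Together with \(\Gamma(W_k)=0\) established above, this gives the stated comparison.
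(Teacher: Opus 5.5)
Your proposal is correct and matches the paper's implicit argument: specialize Theorem~\ref{thm:mixed_oracle_projector_comparison} to $U=W_k$, and use $P_{W_k}^\perp A_i=0$ to get $\Gamma(W_k)=0$ and $\operatorname{Tr}(P_{W_k}^\perp P_i^\perp)=D-k$. On your stated obstacle: in the appendix, $\mathfrak R_i(U)$ is simply the first term in the bound on $\mathcal I_{\rm mix}(U)$, namely $\frac{1}{T-t_0}\int_{t_0}^T h^{-2}(t)\,\E_{p_{i,t}}\|\psi_{i,U}(U^\top X_t,t)-U^\top G_{i,t}(X_t)\|_2^2\,\rmd t$, with no separate Lipschitz piece, and under containment $H_i(W_k)^\dagger=H_i(W_k)^\top$ makes the error exactly $-\rho_i(t)W_k^\top P_i^\perp X_t$, so $\mathfrak R_i(W_k)=\bar n_i(k-d_i)$ as you computed.
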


The posterior-compression term \(\mathfrak P(W_k)\) does not necessarily vanish under the containment condition alone. It vanishes under the stronger condition that the projected coordinate \(W_k^\top X_t\) is sufficient for identifying the mixture component. For instance, this holds when \(W_k\) contains both signal subspaces and the two mixture components have identical ambient-noise distributions. In this case, mixed training eliminates the source--target signal-mismatch term, leaving only ambient-noise, reconstruction, and ReLU approximation contributions.

\section{Discussion}

Our analysis focuses on a noisy linear low-dimensional model, where the source and target distributions concentrate near possibly different subspaces. This setting makes the geometric obstruction to latent reuse explicit through principal angles and ambient-noise terms, but it does not capture the nonlinear latent representations often used in practice. A natural next step is to extend the theory to nonlinear low-dimensional structures, such as distributions supported near smooth manifolds, as studied for example in~\cite{azangulov2024convergence,yakovlev2025generalization,tang2024adaptivity}. In such settings, subspace mismatch would be replaced by tangent-space mismatch and curvature effects.

Another limitation is that our mixed-training analysis is carried out at the oracle level, treating the shared projector as fixed once chosen. This isolates the representation-level mechanism, but does not account for the finite-sample cost of learning the projector from mixed data. A more complete theory would combine representation learning error, score-estimation error, and sampling error in a single end-to-end guarantee. It would also be interesting to study adaptive mixtures, where the relative amount of source and target data is chosen according to the degree of distribution shift.

\clearpage
\bibliographystyle{amsalpha}
{\small \bibliography{bib}}
\newpage
\appendix
\section{Additional Key Definitions}

In this work, we consider the ReLU network $\mathcal F_{\rm ReLU}^{(d_1)}(L,M,J,K,\kappa,\gamma,\gamma_t)$ defined via
\begin{align*}
    \mathcal F_{\rm ReLU}^{(d_1)}(L,M,J,K,\kappa,\gamma,\gamma_t)=\bigg\{&f(z,t)=W_L\sigma(\cdots\sigma(W_1[z^\top,t]^\top+b_1)\cdots)+b_L:\\
    &\text{network width bounded by }M,\,\sup_{z,t}\|f(z,t)\|_2\leqslant K,\\
    &\max\{\|b_i\|_\infty,\|W_i\|_\infty\}\leqslant \kappa\,\text{ for }\,i=1,\cdots, L,\\
    &\sum_{i=1}^L(\|W_i\|_0+\|b_i\|_0)\leqslant J,\\
    &\|f(z_1,t)-f(z_2,t)\|_2\leqslant\gamma\|z_1-z_2\|_2\,\text{ for any }\,t\in[0,T],\\
    &\|f(z,t_1)-f(z,t_2)\|_2\leqslant \gamma_t|t_1-t_2|\,\text{ for any }\, z\bigg\}\,.
\end{align*}

\section{Proof of Auxiliary Lemmas}

\begin{proof}[Proof of Lemma~\ref{lem:noisy_score_decomp}]
    The proof follows the exact orthogonal decomposition framework as Lemma 1 in \cite{Chen2023_scorelowdim}, with one crucial algebraic modification regarding the transition kernel.
    
    Under our Assumption \ref{asm:noisy_low_dim_shift}, the initial data incorporates ambient noise $X_0 = A_i z + \epsilon_i$, where $\epsilon_i \sim \mathcal{N}(0, \sigma_i^2 I_D)$. By the convolution of independent Gaussian distributions, the conditional transition kernel of the forward diffusion process becomes $X_t | z \sim \mathcal{N}(\alpha(t)A_i z, \tilde{h}_i(t) I_D)$, where the effective variance is strictly $\tilde{h}_i(t) = \alpha^2(t)\sigma_i^2 + h(t)$. 
    
    Substituting this effective variance $\tilde{h}_i(t)$ for the pure diffusion variance $h(t)$ in the spatial Pythagorean decomposition (i.e., $\|x - \alpha(t)A_i z\|_2^2 = \|A_i^\top x - \alpha(t)z\|_2^2 + \|(I_D - A_i A_i^\top)x\|_2^2$) directly yields the stated factorization of the marginal density and the corresponding score function.
\end{proof}

\begin{proof}[Proof of Lemma~\ref{lem:comparator_transfer}]
Fix any \(f_\theta\in\mathcal F_{\rm ReLU}^{(d_1)}\), and let
\begin{align*}
s_{V_1,\theta}(x,t):=\frac{1}{h(t)}V_1f_\theta(V_1^\top x,t)-\frac{1}{h(t)}x.
\end{align*}
For any \(x\in\mathbb R^D\) and \(t\in[t_0,T]\), we decompose
\begin{align*}
s_{V_1,\theta}(x,t)-\nabla\log p_{2,t}(x)=\left[s_{\rm comp}(x,t)-\nabla\log p_{2,t}(x)\right]+\frac{1}{h(t)}V_1\left[f_\theta(V_1^\top x,t)-f_{\rm comp}(V_1^\top x,t)\right].
\end{align*}
Using \(\|a+b\|_2^2\leqslant(1+\eta)\|a\|_2^2+(1+1/\eta)\|b\|_2^2\), and using \(V_1^\top V_1=I_{d_1}\), we obtain
\begin{align*}
\|s_{V_1,\theta}(x,t)-\nabla\log p_{2,t}(x)\|_2^2&\leqslant (1+\eta)\|s_{\rm comp}(x,t)-\nabla\log p_{2,t}(x)\|_2^2\\
&\quad+\left(1+\frac{1}{\eta}\right)\frac{1}{h^2(t)}\|f_\theta(V_1^\top x,t)-f_{\rm comp}(V_1^\top x,t)\|_2^2.
\end{align*}
Taking expectation over \(X_t\sim p_{2,t}\), integrating over \(t\in[t_0,T]\), and dividing by \(T-t_0\), gives
\begin{align*}
\mathcal L_2(s_{V_1,\theta})\leqslant (1+\eta)\mathcal U_{\rm comp}(V_1)+\left(1+\frac{1}{\eta}\right)\mathcal A_{\rm comp}(f_\theta;V_1).
\end{align*}
Finally, taking the infimum over \(f_\theta\in\mathcal F_{\rm ReLU}^{(d_1)}\) yields
\begin{align*}
\mathcal B_{\rm freeze}(V_1)=\inf_{f_\theta\in\mathcal F_{\rm ReLU}^{(d_1)}}\mathcal L_2(s_{V_1,\theta})\leqslant (1+\eta)\mathcal U_{\rm comp}(V_1)+\left(1+\frac{1}{\eta}\right)\inf_{f_\theta\in\mathcal F_{\rm ReLU}^{(d_1)}}\mathcal A_{\rm comp}(f_\theta;V_1).
\end{align*}
\end{proof}

\begin{lemma}
\label{lem:relu_box_approx}
Let \(F:\mathbb R^m\times[t_0,T]\to\mathbb R^m\) satisfy, on \([-R,R]^m\times[t_0,T]\),
\begin{align*}
\|F(z,t)-F(z',t')\|_2\leqslant L_z\|z-z'\|_2+L_t|t-t'|.
\end{align*}
Assume also
\begin{align*}
\sup_{(z,t)\in[-R,R]^m\times[t_0,T]}\|F(z,t)\|_2\leqslant K_0.
\end{align*}
Then for every \(\epsilon>0\), there exist architecture parameters
\begin{align*}
L,M,J,K,\kappa,\gamma,\gamma_t
\end{align*}
chosen by the constructive approximation scheme in the proof of Theorem~1 of~\cite{Chen2023_scorelowdim}, with the replacements
\begin{align*}
d\mapsto m, \qquad 1+\beta\mapsto L_z,\qquad \tau\mapsto L_t,\qquad R\mapsto R,
\end{align*}
and a ReLU network
\begin{align*}
f_\theta\in\mathcal F_{\rm ReLU}^{(m)}(L,M,J,K,\kappa,\gamma,\gamma_t)
\end{align*}
such that
\begin{align*}
\sup_{(z,t)\in[-R,R]^m\times[t_0,T]}\|f_\theta(z,t)-F(z,t)\|_\infty\leqslant\epsilon.
\end{align*}
Moreover,
\begin{align*}
\sup_{z\in\mathbb R^m,\,t\in[t_0,T]}\|f_\theta(z,t)\|_2\leqslant CK_0.
\end{align*}
\end{lemma}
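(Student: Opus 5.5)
The plan is to follow the constructive approximation scheme in the proof of Theorem~1 of~\cite{Chen2023_scorelowdim} and note that it uses only three properties of the target function: a spatial Lipschitz constant, a time Lipschitz constant, and a sup bound on a compact box. I will first rescale the domain, mapping $[-R,R]^m\times[t_0,T]$ affinely onto $[0,1]^{m+1}$. The induced function $\widetilde F(u,s)=F(R(2u-1),\,t_0+(T-t_0)s)$ is then Lipschitz with constants $2RL_z$ in $u$ and $(T-t_0)L_t$ in $s$, and it is bounded by $K_0$.

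Next, I will partition $[0,1]^{m+1}$ into a uniform grid with mesh $N^{-1}$. On it I build the standard piecewise-linear (or first-order Taylor) partition-of-unity interpolant, using trapezoid bump functions realized by ReLUs together with the multiplication gadget of Yarotsky type. Taking $N\gtrsim (RL_z+(T-t_0)L_t)/\epsilon$ makes the interpolant coordinatewise $\epsilon/2$-close to $F$. Approximating the products by ReLU multiplication subnetworks then costs another $\epsilon/2$. The counts of depth, width, sparsity and weight magnitude are exactly those recorded in~\cite{Chen2023_scorelowdim} after the substitutions $d\mapsto m$, $1+\beta\mapsto L_z$, $\tau\mapsto L_t$. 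Because the interpolant is built from Lipschitz hat functions, its Lipschitz constants are controlled by $CL_z$ and $CL_t$, which fixes $\gamma$ and $\gamma_t$. Each of the $m$ output coordinates is handled separately and stacked in parallel, which only multiplies the width and sparsity by $m$.

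The remaining step is the global output bound. I will append a clipping layer $x\mapsto \min(\max(x,-K_0),K_0)$ applied coordinatewise, written as $\mathrm{ReLU}(x+K_0)-\mathrm{ReLU}(x-K_0)-K_0$. This layer is $1$-Lipschitz, so it preserves the Lipschitz bounds. It also does not increase the error on the box, since $|F_i|\le K_0$ there. Outside the box, I extend the inputs by first clamping $z$ into $[-R,R]^m$ with the same ReLU construction, which is again $1$-Lipschitz, so the network is defined and bounded on all of $\mathbb R^m$. Coordinatewise clipping gives $\|f_\theta\|_2\le\sqrt m K_0$. To obtain a dimension-free $CK_0$ as stated, I would instead rely on the fact that the interpolant is a convex combination of grid values, each of $\ell_2$-norm at most $K_0$, so its norm is at most $K_0$ up to the multiplication error. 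The clamp then keeps every evaluation inside the box.

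I expect the main obstacle to be this last point. Reconciling the $\ell_2$ output bound with coordinatewise clipping, while keeping the Lipschitz constants intact, is the delicate part. If the convex-combination argument fails because of the multiplication-gadget errors, I would accept $C$ depending on $m$ or add a small slack $\epsilon$ to $K_0$.
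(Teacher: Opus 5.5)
Your proposal is correct and takes the same route as the paper: the paper gives no separate proof and simply transcribes the Yarotsky-type partition-of-unity construction from the proof of Theorem~1 of~\cite{Chen2023_scorelowdim} under the substitutions $d\mapsto m$, $1+\beta\mapsto L_z$, $\tau\mapsto L_t$, while your input clamp plus convex-combination bound makes explicit the global estimate $\sup_{z\in\mathbb R^m}\|f_\theta(z,t)\|_2\leqslant CK_0$ that the paper takes from the reference without comment. One caveat concerns the Lipschitz control: it does not follow merely from the hat functions being Lipschitz (that gives a constant growing with the grid resolution and with $K_0$); it needs the cancellation $\sum_k\nabla\phi_k=0$ to turn derivatives into finite differences of $F$, together with derivative control of the multiplication gadgets, and the resulting $\gamma$ carries a dimension factor (in the reference it scales like $d(1+\beta)$), so your $C$ there is not absolute.
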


\section{Proof of Main Results}

\subsection{Proof of Section~\ref{sec:frozen_bias}}

\begin{proof}[Proof of Theorem~\ref{prop:frozen_projection_lower_bound}]
Fix \(t\in[t_0,T]\). By the definition of the frozen-projection hypothesis class and the definition of \(G_{2,t}\), for any measurable \(f\),
\begin{align*}
    s_{V_1,f}(X_t,t)-\nabla\log p_{2,t}(X_t)=\frac{1}{h(t)}\left[V_1f(V_1^\top X_t,t)-G_{2,t}(X_t)\right].
\end{align*}
Therefore,
\begin{align*}
    b_{\rm str}(t;V_1)=\frac{1}{h^2(t)}\inf_{f_t \text{ measurable}}\E\left[\left\|V_1f_t(V_1^\top X_t)-G_{2,t}(X_t)\right\|_2^2\right].
\end{align*}
Since \(V_1f_t(V_1^\top X_t)\in\operatorname{col}(V_1)\), we decompose
\begin{align*}
    G_{2,t}(X_t)=P_{V_1}G_{2,t}(X_t)+P_{V_1}^\perp G_{2,t}(X_t).
\end{align*}
Then \(V_1f_t(V_1^\top X_t)-P_{V_1}G_{2,t}(X_t)\in\operatorname{col}(V_1)\), whereas \(P_{V_1}^\perp G_{2,t}(X_t)\in\operatorname{col}(V_1)^\perp\). Hence the two terms are orthogonal and
\begin{align*}
    \left\|V_1f_t(V_1^\top X_t)-G_{2,t}(X_t)\right\|_2^2=\left\|V_1f_t(V_1^\top X_t)-P_{V_1}G_{2,t}(X_t)\right\|_2^2+\left\|P_{V_1}^\perp G_{2,t}(X_t)\right\|_2^2.
\end{align*}
Taking the infimum over \(f_t\), the first term is nonnegative. Therefore,
\begin{align}
\label{eq:frozen_lower_start_general}
    b_{\rm str}(t;V_1)\geqslant \frac{1}{h^2(t)}\E\left[\left\|P_{V_1}^\perp G_{2,t}(X_t)\right\|_2^2\right].
\end{align}
Recall that
\begin{align*}
    G_{2,t}(X_t)=A_2g_2(A_2^\top X_t,t)+\rho_2(t)P_2^\perp X_t.
\end{align*}
Let
\begin{align*}
    Y_t:=A_2^\top X_t,\qquad U_t:=P_2^\perp X_t.
\end{align*}
Then
\begin{align*}
    P_{V_1}^\perp G_{2,t}(X_t)=P_{V_1}^\perp A_2g_2(Y_t,t)+\rho_2(t)P_{V_1}^\perp U_t.
\end{align*}
By the same orthogonal factorization argument used in the noisy score decomposition lemma, $Y_t=A_2^\top X_t$ and $U_t=P_2^\perp X_t$ are independent under $p_{2,t}$. Since $g_2(Y_t,t)$ is measurable with respect to $Y_t$, it is independent of $U_t$. Note that $U_t\sim \mathcal N(0,\tilde h_2(t)P_2^\perp)$, and $\E[U_t]=0$.
Therefore, the cross term vanishes:
\begin{align*}
    \E\left[g_2(Y_t,t)^\top A_2^\top P_{V_1}^\perp U_t\right]=\E[g_2(Y_t,t)]^\top A_2^\top P_{V_1}^\perp\E[U_t]=0.
\end{align*}
Consequently,
\begin{align}
\label{eq:frozen_orth_expand_general}
    \E\left[\left\|P_{V_1}^\perp G_{2,t}(X_t)\right\|_2^2\right]=\E\left[\left\|P_{V_1}^\perp A_2g_2(Y_t,t)\right\|_2^2\right]+\rho_2^2(t)\E\left[\left\|P_{V_1}^\perp U_t\right\|_2^2\right].
\end{align}
We first lower bound the first term. Since
\begin{align*}
    M_{g,2}(t)=\E\left[g_2(Y_t,t)g_2(Y_t,t)^\top\right],
\end{align*}
we have
\begin{align*}
    \E\left[\left\|P_{V_1}^\perp A_2g_2(Y_t,t)\right\|_2^2\right]=\operatorname{Tr}\left(A_2^\top P_{V_1}^\perp A_2M_{g,2}(t)\right).
\end{align*}
The matrix \(A_2^\top P_{V_1}^\perp A_2\) is positive semidefinite. Since \(M_{g,2}(t)\succeq \mu_2(t)I_{d_2}\), we obtain
\begin{align*}
    \operatorname{Tr}\left(A_2^\top P_{V_1}^\perp A_2M_{g,2}(t)\right)\geqslant \mu_2(t)\operatorname{Tr}\left(A_2^\top P_{V_1}^\perp A_2\right)=\mu_2(t)\|P_{V_1}^\perp A_2\|_F^2.
\end{align*}
Thus,
\begin{align}
\label{eq:frozen_first_lower_general}
    \E\left[\left\|P_{V_1}^\perp A_2g_2(Y_t,t)\right\|_2^2\right]\geqslant \mu_2(t)\|P_{V_1}^\perp A_2\|_F^2.
\end{align}
Next, since \(\E[U_tU_t^\top]=\tilde h_2(t)P_2^\perp\),
\begin{align}
\label{eq:frozen_noise_term_general}
    \E\left[\left\|P_{V_1}^\perp U_t\right\|_2^2\right]=\operatorname{Tr}\left(P_{V_1}^\perp\E[U_tU_t^\top]P_{V_1}^\perp\right)=\tilde h_2(t)\operatorname{Tr}\left(P_{V_1}^\perp P_2^\perp\right).
\end{align}
Using
\begin{align*}
    \rho_2(t)=\frac{\alpha^2(t)\sigma_2^2}{\tilde h_2(t)},
\end{align*}
we get
\begin{align}
\label{eq:frozen_noise_scaled_general}
    \rho_2^2(t)\E\left[\left\|P_{V_1}^\perp U_t\right\|_2^2\right]=\frac{\alpha^4(t)\sigma_2^4}{\tilde h_2(t)}\operatorname{Tr}\left(P_{V_1}^\perp P_2^\perp\right).
\end{align}
Combining \eqref{eq:frozen_lower_start_general}, \eqref{eq:frozen_orth_expand_general}, \eqref{eq:frozen_first_lower_general}, and \eqref{eq:frozen_noise_scaled_general}, we obtain
\begin{align*}
    b_{\rm str}(t;V_1)\geqslant \frac{\mu_2(t)}{h^2(t)}\|P_{V_1}^\perp A_2\|_F^2+\frac{\sigma_2^4\alpha^4(t)}{h^2(t)\tilde h_2(t)}\operatorname{Tr}\left(P_{V_1}^\perp P_2^\perp\right).
\end{align*}
For any $s_{V_1,f}\in \mathcal S_{\rm str}(V_1)$, we have
\begin{align*}
    r_{2,t}(s_{V_1,f})\geqslant b_{\rm str}(t;V_1)\,.
\end{align*}
Therefore, 
\begin{align*}
    \mathcal B_{\rm str}(V_1)&=\inf_{f\text{ measurable}}\mathcal L_2(s_{V_1,f})\\
    &=\inf_{s_{V_1,f}\in\mathcal S_{\rm str}(V_1)}\dfrac{1}{T-t_0}\int_{t_0}^Tr_{2,t}(s_{V_1,f})\,\rmd t\\
    &\geqslant \dfrac{1}{T-t_0}\int_{t_0}^Tb_{\rm str}(t;V_1)\,\rmd t\,.
\end{align*}

It remains to express the two geometric quantities using principal angles. By the definition of principal angles,
\begin{align*}
    \|V_1^\top A_2\|_F^2=\sum_{j=1}^r\cos^2\theta_j.
\end{align*}
For the first term,
\begin{align*}
    \|P_{V_1}^\perp A_2\|_F^2=\operatorname{Tr}\left(A_2^\top P_{V_1}^\perp A_2\right)=\operatorname{Tr}(A_2^\top A_2)-\operatorname{Tr}(A_2^\top P_{V_1}A_2)=d_2-\|V_1^\top A_2\|_F^2=d_2-\sum_{j=1}^r\cos^2\theta_j.
\end{align*}
For the second term,
\begin{align*}
    \operatorname{Tr}\left(P_{V_1}^\perp P_2^\perp\right)=\operatorname{Tr}\left((I_D-P_{V_1})(I_D-P_2)\right)=D-d_1-d_2+\operatorname{Tr}(P_{V_1}P_2)=D-d_1-d_2+\|V_1^\top A_2\|_F^2.
\end{align*}
Therefore,
\begin{align*}
    \operatorname{Tr}\left(P_{V_1}^\perp P_2^\perp\right)=D-d_1-d_2+\sum_{j=1}^r\cos^2\theta_j.
\end{align*}
Substituting these two identities into the integrated lower bound proves the principal-angle form.

\end{proof}

\subsection{Proof of Section~\ref{sec:frozen_upper}}

\begin{proof}[Proof of Proposition~\ref{prop:frozen_projection_upper_bound}]
Fix \(t\in[t_0,T]\). By the definition of \(G_{2,t}\), for any measurable \(f:\mathbb R^{d_1}\times[t_0,T]\to\mathbb R^{d_1}\),
\begin{align*}
    s_{V_1,f}(X_t,t)-\nabla\log p_{2,t}(X_t)=\frac{1}{h(t)}\left[V_1f(V_1^\top X_t,t)-G_{2,t}(X_t)\right].
\end{align*}
Therefore,
\begin{align*}
    r_{2,t}(s_{V_1,f})=\frac{1}{h^2(t)}\E\left[\left\|V_1f(V_1^\top X_t,t)-G_{2,t}(X_t)\right\|_2^2\right].
\end{align*}
Decompose \(G_{2,t}(X_t)\) into its components in \(\operatorname{col}(V_1)\) and \(\operatorname{col}(V_1)^\perp\):
\begin{align*}
    G_{2,t}(X_t)=P_{V_1}G_{2,t}(X_t)+P_{V_1}^\perp G_{2,t}(X_t).
\end{align*}
Since \(V_1f(V_1^\top X_t,t)-P_{V_1}G_{2,t}(X_t)\in\operatorname{col}(V_1)\) and \(P_{V_1}^\perp G_{2,t}(X_t)\in\operatorname{col}(V_1)^\perp\), the two terms are orthogonal. Hence
\begin{align*}
    r_{2,t}(s_{V_1,f})=\frac{1}{h^2(t)}\E\left[\left\|f(V_1^\top X_t,t)-V_1^\top G_{2,t}(X_t)\right\|_2^2\right]+\frac{1}{h^2(t)}\E\left[\left\|P_{V_1}^\perp G_{2,t}(X_t)\right\|_2^2\right].
\end{align*}
Consider the measurable predictor
\begin{align*}
    f_{\rm comp}(z,t):=Bg_2(B^\dagger z,t).
\end{align*}
Define 
\begin{align*}
    s_{\rm comp}(x,t)=\dfrac{1}{h(t)}V_1f_{\rm comp}(V_1^\top x,t)-\dfrac{1}{h(t)}x,\qquad \mathcal U_{\rm comp}(V_1):=\mathcal L_2(s_{\rm comp})\,.
\end{align*}
Since \(f_{\rm comp}\) is measurable, \(s_{\rm comp}\in\mathcal S_{\rm str}(V_1)\). Hence
\begin{align*}
    \mathcal B_{\rm str}(V_1)\leqslant \mathcal U_{\rm comp}(V_1)\,.
\end{align*}
To bound $\mathcal U_{\rm comp}(V_1)$, we have
\begin{align*}
    r_{2,t}(s_{\rm comp})=\dfrac{1}{h^2(t)}\underbrace{\E\left[\left\|f_{\rm comp}(V_1^\top X_t,t)-V_1^\top G_{2,t}(X_t)\right\|_2^2\right]}_{\mathcal I_t}+\dfrac{1}{h^2(t)}\underbrace{\E\left[\left\|P_{V_1}^\perp G_{2,t}(X_t)\right\|_2^2\right]}_{\mathcal O_t}\,.
\end{align*}

We first bound \(\mathcal O_t\). Recall that
\begin{align*}
    G_{2,t}(X_t)=A_2g_2(Y_t,t)+\rho_2(t)U_t,\qquad Y_t:=A_2^\top X_t,\quad U_t:=P_2^\perp X_t.
\end{align*}
By the same orthogonal factorization argument used in the noisy score decomposition lemma, \(Y_t\) and \(U_t\) are independent, \(U_t\sim\mathcal N(0,\tilde h_2(t)P_2^\perp)\), and \(\E[U_t]=0\).
Therefore,
\begin{align*}
    \mathcal O_t=\E\left[\left\|P_{V_1}^\perp A_2g_2(Y_t,t)\right\|_2^2\right]+\rho_2^2(t)\E\left[\left\|P_{V_1}^\perp U_t\right\|_2^2\right].
\end{align*}
Using \(M_{g,2}(t)\preceq \lambda_{g,2}^{\max}(t)I_{d_2}\), we have
\begin{align*}
    \E\left[\left\|P_{V_1}^\perp A_2g_2(Y_t,t)\right\|_2^2\right]=\operatorname{Tr}(A_2^\top P_{V_1}^\perp A_2 M_{g,2}(t))\leqslant \lambda_{g,2}^{\max}(t)\operatorname{Tr}(A_2^\top P_{V_1}^\perp A_2)=\lambda_{g,2}^{\max}(t)\|P_{V_1}^\perp A_2\|_F^2.
\end{align*}
Also,
\begin{align*}
    \E\left[\left\|P_{V_1}^\perp U_t\right\|_2^2\right]=\operatorname{Tr}\left(P_{V_1}^\perp\E[U_tU_t^\top]P_{V_1}^\perp\right)=\tilde h_2(t)\operatorname{Tr}\left(P_{V_1}^\perp P_2^\perp\right).
\end{align*}
Thus
\begin{align}
    \label{eq:orthogonal_upper_bound_general}
    \mathcal O_t\leqslant \lambda_{g,2}^{\max}(t)\|P_{V_1}^\perp A_2\|_F^2+\rho_2^2(t)\tilde h_2(t)\operatorname{Tr}\left(P_{V_1}^\perp P_2^\perp\right).
\end{align}

Next, we bound \(\mathcal I_t\). Recall
\begin{align*}
    Z_t:=V_1^\top X_t,\qquad Q_t:=V_1^\top G_{2,t}(X_t).
\end{align*}
Since \(X_t=A_2Y_t+U_t\), we have
\begin{align*}
    Z_t=V_1^\top X_t=V_1^\top A_2Y_t+V_1^\top U_t=BY_t+V_1^\top U_t.
\end{align*}
Similarly,
\begin{align*}
    Q_t=V_1^\top G_{2,t}(X_t)=Bg_2(Y_t,t)+\rho_2(t)V_1^\top U_t.
\end{align*}
From the expression of $Z_t$, we have
\begin{align*}
    B^\dagger Z_t=B^\dagger BY_t+B^\dagger V_1^\top U_t.
\end{align*}
Therefore,
\begin{align*}
    Y_t-B^\dagger Z_t=(I_{d_2}-B^\dagger B)Y_t-B^\dagger V_1^\top U_t.
\end{align*}
Moreover,
\begin{align*}
    Q_t-f_{\rm comp}(Z_t,t)=B\left[g_2(Y_t,t)-g_2(B^\dagger Z_t,t)\right]+\rho_2(t)V_1^\top U_t.
\end{align*}
Using the \(L_g(t)\)-Lipschitz continuity of \(g_2(\cdot,t)\), we get
\begin{align*}
    \left\|B\left[g_2(Y_t,t)-g_2(B^\dagger Z_t,t)\right]\right\|_2\leqslant \|B\|_{\rm op}L_g(t)\left\|Y_t-B^\dagger Z_t\right\|_2.
\end{align*}
Hence
\begin{align*}
    \|Q_t-f_{\rm comp}(Z_t,t)\|_2\leqslant \|B\|_{\rm op}L_g(t)\left\|Y_t-B^\dagger Z_t\right\|_2+\rho_2(t)\|V_1^\top U_t\|_2.
\end{align*}
Using \((a+b)^2\leqslant 2a^2+2b^2\), we obtain
\begin{align}
    \label{eq:info_upper_step_general}
    \mathcal I_t\leqslant 2\|B\|_{\rm op}^2L_g^2(t)\E\left[\left\|Y_t-B^\dagger Z_t\right\|_2^2\right]+2\rho_2^2(t)\E\left[\left\|V_1^\top U_t\right\|_2^2\right].
\end{align}

We now compute the two expectations. Since \(Y_t\) and \(U_t\) are independent and \(\E[U_t]=0\), the cross term in \(\E\|Y_t-B^\dagger Z_t\|_2^2\) vanishes. Therefore,
\begin{align*}
    \E\left[\left\|Y_t-B^\dagger Z_t\right\|_2^2\right]=\E\left[\left\|(I_{d_2}-B^\dagger B)Y_t\right\|_2^2\right]+\E\left[\left\|B^\dagger V_1^\top U_t\right\|_2^2\right].
\end{align*}
Since \(B^\dagger B\) is the orthogonal projector onto the row space of \(B\), \(I_{d_2}-B^\dagger B\) is also an orthogonal projector. Thus
\begin{align*}
    \E\left[\left\|(I_{d_2}-B^\dagger B)Y_t\right\|_2^2\right]=\operatorname{Tr}\left((I_{d_2}-B^\dagger B)M_{Y,2}(t)\right).
\end{align*}
Also, since \(\E[U_tU_t^\top]=\tilde h_2(t)P_2^\perp\),
\begin{align*}
    \E\left[\left\|B^\dagger V_1^\top U_t\right\|_2^2\right]=\tilde h_2(t)\operatorname{Tr}\left(B^\dagger V_1^\top P_2^\perp V_1(B^\dagger)^\top\right).
\end{align*}
Similarly,
\begin{align*}
    \E\left[\left\|V_1^\top U_t\right\|_2^2\right]=\tilde h_2(t)\operatorname{Tr}\left(V_1^\top P_2^\perp V_1\right).
\end{align*}
Substituting these identities into \eqref{eq:info_upper_step_general} yields
\begin{equation}
    \label{eq:info_upper_final_general}
\begin{aligned}
    \mathcal I_t&\leqslant 2\|B\|_{\rm op}^2L_g^2(t)\operatorname{Tr}\left((I_{d_2}-B^\dagger B)M_{Y,2}(t)\right)+2\|B\|_{\rm op}^2L_g^2(t)\tilde h_2(t)\operatorname{Tr}\left(B^\dagger V_1^\top P_2^\perp V_1(B^\dagger)^\top\right)\\
    &\quad+2\rho_2^2(t)\tilde h_2(t)\operatorname{Tr}\left(V_1^\top P_2^\perp V_1\right).
\end{aligned}
\end{equation}

Combining \eqref{eq:orthogonal_upper_bound_general}, and \eqref{eq:info_upper_final_general}, we obtain the general time-\(t\) bound. Integrating over \(t\in[t_0,T]\) gives the integrated bound.

It remains to derive the two principal-angle forms.

First suppose \(d_1\geqslant d_2\) and \(B\) has full column rank. Then \(B^\dagger B=I_{d_2}\), so
\begin{align*}
    \operatorname{Tr}\left((I_{d_2}-B^\dagger B)M_{Y,2}(t)\right)=0.
\end{align*}
The singular values of \(B=V_1^\top A_2\) are
\begin{align*}
    \sigma_j(B)=\cos\theta_j,\qquad j=1,\ldots,d_2.
\end{align*}
Hence
\begin{align*}
    \|P_{V_1}^\perp A_2\|_F^2=\sum_{j=1}^{d_2}\sin^2\theta_j.
\end{align*}
Moreover,
\begin{align*}
    \operatorname{Tr}(P_{V_1}^\perp P_2^\perp)=D-d_1-d_2+\sum_{j=1}^{d_2}\cos^2\theta_j,
\end{align*}
and
\begin{align*}
    \operatorname{Tr}(V_1^\top P_2^\perp V_1)=d_1-\sum_{j=1}^{d_2}\cos^2\theta_j.
\end{align*}
A direct calculation in principal-vector coordinates gives
\begin{align*}
    \operatorname{Tr}\left(B^\dagger V_1^\top P_2^\perp V_1(B^\dagger)^\top\right)=\sum_{j=1}^{d_2}\tan^2\theta_j.
\end{align*}
Using \(\rho_2^2(t)\tilde h_2(t)=\alpha^4(t)\sigma_2^4/\tilde h_2(t)\) gives the stated bound for \(d_1\geqslant d_2\).

Now suppose \(d_1<d_2\) and \(B\) has full row rank. Then \(B^\dagger B\) is the orthogonal projector onto a \(d_1\)-dimensional subspace of \(\mathbb R^{d_2}\), and the term
\begin{align*}
    \operatorname{Tr}\left((I_{d_2}-B^\dagger B)M_{Y,2}(t)\right)
\end{align*}
remains as an irreducible information-loss term. In this case \(r=d_1\), and
\begin{align*}
    \|P_{V_1}^\perp A_2\|_F^2=d_2-\sum_{j=1}^{d_1}\cos^2\theta_j,
\end{align*}
because \(A_2\) has \(d_2-d_1\) target directions that cannot be contained in \(\operatorname{col}(V_1)\). Furthermore,
\begin{align*}
    \operatorname{Tr}(P_{V_1}^\perp P_2^\perp)=D-d_1-d_2+\sum_{j=1}^{d_1}\cos^2\theta_j,
\end{align*}
and
\begin{align*}
    \operatorname{Tr}(V_1^\top P_2^\perp V_1)=d_1-\sum_{j=1}^{d_1}\cos^2\theta_j.
\end{align*}
The same principal-coordinate calculation gives
\begin{align*}
    \operatorname{Tr}\left(B^\dagger V_1^\top P_2^\perp V_1(B^\dagger)^\top\right)=\sum_{j=1}^{d_1}\tan^2\theta_j.
\end{align*}
Using again \(\rho_2^2(t)\tilde h_2(t)=\alpha^4(t)\sigma_2^4/\tilde h_2(t)\) yields the stated bound for \(d_1<d_2\). This proves the proposition.
\end{proof}

\begin{proposition}[ReLU approximation of the frozen-coordinate comparator]
\label{prop:relu_approx_frozen_comparator}
Suppose Assumptions~\ref{asm:noisy_low_dim_shift},~\ref{asm:p2_decay}, and~\ref{asm:s_paral_Lip} hold for the target distribution \(p_2\). Assume additionally that for every \(R>0\),
\begin{align*}
\tau_{\rm comp}(R):=\sup_{t\in[t_0,T],\,\|z\|_\infty\leqslant R}\|\partial_t f_{\rm comp}(z,t)\|_2<\infty.
\end{align*}
Let \(V_1\in\mathbb R^{D\times d_1}\) be the frozen projector with \(V_1^\top V_1=I_{d_1}\), and let
\begin{align*}
B:=V_1^\top A_2.
\end{align*}
Define
\begin{align*}
L_{\rm comp}:=\|B\|_{\rm op}\|B^\dagger\|_{\rm op}(\beta_2+1).
\end{align*}
Let
\begin{align*}
G_{\rm comp,0}:=\sup_{t\in[t_0,T]}\|f_{\rm comp}(0,t)\|_2.
\end{align*}
Recall the frozen-coordinate comparator
\begin{align*}
f_{\rm comp}(z,t):=Bg_2(B^\dagger z,t),\qquad z\in\mathbb R^{d_1},\quad t\in[t_0,T].
\end{align*}
Let
\begin{align*}
Z_t:=V_1^\top X_t,\qquad P_{Z,t}:=(V_1^\top)_\# p_{2,t}.
\end{align*}
For any approximation level $\epsilon>0$, choose
\begin{align*}
R_Z=C_{Z,1}\left(\sqrt{d_1}+\sqrt{d_2}+\sqrt{\log\frac{C_{Z,2}(1+G_{\rm comp,0}^2+L_{\rm comp}^2)}{\epsilon^2}}\right).
\end{align*}
Define
\begin{align*}
K_{\rm comp}(R_Z):=G_{\rm comp,0}+L_{\rm comp}\sqrt{d_1}R_Z.
\end{align*}
Then there exists a ReLU network
\begin{align*}
f_\theta\in\mathcal F_{\rm ReLU}^{(d_1)}(L,M,J,K,\kappa,\gamma,\gamma_t)
\end{align*}
such that
\begin{align*}
\sup_{t\in[t_0,T]}\left\|f_\theta(\cdot,t)-f_{\rm comp}(\cdot,t)\right\|_{L^2(P_{Z,t})}\leqslant \sqrt{d_1+1}\epsilon.
\end{align*}
The architecture parameters are chosen as in Lemma~\ref{lem:relu_box_approx} with
$m=d_1, F=f_{\rm comp}, R=R_Z, L_z=L_{\rm comp}, L_t=\tau_{\rm comp}(R_Z), K_0=K_{\rm comp}(R_Z).$
Consequently,
\begin{align*}
\inf_{f_\theta\in\mathcal F_{\rm ReLU}^{(d_1)}}\mathcal A_{\rm comp}(f_\theta;V_1)\leqslant (d_1+1)\epsilon^2\cdot \frac{1}{T-t_0}\int_{t_0}^T\frac{1}{h^2(t)}\,\rmd t.
\end{align*}
In particular, since \(h(t)=1-e^{-t}\), we have
\begin{align*}
\inf_{f_\theta\in\mathcal F_{\rm ReLU}^{(d_1)}}\mathcal A_{\rm comp}(f_\theta;V_1)\lesssim \frac{(d_1+1)\epsilon^2}{t_0(T-t_0)}\,,
\end{align*}
where \(\mathcal F_{\rm ReLU}^{(d_1)}\) denotes the above architecture class with the parameters chosen in Lemma~\ref{lem:relu_box_approx}. If \(T-t_0\) is fixed, this reduces to the order \(\epsilon^2/t_0\).
\end{proposition}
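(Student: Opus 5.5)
The plan is the standard truncation-plus-box-approximation argument: split the $L^2(P_{Z,t})$ error of $f_\theta-f_{\rm comp}$ into a part on the box $[-R_Z,R_Z]^{d_1}$ and a tail part. On the box we use Lemma~\ref{lem:relu_box_approx}. Off the box we use the uniform output bound of the network, the linear growth of $f_{\rm comp}$, and Gaussian-type tails of $Z_t$.

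\emph{Step 1: regularity of $f_{\rm comp}$ on the box.} Since $g_2(y,t)=h(t)\nabla\log p_{2,t}^{\rm LD}(y)+y$ and $f_2^*(y,t)=\widetilde h_2(t)\nabla\log p_{2,t}^{\rm LD}(y)+y$, we can write $g_2=(h/\widetilde h_2)(f_2^*-y)+y$. Because $h\leqslant\widetilde h_2$, Assumption~\ref{asm:s_paral_Lip} makes $g_2(\cdot,t)$ Lipschitz with constant at most $\beta_2+1$ (up to an absolute factor). Composing with $B^\dagger$ and $B$ shows that $f_{\rm comp}(\cdot,t)$ is $L_{\rm comp}$-Lipschitz. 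Hence on $\|z\|_\infty\leqslant R_Z$,
\[
\|f_{\rm comp}(z,t)\|_2\leqslant G_{\rm comp,0}+L_{\rm comp}\sqrt{d_1}R_Z=K_{\rm comp}(R_Z),
\]
and the time-Lipschitz constant on the box is $\tau_{\rm comp}(R_Z)$ by assumption. Lemma~\ref{lem:relu_box_approx} with $m=d_1$ then produces $f_\theta$ with sup-norm (per coordinate) error at most $\epsilon$ on the box. This gives squared $\ell_2$ error at most $d_1\epsilon^2$ there, and $\sup\|f_\theta\|_2\leqslant CK_{\rm comp}(R_Z)$ globally.

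\emph{Step 2: tail control.} Off the box, bound
\[
\|f_\theta-f_{\rm comp}\|_2^2\leqslant 2C^2K_{\rm comp}^2+2\bigl(G_{\rm comp,0}+L_{\rm comp}\|z\|_2\bigr)^2 .
\]
Write $Z_t=BY_t+V_1^\top U_t$ with $U_t$ Gaussian of variance at most $\widetilde h_2(t)\leqslant 1+\sigma_2^2$. The law of $Y_t$ is $p_{2,t}^{\rm LD}$, which inherits a sub-Gaussian tail from Assumption~\ref{asm:p2_decay} via the Gaussian convolution, uniformly in $t$. Hence $\|Z_t\|_2$ is sub-Gaussian with $\mathbb P(\|Z_t\|_\infty>R)\lesssim \exp\{-c(R-C(\sqrt{d_1}+\sqrt{d_2}))^2\}$. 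Cauchy--Schwarz on the tail expectation then gives
\[
\E\bigl[(1+\|Z_t\|_2^2)\mathbf 1\{\|Z_t\|_\infty>R_Z\}\bigr]\lesssim (1+d_1+d_2+R_Z^2)\,e^{-cR_Z^2/2}.
\]
Multiplying by $(1+G_{\rm comp,0}^2+L_{\rm comp}^2)$, and noting that $K_{\rm comp}^2$ is polynomial in $R_Z$, the choice of $R_Z$ with the logarithmic term $\log\bigl(C_{Z,2}(1+G_{\rm comp,0}^2+L_{\rm comp}^2)/\epsilon^2\bigr)$ makes the whole tail contribution at most $\epsilon^2$. The polynomial factor is absorbed by enlarging $C_{Z,1},C_{Z,2}$. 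Summing the two parts gives $\|f_\theta-f_{\rm comp}\|_{L^2(P_{Z,t})}^2\leqslant (d_1+1)\epsilon^2$ uniformly in $t$.

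\emph{Step 3: conclusion.} Plugging this bound into the definition of $\mathcal A_{\rm comp}$ yields the factor $\frac{1}{T-t_0}\int_{t_0}^T h^{-2}(t)\,\rmd t$. Since $h(t)\geqslant c\min(t,1)$, we have $\int_{t_0}^T h^{-2}\lesssim 1/t_0+T$, which gives the stated $\lesssim (d_1+1)\epsilon^2/(t_0(T-t_0))$ in the regime where $T$ is bounded. In general it gives $\lesssim (d_1+1)\epsilon^2\,(t_0^{-1}+T)/(T-t_0)$.

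The main obstacle is Step 2. The tail of $p_{2,t}^{\rm LD}$ has to be controlled uniformly in $t$ from the decay Assumption~\ref{asm:p2_decay}, which holds only outside a ball of radius $B_2$. This requires a careful convolution estimate in the style of Chen et al. The self-referential dependence of $K_{\rm comp}$ on $R_Z$ inside the logarithm also has to be absorbed into the constants.
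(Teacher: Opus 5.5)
Your proposal is correct and follows essentially the same route as the paper's proof. The shared steps are the Lipschitz and linear-growth transfer from $f_2^*$ to $f_{\rm comp}$, Lemma~\ref{lem:relu_box_approx} on $[-R_Z,R_Z]^{d_1}\times[t_0,T]$, and a compact/tail split using the network's global bound $CK_{\rm comp}(R_Z)$ together with sub-Gaussian tails of $Z_t$. Your remark that the final $\lesssim$ hides a $T$-dependent constant agrees with the paper's $C_T=(T/(1-e^{-T}))^2$.

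Two steps can be simplified. First, write $g_2=(h/\widetilde h_2)f_2^*+(1-h/\widetilde h_2)y$ as a convex combination; this gives a Lipschitz constant of exactly $\beta_2+1$, with no extra factor. Second, the tail step you flag as the main obstacle is immediate. Write $Z_t=\alpha(t)Bz+G_t$ with $G_t\sim\mathcal N(0,\widetilde h_2(t)I_{d_1})$ independent of $z$, and note $\|B\|_{\rm op},\alpha(t)\leqslant 1$; then Assumption~\ref{asm:p2_decay} applied to $z$ alone yields a uniform-in-$t$ sub-Gaussian tail, with no convolution estimate needed.
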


\begin{proof}[Proof of Proposition~\ref{prop:relu_approx_frozen_comparator}]
For each \(t\in[t_0,T]\), define the frozen-coordinate distribution
\begin{align*}
P_{Z,t}:=(V_1^\top)_\# p_{2,t}.
\end{align*}
Equivalently, if \(X_t\sim p_{2,t}\), then
\begin{align*}
Z_t:=V_1^\top X_t\sim P_{Z,t}.
\end{align*}
By the definition of \(\mathcal A_{\rm comp}\), for any \(f_\theta\in\mathcal F_{\rm ReLU}^{(d_1)}\),
\begin{align*}
\mathcal A_{\rm comp}(f_\theta;V_1)=\frac{1}{T-t_0}\int_{t_0}^T\frac{1}{h^2(t)}\left\|f_\theta(\cdot,t)-f_{\rm comp}(\cdot,t)\right\|_{L^2(P_{Z,t})}^2\,\rmd t.
\end{align*}
Therefore, it suffices to construct \(f_\theta\in\mathcal F_{\rm ReLU}^{(d_1)}(L,M,J,K,\kappa,\gamma,\gamma_t)\) such that
\begin{align*}
\sup_{t\in[t_0,T]}\left\|f_\theta(\cdot,t)-f_{\rm comp}(\cdot,t)\right\|_{L^2(P_{Z,t})}\leqslant \sqrt{d_1+1}\epsilon.
\end{align*}
Indeed, such a bound immediately implies
\begin{align*}
\mathcal A_{\rm comp}(f_\theta;V_1)\leqslant (d_1+1)\epsilon^2\cdot\frac{1}{T-t_0}\int_{t_0}^T\frac{1}{h^2(t)}\,\rmd t.
\end{align*}
Since \(h(t)=1-e^{-t}\), there exists an absolute constant \(C>0\) such that
\begin{align*}
\frac{1}{T-t_0}\int_{t_0}^T\frac{1}{h^2(t)}\,\rmd t\leqslant \frac{C}{t_0(T-t_0)}.
\end{align*}
Hence
\begin{align*}
\mathcal A_{\rm comp}(f_\theta;V_1)\lesssim \dfrac{C(d_1+1)\epsilon^2}{t_0(T-t_0)}.
\end{align*}
Thus, the remaining task is to prove the uniform-in-time \(L^2(P_{Z,t})\) approximation bound for \(f_{\rm comp}\).

\textbf{Step 1: Regularity transfer from \(f_2^*\) to \(f_{\rm comp}\).}
Recall that
\begin{align*}
f_2^*(y,t)=\tilde h_2(t)\nabla\log p_{2,t}^{\rm LD}(y)+y,\qquad g_2(y,t)=h(t)\nabla\log p_{2,t}^{\rm LD}(y)+y.
\end{align*}
Therefore,
\begin{align*}
g_2(y,t)=\frac{h(t)}{\tilde h_2(t)}f_2^*(y,t)+\left(1-\frac{h(t)}{\tilde h_2(t)}\right)y.
\end{align*}
By Assumption~\ref{asm:s_paral_Lip}, \(f_2^*(\cdot,t)\) is \(\beta_2\)-Lipschitz uniformly over \(t\in[0,T]\). Since \(0\leqslant h(t)/\tilde h_2(t)\leqslant 1\), for any \(y_1,y_2\in\mathbb R^{d_2}\) and \(t\in[t_0,T]\),
\begin{align*}
\|g_2(y_1,t)-g_2(y_2,t)\|_2\leqslant(\beta_2+1)\|y_1-y_2\|_2.
\end{align*}
Now recall
\begin{align*}
f_{\rm comp}(z,t)=Bg_2(B^\dagger z,t),\qquad B=V_1^\top A_2.
\end{align*}
Define
\begin{align*}
L_{\rm comp}:=\|B\|_{\rm op}\|B^\dagger\|_{\rm op}(\beta_2+1).
\end{align*}
Then, for any \(z_1,z_2\in\mathbb R^{d_1}\) and \(t\in[t_0,T]\),
\begin{align*}
\|f_{\rm comp}(z_1,t)-f_{\rm comp}(z_2,t)\|_2\leqslant L_{\rm comp}\|z_1-z_2\|_2.
\end{align*}
Indeed,
\begin{align*}
\|f_{\rm comp}(z_1,t)-f_{\rm comp}(z_2,t)\|_2=\|B[g_2(B^\dagger z_1,t)-g_2(B^\dagger z_2,t)]\|_2.
\end{align*}
Using the Lipschitz continuity of \(g_2(\cdot,t)\), we obtain
\begin{align*}
\|B[g_2(B^\dagger z_1,t)-g_2(B^\dagger z_2,t)]\|_2\leqslant \|B\|_{\rm op}(\beta_2+1)\|B^\dagger(z_1-z_2)\|_2.
\end{align*}
Thus,
\begin{align*}
\|f_{\rm comp}(z_1,t)-f_{\rm comp}(z_2,t)\|_2\leqslant \|B\|_{\rm op}\|B^\dagger\|_{\rm op}(\beta_2+1)\|z_1-z_2\|_2=L_{\rm comp}\|z_1-z_2\|_2.
\end{align*}
\textbf{Step 2: Growth and time regularity of \(f_{\rm comp}\).}
We next record two regularity properties that will be used in the truncation argument and in the ReLU construction on the compact domain.

First, \(f_{\rm comp}\) has at most linear growth in \(z\). Let
\begin{align*}
G_{2,0}:=\sup_{t\in[t_0,T]}\|g_2(0,t)\|_2.
\end{align*}
Under the assumed regularity of the target-domain core map on the compact interval $[t_0,T]$, $G_{2,0}<\infty$. Define
\begin{align*}
G_{\rm comp,0}:=\sup_{t\in[t_0,T]}\|f_{\rm comp}(0,t)\|_2\leqslant \|B\|_{\rm op}G_{2,0}.
\end{align*}
Then, for any \(z\in\mathbb R^{d_1}\) and \(t\in[t_0,T]\),
\begin{align*}
\|f_{\rm comp}(z,t)\|_2\leqslant G_{\rm comp,0}+L_{\rm comp}\|z\|_2.
\end{align*}
Indeed, by the definition of \(f_{\rm comp}\),
\begin{align*}
\|f_{\rm comp}(z,t)\|_2\leqslant \|B\|_{\rm op}\|g_2(B^\dagger z,t)\|_2.
\end{align*}
Using the Lipschitz bound for \(g_2\), we have
\begin{align*}
\|g_2(B^\dagger z,t)\|_2\leqslant \|g_2(0,t)\|_2+(\beta_2+1)\|B^\dagger z\|_2.
\end{align*}
Therefore,
\begin{align*}
\|f_{\rm comp}(z,t)\|_2\leqslant \|B\|_{\rm op}G_{2,0}+\|B\|_{\rm op}\|B^\dagger\|_{\rm op}(\beta_2+1)\|z\|_2=G_{\rm comp,0}+L_{\rm comp}\|z\|_2.
\end{align*}

Second, define the time-regularity modulus of the comparator on the truncated domain by
\begin{align*}
\tau_{\rm comp}(R):=\sup_{t\in[t_0,T],\,\|z\|_\infty\leqslant R}\|\partial_t f_{\rm comp}(z,t)\|_2.
\end{align*}
This quantity will play the role of the time-Lipschitz constant in the construction of the ReLU approximation. Since \(B\) and \(B^\dagger\) are fixed in time, whenever \(g_2\) is differentiable in \(t\), we have
\begin{align*}
\partial_t f_{\rm comp}(z,t)=B\partial_t g_2(B^\dagger z,t).
\end{align*}
In particular, if
\begin{align*}
\tau_{g,2}(R):=\sup_{t\in[t_0,T],\,\|y\|_2\leqslant R}\|\partial_t g_2(y,t)\|_2
\end{align*}
is finite, then
\begin{align*}
\tau_{\rm comp}(R)\leqslant \|B\|_{\rm op}\tau_{g,2}(\|B^\dagger\|_{\rm op}\sqrt{d_1}R).
\end{align*}

\textbf{Step 3: Tail control of the frozen-coordinate distribution.}
We now show that the frozen-coordinate distribution \(P_{Z,t}\) has a uniformly controlled tail over \(t\in[t_0,T]\). Under Assumption~\ref{asm:noisy_low_dim_shift}, we may write
\begin{align*}
X_t=\alpha(t)A_2z+\alpha(t)\epsilon_2+\sqrt{h(t)}\xi,
\end{align*}
where \(z\sim p_{2,z}\), \(\epsilon_2\sim\mathcal N(0,\sigma_2^2I_D)\), \(\xi\sim\mathcal N(0,I_D)\), and these random variables are independent. Hence
\begin{align*}
Z_t=V_1^\top X_t=\alpha(t)Bz+\alpha(t)V_1^\top\epsilon_2+\sqrt{h(t)}V_1^\top\xi.
\end{align*}
Since \(\|B\|_{\rm op}\leqslant1\), \(\alpha(t)\leqslant1\), and \(h(t)\leqslant1\), we have
\begin{align*}
\|Z_t\|_2\leqslant \|z\|_2+\|G_t\|_2,
\end{align*}
where
\begin{align*}
G_t:=\alpha(t)V_1^\top\epsilon_2+\sqrt{h(t)}V_1^\top\xi\sim\mathcal N(0,(\alpha^2(t)\sigma_2^2+h(t))I_{d_1}).
\end{align*}
In particular,
\begin{align*}
\alpha^2(t)\sigma_2^2+h(t)\leqslant \sigma_2^2+1.
\end{align*}
By Assumption~\ref{asm:p2_decay}, \(z\) has a sub-Gaussian-type norm tail. Therefore, there exist dimension-free constants \(c_Z,C_Z,C_{Z,0}>0\), depending only on the tail constants of \(p_{2,z}\) and on \(\sigma_2\), such that for all \(u\geqslant0\),
\begin{align*}
\sup_{t\in[t_0,T]}\mathbb P\left(\|Z_t\|_2>C_{Z,0}(\sqrt{d_2}+\sqrt{d_1})+u\right)\leqslant C_Z\exp(-c_Zu^2).
\end{align*}
Consequently, there exists a constant \(C_Z'>0\) such that for all \(R\geqslant 2C_{Z,0}(\sqrt{d_2}+\sqrt{d_1})\),
\begin{align*}
\sup_{t\in[t_0,T]}\E\left[(1+\|Z_t\|_2^2)\mathbf 1_{\{\|Z_t\|_2>R\}}\right]\leqslant C_Z'(1+R^2)\exp(-c_ZR^2/C_Z')\,,
\end{align*}
which follows from 
\begin{align*}
\E[\|Z_t\|_2^2\mathbf 1_{\{\|Z_t\|_2>R\}}]=R^2\mathbb P(\|Z_t\|_2>R)+2\int_R^\infty u\mathbb P(\|Z_t\|_2>u)\,\rmd u
\end{align*}
together with the preceding tail bound.
In particular, choosing
\begin{align*}
R_Z=C_{Z,1}\left(\sqrt{d_1}+\sqrt{d_2}+\sqrt{\log\frac{C_{Z,2}(1+G_{\rm comp,0}^2+L_{\rm comp}^2)}{\epsilon^2}}\right)
\end{align*}
with sufficiently large constants \(C_{Z,1},C_{Z,2}>0\), and enlarging \(C_{Z,1}\) to absorb the polynomial factor generated by \(K_{\rm comp}(R_Z)\), gives
\begin{align*}
\sup_{t\in[t_0,T]}\E\left[(1+\|Z_t\|_2^2)\mathbf 1_{\{\|Z_t\|_2>R_Z\}}\right]\leqslant \frac{\epsilon^2}{C\left(1+K_{\rm comp}^2(R_Z)+G_{\rm comp,0}^2+L_{\rm comp}^2\right)}.
\end{align*}
This is possible because \(K_{\rm comp}^2(R_Z)\leqslant C(1+G_{\rm comp,0}^2+L_{\rm comp}^2d_1R_Z^2)\), while the tail bound decays exponentially in \(R_Z^2\).

\textbf{Step 4: ReLU approximation on the truncated domain.}
Let
\begin{align*}
\mathcal D_Z(R_Z):=[-R_Z,R_Z]^{d_1}\times[t_0,T].
\end{align*}
By Steps 1 and 2, \(f_{\rm comp}\) satisfies the assumptions of Lemma~\ref{lem:relu_box_approx} on \(\mathcal D_Z(R_Z)\) with
\begin{align*}
m=q=d_1,\qquad L_z=L_{\rm comp},\qquad L_t=\tau_{\rm comp}(R_Z),\qquad K_0=K_{\rm comp}(R_Z).
\end{align*}
Applying Lemma~\ref{lem:relu_box_approx}, we obtain a network \(f_\theta\in\mathcal F_{\rm ReLU}^{(d_1)}(L,M,J,K,\kappa,\gamma,\gamma_t)\) such that
\begin{align*}
\sup_{(z,t)\in\mathcal D_Z(R_Z)}\|f_\theta(z,t)-f_{\rm comp}(z,t)\|_\infty\leqslant\epsilon.
\end{align*}
Therefore,
\begin{align*}
\sup_{(z,t)\in\mathcal D_Z(R_Z)}\|f_\theta(z,t)-f_{\rm comp}(z,t)\|_2\leqslant\sqrt{d_1}\epsilon.
\end{align*}
Moreover,
\begin{align*}
\sup_{z\in\mathbb R^{d_1},t\in[t_0,T]}\|f_\theta(z,t)\|_2\leqslant CK_{\rm comp}(R_Z).
\end{align*}

\textbf{Step 5: Compact-tail decomposition of the \(L^2(P_{Z,t})\) error.}
For each \(t\in[t_0,T]\), define the truncation event
\begin{align*}
\mathcal E_t:=\{\|Z_t\|_\infty\leqslant R_Z\}.
\end{align*}
Then
\begin{align*}
&\quad\E\left[\|f_\theta(Z_t,t)-f_{\rm comp}(Z_t,t)\|_2^2\right]\\
&=\E\left[\|f_\theta(Z_t,t)-f_{\rm comp}(Z_t,t)\|_2^2\mathbf 1_{\mathcal E_t}\right]+\E\left[\|f_\theta(Z_t,t)-f_{\rm comp}(Z_t,t)\|_2^2\mathbf 1_{\mathcal E_t^c}\right].
\end{align*}
On \(\mathcal E_t\), we have \((Z_t,t)\in[-R_Z,R_Z]^{d_1}\times[t_0,T]\). Therefore, by the compact-domain approximation constructed in Step 4,
\begin{align*}
\E\left[\|f_\theta(Z_t,t)-f_{\rm comp}(Z_t,t)\|_2^2\mathbf 1_{\mathcal E_t}\right]\leqslant d_1\epsilon^2.
\end{align*}
It remains to control the tail contribution. By the global boundedness of the constructed network and the linear growth bound of \(f_{\rm comp}\), we have
\begin{align*}
\|f_\theta(Z_t,t)-f_{\rm comp}(Z_t,t)\|_2^2\leqslant 2C^2K_{\rm comp}^2(R_Z)+2(G_{\rm comp,0}+L_{\rm comp}\|Z_t\|_2)^2.
\end{align*}
Hence, for another absolute constant \(C>0\),
\begin{align*}
\|f_\theta(Z_t,t)-f_{\rm comp}(Z_t,t)\|_2^2\leqslant C\left(1+K_{\rm comp}^2(R_Z)+G_{\rm comp,0}^2+L_{\rm comp}^2\right)(1+\|Z_t\|_2^2).
\end{align*}
Since \(\mathcal E_t^c=\{\|Z_t\|_\infty>R_Z\}\subseteq\{\|Z_t\|_2>R_Z\}\), the tail estimate from Step 3 gives
\begin{align*}
&\quad\sup_{t\in[t_0,T]}\E\left[\|f_\theta(Z_t,t)-f_{\rm comp}(Z_t,t)\|_2^2\mathbf 1_{\mathcal E_t^c}\right]\\
&\leqslant C\left(1+K_{\rm comp}^2(R_Z)+G_{\rm comp,0}^2+L_{\rm comp}^2\right)\sup_{t\in[t_0,T]}\E\left[(1+\|Z_t\|_2^2)\mathbf 1_{\{\|Z_t\|_2>R_Z\}}\right].
\end{align*}
By the choice of \(R_Z\) in Step 3, we have
\begin{align*}
\sup_{t\in[t_0,T]}\E\left[(1+\|Z_t\|_2^2)\mathbf 1_{\{\|Z_t\|_2>R_Z\}}\right]\leqslant \frac{\epsilon^2}{C\left(1+K_{\rm comp}^2(R_Z)+G_{\rm comp,0}^2+L_{\rm comp}^2\right)},
\end{align*}
Therefore,
\begin{align*}
\sup_{t\in[t_0,T]}\E\left[\|f_\theta(Z_t,t)-f_{\rm comp}(Z_t,t)\|_2^2\mathbf 1_{\mathcal E_t^c}\right]\leqslant \epsilon^2.
\end{align*}
Such a choice of \(R_Z\) is possible because the frozen-coordinate tail is sub-Gaussian by Step 3, while \(K_{\rm comp}(R_Z)\) grows at most linearly in \(R_Z\). Thus the polynomial factor in \(R_Z\) can be absorbed into the exponential tail by increasing the logarithmic constant in the choice of \(R_Z\).
Combining the compact and tail parts yields
\begin{align*}
\sup_{t\in[t_0,T]}\E\left[\|f_\theta(Z_t,t)-f_{\rm comp}(Z_t,t)\|_2^2\right]\leqslant (d_1+1)\epsilon^2.
\end{align*}
Equivalently,
\begin{align*}
\sup_{t\in[t_0,T]}\left\|f_\theta(\cdot,t)-f_{\rm comp}(\cdot,t)\right\|_{L^2(P_{Z,t})}\leqslant \sqrt{d_1+1}\epsilon.
\end{align*}

\textbf{Step 6: From \(L^2(P_{Z,t})\) approximation to \(\mathcal A_{\rm comp}\).}
By Step 5, the constructed ReLU network \(f_\theta\in\mathcal F_{\rm ReLU}^{(d_1)}\) satisfies
\begin{align*}
\sup_{t\in[t_0,T]}\left\|f_\theta(\cdot,t)-f_{\rm comp}(\cdot,t)\right\|_{L^2(P_{Z,t})}\leqslant\sqrt{d_1+1}\epsilon.
\end{align*}
Substituting this bound into the definition of \(\mathcal A_{\rm comp}\), we obtain
\begin{align*}
\mathcal A_{\rm comp}(f_\theta;V_1)\leqslant(d_1+1)\epsilon^2\cdot\frac{1}{T-t_0}\int_{t_0}^T\frac{1}{h^2(t)}\,\rmd t.
\end{align*}
Since \(h(t)=1-e^{-t}\), for \(t\in[t_0,T]\) we have
\begin{align*}
h(t)\geqslant \frac{1-e^{-T}}{T}t.
\end{align*}
Therefore,
\begin{align*}
\int_{t_0}^T\frac{1}{h^2(t)}\,\rmd t\leqslant \left(\frac{T}{1-e^{-T}}\right)^2\int_{t_0}^T\frac{1}{t^2}\,\rmd t\leqslant \left(\frac{T}{1-e^{-T}}\right)^2\frac{1}{t_0}.
\end{align*}
It follows that
\begin{align*}
\mathcal A_{\rm comp}(f_\theta;V_1)\leqslant(d_1+1)\epsilon^2\cdot\frac{1}{T-t_0}\left(\frac{T}{1-e^{-T}}\right)^2\frac{1}{t_0}.
\end{align*}
Consequently,
\begin{align*}
\inf_{f_\theta\in\mathcal F_{\rm ReLU}^{(d_1)}}\mathcal A_{\rm comp}(f_\theta;V_1)\leqslant C_T(d_1+1)\frac{\epsilon^2}{t_0(T-t_0)},
\end{align*}
where
\begin{align*}
C_T:=\left(\frac{T}{1-e^{-T}}\right)^2.
\end{align*}
If \(T-t_0\) is treated as a fixed constant, this is of order
\begin{align*}
\frac{(d_1+1)\epsilon^2}{t_0}.
\end{align*}
This proves the proposition.
\end{proof}

\begin{proof}[Proof of Theorem~\ref{thm:frozen_projector_statistical_bound}]

\textbf{Step 1: Target noisy truncation region and sample truncation event.}
For \(X_0\sim p_2\), write
\begin{align*}
Y:=A_2^\top X_0=z+A_2^\top\epsilon_2,\qquad U:=P_2^\perp X_0=P_2^\perp\epsilon_2.
\end{align*}
Define
\begin{align*}
\mathcal T_2(R_z,R_\perp):=\left\{x\in\mathbb R^D:\|A_2^\top x\|_2\leqslant R_z,\ \|P_2^\perp x\|_2\leqslant R_\perp\right\}.
\end{align*}
Let
\begin{align*}
R_*:=\sqrt{R_z^2+R_\perp^2}.
\end{align*}
On \(\mathcal T_2(R_z,R_\perp)\), we have
\begin{align*}
\|x\|_2^2=\|A_2^\top x\|_2^2+\|P_2^\perp x\|_2^2\leqslant R_*^2.
\end{align*}
By Assumption~\ref{asm:p2_decay} and standard Gaussian norm concentration, the choices
\begin{align*}
R_z=C_z(1+\sigma_2)\left(\sqrt{d_2}+\sqrt{\log\frac{8N_2}{\delta}}\right),
\end{align*}
and
\begin{align*}
R_\perp=C_\perp\sigma_2\left(\sqrt{D-d_2}+\sqrt{\log\frac{8N_2}{\delta}}\right)
\end{align*}
ensure that
\begin{align*}
\mathbb P\left(\|Y\|_2>R_z\right)\leqslant \frac{\delta}{8N_2},\qquad \mathbb P\left(\|U\|_2>R_\perp\right)\leqslant \frac{\delta}{8N_2}.
\end{align*}
Therefore,
\begin{align*}
\mathbb P_{X_0\sim p_2}\left(X_0\notin\mathcal T_2(R_z,R_\perp)\right)\leqslant \frac{\delta}{4N_2}.
\end{align*}
Let
\begin{align*}
\Omega_{\mathcal T}:=\bigcap_{i=1}^{N_2}\{x_i\in\mathcal T_2(R_z,R_\perp)\}.
\end{align*}
By the union bound,
\begin{align*}
\mathbb P(\Omega_{\mathcal T}^c)\leqslant \frac{\delta}{4}.
\end{align*}
Define the truncated population and empirical denoising risks by
\begin{align*}
\mathcal R_2^{\rm trunc}(s):=\E_{X_0\sim p_2}\left[\ell_2(X_0;s)\mathbf 1_{\mathcal T_2(R_z,R_\perp)}(X_0)\right],
\end{align*}
and
\begin{align*}
\hat{\mathcal R}_2^{\rm trunc}(s):=\frac{1}{N_2}\sum_{i=1}^{N_2}\ell_2(x_i;s)\mathbf 1_{\mathcal T_2(R_z,R_\perp)}(x_i).
\end{align*}
On \(\Omega_{\mathcal T}\), for every \(s\in\mathcal S_{\rm freeze}(V_1)\), one has
\begin{align*}
\hat{\mathcal R}_2^{\rm trunc}(s)=\hat{\mathcal R}_2(s).
\end{align*}

\textbf{Step 2: Deterministic oracle decomposition on the truncation event.}
For \(a\in(0,1)\), define the statistical error term
\begin{align*}
\mathfrak S_2(a;R_z,R_\perp)&:=\sup_{s\in\mathcal S_{\rm freeze}(V_1)}\left[\mathcal R_2^{\rm trunc}(s)-(1+a)\hat{\mathcal R}_2^{\rm trunc}(s)\right]_+\\
&\quad+(1+a)\left[\hat{\mathcal R}_2^{\rm trunc}(\bar s_\epsilon)-(1+a)\mathcal R_2^{\rm trunc}(\bar s_\epsilon)\right]_+.
\end{align*}
Define the one-sided uncentered truncation term
\begin{align*}
\mathfrak T_2(R_z,R_\perp):=\sup_{s\in\mathcal S_{\rm freeze}(V_1)}\left[\mathcal R_2(s)-\mathcal R_2^{\rm trunc}(s)\right].
\end{align*}
Since \(\ell_2(x;s)\geqslant0\), this is equivalently
\begin{align*}
\mathfrak T_2(R_z,R_\perp)=\sup_{s\in\mathcal S_{\rm freeze}(V_1)}\E_{X_0\sim p_2}\left[\ell_2(X_0;s)\mathbf 1_{\mathcal T_2^c(R_z,R_\perp)}(X_0)\right].
\end{align*}
On the event \(\Omega_{\mathcal T}\), since \(\hat s_{\rm freeze}\) minimizes \(\hat{\mathcal R}_2(s)\) over \(\mathcal S_{\rm freeze}(V_1)\), we have
\begin{align*}
\hat{\mathcal R}_2^{\rm trunc}(\hat s_{\rm freeze})=\hat{\mathcal R}_2(\hat s_{\rm freeze})\leqslant \hat{\mathcal R}_2(\bar s_\epsilon)=\hat{\mathcal R}_2^{\rm trunc}(\bar s_\epsilon).
\end{align*}
Therefore,
\begin{align*}
\mathcal L_2(\hat s_{\rm freeze})&=\mathcal R_2(\hat s_{\rm freeze})-E_2\\
&\leqslant \mathfrak T_2(R_z,R_\perp)+\mathcal R_2^{\rm trunc}(\hat s_{\rm freeze})-E_2\\
&\leqslant \mathfrak T_2(R_z,R_\perp)+\left[\mathcal R_2^{\rm trunc}(\hat s_{\rm freeze})-(1+a)\hat{\mathcal R}_2^{\rm trunc}(\hat s_{\rm freeze})\right]\\
&\quad+(1+a)\hat{\mathcal R}_2^{\rm trunc}(\hat s_{\rm freeze})-E_2\\
&\leqslant \mathfrak T_2(R_z,R_\perp)+\sup_{s\in\mathcal S_{\rm freeze}(V_1)}\left[\mathcal R_2^{\rm trunc}(s)-(1+a)\hat{\mathcal R}_2^{\rm trunc}(s)\right]_+\\
&\quad+(1+a)\hat{\mathcal R}_2^{\rm trunc}(\bar s_\epsilon)-E_2.
\end{align*}
Next, decompose the fixed-comparator empirical term as
\begin{align*}
(1+a)\hat{\mathcal R}_2^{\rm trunc}(\bar s_\epsilon)&\leqslant (1+a)\left[\hat{\mathcal R}_2^{\rm trunc}(\bar s_\epsilon)-(1+a)\mathcal R_2^{\rm trunc}(\bar s_\epsilon)\right]_+\\
&\quad+(1+a)^2\mathcal R_2^{\rm trunc}(\bar s_\epsilon).
\end{align*}
Combining the previous two displays and using the definition of \(\mathfrak S_2(a;R_z,R_\perp)\), we obtain
\begin{align*}
\mathcal L_2(\hat s_{\rm freeze})\leqslant \mathfrak T_2(R_z,R_\perp)+\mathfrak S_2(a;R_z,R_\perp)+(1+a)^2\mathcal R_2^{\rm trunc}(\bar s_\epsilon)-E_2.
\end{align*}
Since \(\ell_2(x;\bar s_\epsilon)\geqslant0\), we have
\begin{align*}
\mathcal R_2^{\rm trunc}(\bar s_\epsilon)\leqslant \mathcal R_2(\bar s_\epsilon).
\end{align*}
Therefore,
\begin{align*}
\mathcal L_2(\hat s_{\rm freeze})\leqslant \mathfrak T_2(R_z,R_\perp)+\mathfrak S_2(a;R_z,R_\perp)+(1+a)^2\mathcal R_2(\bar s_\epsilon)-E_2.
\end{align*}
Using
\begin{align*}
\mathcal R_2(\bar s_\epsilon)=\mathcal L_2(\bar s_\epsilon)+E_2,
\end{align*}
we obtain
\begin{align*}
\mathcal L_2(\hat s_{\rm freeze})\leqslant \mathfrak T_2(R_z,R_\perp)+\mathfrak S_2(a;R_z,R_\perp)+(1+a)^2\mathcal L_2(\bar s_\epsilon)+(2a+a^2)E_2.
\end{align*}
Finally, by the frozen-comparator comparison and Proposition~\ref{prop:relu_approx_frozen_comparator}, for any \(\eta\in(0,1)\),
\begin{align*}
\mathcal L_2(\bar s_\epsilon)\leqslant (1+\eta)\mathcal U_{\rm comp}(V_1)+\left(1+\frac{1}{\eta}\right)C_T(d_1+1)\frac{\epsilon^2}{t_0(T-t_0)}.
\end{align*}
Substituting this bound gives
\begin{align*}
\mathcal L_2(\hat s_{\rm freeze})&\leqslant \mathfrak T_2(R_z,R_\perp)+\mathfrak S_2(a;R_z,R_\perp)\\
&\quad+(1+a)^2(1+\eta)\mathcal U_{\rm comp}(V_1)\\
&\quad+(1+a)^2\left(1+\frac{1}{\eta}\right)C_T(d_1+1)\frac{\epsilon^2}{t_0(T-t_0)}+(2a+a^2)E_2.
\end{align*}

\textbf{Step 3: Control of the truncation error \(\mathfrak T_2(R_z,R_\perp)\).}
Recall that
\begin{align*}
\mathfrak T_2(R_z,R_\perp)=\sup_{s\in\mathcal S_{\rm freeze}(V_1)}\E_{X_0\sim p_2}\left[\ell_2(X_0;s)\mathbf 1_{\mathcal T_2^c(R_z,R_\perp)}(X_0)\right].
\end{align*}
For \(s=s_{V_1,\theta}\in\mathcal S_{\rm freeze}(V_1)\), using
\begin{align*}
s_{V_1,\theta}(x,t)=\frac{1}{h(t)}V_1f_\theta(V_1^\top x,t)-\frac{1}{h(t)}x,
\end{align*}
and
\begin{align*}
\nabla_{X_t}\log\phi_t(X_t\mid X_0)=\frac{\alpha(t)X_0-X_t}{h(t)},
\end{align*}
we have, for \(X_t\mid X_0=x\),
\begin{align*}
\nabla_{X_t}\log\phi_t(X_t\mid x)-s_{V_1,\theta}(X_t,t)=\frac{\alpha(t)x-V_1f_\theta(V_1^\top X_t,t)}{h(t)}.
\end{align*}
Since \(\|V_1f_\theta(V_1^\top X_t,t)\|_2\leqslant K\) and \(\alpha(t)\leqslant1\), it follows that
\begin{align*}
\left\|\nabla_{X_t}\log\phi_t(X_t\mid x)-s_{V_1,\theta}(X_t,t)\right\|_2^2\leqslant \frac{2\|x\|_2^2+2K^2}{h^2(t)}.
\end{align*}
Therefore,
\begin{align*}
\ell_2(x;s_{V_1,\theta})\leqslant \frac{2(K^2+\|x\|_2^2)}{T-t_0}\int_{t_0}^T\frac{1}{h^2(t)}\,\rmd t.
\end{align*}
Using
\begin{align*}
\frac{1}{T-t_0}\int_{t_0}^T\frac{1}{h^2(t)}\,\rmd t\leqslant \frac{C_T}{t_0(T-t_0)},
\end{align*}
we obtain
\begin{align*}
\ell_2(x;s_{V_1,\theta})\leqslant \frac{C_T(K^2+\|x\|_2^2)}{t_0(T-t_0)}.
\end{align*}
Consequently,
\begin{align*}
\mathfrak T_2(R_z,R_\perp)\leqslant \frac{C_T}{t_0(T-t_0)}\E_{X_0\sim p_2}\left[(K^2+\|X_0\|_2^2)\mathbf 1_{\mathcal T_2^c(R_z,R_\perp)}(X_0)\right].
\end{align*}
Since \(X_0=A_2Y+U\) with \(Y=A_2^\top X_0\), \(U=P_2^\perp X_0\), and \(\|X_0\|_2^2=\|Y\|_2^2+\|U\|_2^2\), while
\begin{align*}
\mathcal T_2^c(R_z,R_\perp)\subseteq\{\|Y\|_2>R_z\}\cup\{\|U\|_2>R_\perp\},
\end{align*}
the sub-Gaussian tail of \(Y=z+A_2^\top\epsilon_2\) and the Gaussian tail of \(U=P_2^\perp\epsilon_2\) imply
\begin{align*}
\E\left[(K^2+\|X_0\|_2^2)\mathbf 1_{\mathcal T_2^c(R_z,R_\perp)}(X_0)\right]\leqslant C(K^2+R_z^2+R_\perp^2)\frac{\delta}{N_2}.
\end{align*}
This follows from the standard identity
\begin{align*}
\E[W^2\mathbf 1_{\{W>R\}}]=R^2\mathbb P(W>R)+2\int_R^\infty u\mathbb P(W>u)\,\rmd u
\end{align*}
applied to \(W=\|Y\|_2\) and \(W=\|U\|_2\). Therefore,
\begin{align*}
\mathfrak T_2(R_z,R_\perp)\leqslant C\frac{K^2+R_z^2+R_\perp^2}{t_0(T-t_0)}\frac{\delta}{N_2}.
\end{align*}

\textbf{Step 4: Control of the statistical error \(\mathfrak S_2(a;R_z,R_\perp)\).}
For \(s\in\mathcal S_{\rm freeze}(V_1)\), define
\begin{align*}
g_s(x):=\ell_2(x;s)\mathbf 1_{\mathcal T_2(R_z,R_\perp)}(x).
\end{align*}
By the bound in Step 3 and the fact that \(\|x\|_2\leqslant R_*:=\sqrt{R_z^2+R_\perp^2}\) on \(\mathcal T_2(R_z,R_\perp)\), we have
\begin{align*}
0\leqslant g_s(x)\leqslant B_u:=C_T\frac{K^2+R_*^2}{t_0(T-t_0)}.
\end{align*}
Let
\begin{align*}
\mathcal N_{2}(\iota):=\mathcal N\left(\iota,\left\{g_s:s\in\mathcal S_{\rm freeze}(V_1)\right\},\|\cdot\|_\infty\right)
\end{align*}
be the \(\iota\)-covering number of the truncated loss class under the sup norm.
Applying the two-sided Bernstein-type concentration inequality in Lemma 15 of~\cite{Chen2023_scorelowdim} to the bounded class \(\{g_s:s\in\mathcal S_{\rm freeze}(V_1)\}\), together with an \(\iota\)-net argument, yields that with probability at least \(1-\delta/4\),
\begin{align*}
\sup_{s\in\mathcal S_{\rm freeze}(V_1)}\left[\mathcal R_2^{\rm trunc}(s)-(1+a)\hat{\mathcal R}_2^{\rm trunc}(s)\right]_+\leqslant (2+a)\iota+C\left(1+\frac{3}{a}\right)\frac{B_u}{N_2}\left(\log\mathcal N_2(\iota)+\log\frac{8}{\delta}\right).
\end{align*}
Since \(\bar s_\epsilon\) is fixed independently of the target-domain samples, another application of Lemma 15 gives, with probability at least \(1-\delta/4\),
\begin{align*}
\left[\hat{\mathcal R}_2^{\rm trunc}(\bar s_\epsilon)-(1+a)\mathcal R_2^{\rm trunc}(\bar s_\epsilon)\right]_+\leqslant C\left(1+\frac{6}{a}\right)\frac{B_u}{N_2}\log\frac{8}{\delta}.
\end{align*}
Combining the two displays and recalling the definition of \(\mathfrak S_2(a;R_z,R_\perp)\), we obtain that with probability at least \(1-\delta/2\),
\begin{align*}
\mathfrak S_2(a;R_z,R_\perp)&\leqslant (2+a)\iota+C\left(1+\frac{3}{a}\right)\frac{B_u}{N_2}\left(\log\mathcal N_2(\iota)+\log\frac{8}{\delta}\right)\\
&\quad+C(1+a)\left(1+\frac{6}{a}\right)\frac{B_u}{N_2}\log\frac{8}{\delta}.
\end{align*}
Therefore,
\begin{align*}
\mathfrak S_2(a;R_z,R_\perp)\leqslant (2+a)\iota+C\cdot C_T\left(1+\frac{1}{a}\right)\frac{K^2+R_*^2}{N_2t_0(T-t_0)}\left(\log\mathcal N_2(\iota)+\log\frac{8}{\delta}\right).
\end{align*}

Substituting Step 3 and 4 back, we have
\begin{align*}
\mathcal L_2(\hat s_{\rm freeze})&\leqslant C\frac{K^2+R_z^2+R_\perp^2}{t_0(T-t_0)}\frac{\delta}{N_2}\\
&\quad+(2+a)\iota+C\cdot C_T\left(1+\dfrac{1}{a}\right)\frac{K^2+R_z^2+R_\perp^2}{N_2t_0(T-t_0)}\left(\log\mathcal N_2(\iota)+\log\frac{8}{\delta}\right)\\
&\quad+(1+a)^2(1+\eta)\mathcal U_{\rm comp}(V_1)\\
&\quad+(1+a)^2\left(1+\frac{1}{\eta}\right)C_T(d_1+1)\frac{\epsilon^2}{t_0(T-t_0)}+(2a+a^2)E_2.
\end{align*}
Moreover, by the denoising score-matching identity,
\[
E_2\leqslant \frac{D}{T-t_0}\int_{t_0}^T\frac{1}{h(t)}\,\rmd t
=
\frac{D}{T-t_0}\log\frac{e^T-1}{e^{t_0}-1}.
\]
Then choosing $a=\epsilon^2,\eta=1,\iota=\dfrac{1}{N_2t_0(T-t_0)}$ derives the required result.
\end{proof}

\subsection{Proof of Section~\ref{sec:mixed_training}}
\label{app:mixed_training_proofs}

\begin{proof}[Proof of Proposition~\ref{prop:optimal_mixed_projector}]
Using \(P_W^\perp=I_D-P_W\), we have
\begin{align*}
    \|P_W^\perp A_i\|_F^2=\operatorname{Tr}(A_i^\top P_W^\perp A_i)=d_i-\operatorname{Tr}(W^\top A_iA_i^\top W).
\end{align*}
Therefore
\begin{align*}
    \Gamma_k(W)&=\omega_1c_1d_1+\omega_2c_2d_2-\operatorname{Tr}\left(W^\top M_{\rm mix}W\right).
\end{align*}
Since
\begin{align*}
    \operatorname{Tr}(M_{\rm mix})=\omega_1c_1d_1+\omega_2c_2d_2,
\end{align*}
minimizing \(\Gamma_k(W)\) is equivalent to maximizing
\begin{align*}
    \operatorname{Tr}(W^\top M_{\rm mix}W)
\end{align*}
over all \(D\times k\) matrices with orthonormal columns. By the Rayleigh--Ritz variational principle, the maximum equals
\begin{align*}
    \sum_{j=1}^k\lambda_j(M_{\rm mix}),
\end{align*}
and it is achieved by taking \(W\) to be the top \(k\) eigenvectors of \(M_{\rm mix}\). The monotonicity in \(k\) follows immediately from the eigenvalue formula. Finally, the residual is zero if and only if \(P_W^\perp A_1=0\) and \(P_W^\perp A_2=0\), which is equivalent to both component subspaces being contained in \(\operatorname{span}(W)\).
\end{proof}

\begin{corollary}[Principal-angle interpretation]
\label{cor:principal_angle_projector}
Let \(A_i\in\mathbb R^{D\times d_i}\) have orthonormal columns for \(i=1,2\). Set
\begin{align*}
a:=\omega_1c_1,\qquad b:=\omega_2c_2,\qquad M_{\rm mix}:=aA_1A_1^\top+bA_2A_2^\top.
\end{align*}
Let \(r:=\min\{d_1,d_2\}\), and let
\begin{align*}
0\leqslant\varphi_1\leqslant\cdots\leqslant\varphi_r\leqslant\pi/2
\end{align*}
be the principal angles between \(\operatorname{span}(A_1)\) and \(\operatorname{span}(A_2)\). Then \(M_{\rm mix}\) is zero on
\begin{align*}
\left(\operatorname{span}(A_1)+\operatorname{span}(A_2)\right)^\perp.
\end{align*}
On the two-dimensional principal-angle block associated with \(\varphi_j\), the eigenvalues of \(M_{\rm mix}\) are
\begin{align}
\label{eq:general_weighted_principal_eigs}
\lambda_j^\pm=\frac{a+b}{2}\pm\frac{1}{2}\sqrt{(a-b)^2+4ab\cos^2\varphi_j},\qquad j=1,\ldots,r.
\end{align}
If \(d_1>d_2\), then \(M_{\rm mix}\) has additionally \(d_1-d_2\) eigenvalues equal to \(a\). If \(d_2>d_1\), then \(M_{\rm mix}\) has additionally \(d_2-d_1\) eigenvalues equal to \(b\). All remaining eigenvalues are zero.
\end{corollary}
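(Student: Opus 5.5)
The plan is to use the canonical (CS-decomposition) form of the pair of subspaces $\operatorname{span}(A_1)$ and $\operatorname{span}(A_2)$, which splits $\mathbb R^D$ into invariant blocks on which $M_{\rm mix}$ can be diagonalized explicitly.

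\textbf{Step 1: principal vectors.} Take the SVD $A_1^\top A_2=Q_1\operatorname{diag}(\cos\varphi_j)Q_2^\top$ and replace $A_1,A_2$ by $A_1Q_1,A_2Q_2$. This leaves $A_iA_i^\top$, and hence $M_{\rm mix}$, unchanged. The columns $u_j$ of $A_1Q_1$ and $v_j$ of $A_2Q_2$ then satisfy $u_j^\top v_l=\delta_{jl}\cos\varphi_j$ for $j,l\le r$. Every extra column, $u_j$ with $j>r$ or $v_l$ with $l>r$, is orthogonal to all columns of the other matrix. For each $j\le r$ set $E_j:=\operatorname{span}(u_j,v_j)$. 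This plane is two-dimensional when $\varphi_j>0$ and a line when $\varphi_j=0$. The orthogonality relations make the $E_j$ mutually orthogonal. Assuming $d_1>d_2$, the spans $E_j$ together with the extra vectors $u_{r+1},\dots,u_{d_1}$ are also pairwise orthogonal, and their sum is $\operatorname{span}(A_1)+\operatorname{span}(A_2)$.

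\textbf{Step 2: block action.} Write $M_{\rm mix}=a\sum_j u_ju_j^\top+b\sum_l v_lv_l^\top$. For $x\in(\operatorname{span}A_1+\operatorname{span}A_2)^\perp$, every $u_j^\top x$ and $v_l^\top x$ vanishes, so $M_{\rm mix}x=0$; this gives the first claim. Each $E_j$ is invariant, because $u_l,v_l$ for $l\ne j$ are orthogonal to $E_j$, and the restriction there is $a u_ju_j^\top+b v_jv_j^\top$. For an extra $u_j$ with $j>r$, we get $M_{\rm mix}u_j=au_j$ because $u_j\perp v_l$ for all $l$. This produces the $d_1-d_2$ eigenvalues equal to $a$. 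The case $d_2>d_1$ is symmetric and gives eigenvalue $b$.

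\textbf{Step 3: the $2\times2$ block.} This is the only real computation. The nonzero eigenvalues of $au u^\top+bvv^\top=[u\ v]\operatorname{diag}(a,b)[u\ v]^\top$ coincide with those of $\operatorname{diag}(a,b)^{1/2}G\operatorname{diag}(a,b)^{1/2}$, where $G$ is the Gram matrix with unit diagonal and off-diagonal entries $c=\cos\varphi_j$. This $2\times2$ matrix has trace $a+b$ and determinant $ab(1-c^2)$, so its eigenvalues are
\begin{align*}
\lambda_j^\pm=\frac{a+b}{2}\pm\frac12\sqrt{(a+b)^2-4ab\sin^2\varphi_j}=\frac{a+b}{2}\pm\frac12\sqrt{(a-b)^2+4ab\cos^2\varphi_j}.
\end{align*}
The degenerate case $\varphi_j=0$ is consistent with this formula: $E_j$ is a line with eigenvalue $a+b=\lambda_j^+$, and $\lambda_j^-=0$ is absorbed into the zero eigenvalues.

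\textbf{Step 4: counting.} The blocks produce $2r+|d_1-d_2|=d_1+d_2$ eigenvalues. When all $\varphi_j>0$ this is $\dim(\operatorname{span}A_1+\operatorname{span}A_2)$; otherwise each zero angle trades one dimension for a zero eigenvalue, as noted in Step 3. All remaining eigenvalues come from the orthogonal complement, where $M_{\rm mix}$ vanishes, so they are zero.

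The main obstacle is the careful setup in Step 1. One must check the orthogonality claims from the SVD structure, in particular that $v_l$ for $l\ne j$ is orthogonal to $u_j$ and that $u_j\perp v_l$ whenever $j>r$. One must also handle the zero-angle degeneracy (and angles equal to $\pi/2$) when counting dimensions.
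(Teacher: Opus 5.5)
Your proposal is correct and follows essentially the same route as the paper. Both use the principal-vector (CS) decomposition into mutually orthogonal invariant blocks, compute trace $a+b$ and determinant $ab\sin^2\varphi_j$ on each two-dimensional block, assign eigenvalue $a$ or $b$ to the extra directions of the larger subspace, and get zero on $\left(\operatorname{span}(A_1)+\operatorname{span}(A_2)\right)^\perp$. Your version is somewhat more careful than the paper's sketch: you derive the block invariance explicitly from the SVD of $A_1^\top A_2$, and you handle the degenerate case $\varphi_j=0$, where the paper's ``normalized component of $v_j$ orthogonal to $u_j$'' does not exist, by noting that $\lambda_j^-=0$ is absorbed into the zero eigenvalues.
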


\begin{proof}[Proof of Corollary~\ref{cor:principal_angle_projector}]
By the cosine-sine decomposition for two subspaces, there exist orthonormal principal vectors \(u_j\in\operatorname{span}(A_1)\) and \(v_j\in\operatorname{span}(A_2)\) such that \(u_j^\top v_j=\cos\varphi_j\) for \(j=1,\ldots,r\). The action of \(P_{A_1}\) and \(P_{A_2}\) decomposes into independent principal-angle blocks, together with the extra directions belonging only to the larger subspace.

On the block spanned by \(u_j\) and the normalized component of \(v_j\) orthogonal to \(u_j\), the matrix \(aP_{A_1}+bP_{A_2}\) has trace \(a+b\) and determinant \(ab\sin^2\varphi_j\). Hence its two eigenvalues are
\begin{align*}
\lambda_j^\pm=\frac{a+b}{2}\pm\frac{1}{2}\sqrt{(a+b)^2-4ab\sin^2\varphi_j}=\frac{a+b}{2}\pm\frac{1}{2}\sqrt{(a-b)^2+4ab\cos^2\varphi_j}.
\end{align*}
If \(d_1>d_2\), the \(d_1-d_2\) directions in \(\operatorname{span}(A_1)\) orthogonal to \(\operatorname{span}(A_2)\) are eigenvectors with eigenvalue \(a\). If \(d_2>d_1\), the additional directions in \(\operatorname{span}(A_2)\) have eigenvalue \(b\). On the orthogonal complement of \(\operatorname{span}(A_1)+\operatorname{span}(A_2)\), both projectors vanish, so the eigenvalue is zero.
\end{proof}

In the special case \(d_1=d_2=d\), \(c_1=c_2=1\), and \(\omega_1=\pi,\omega_2=1-\pi\), we have \(a=\pi\) and \(b=1-\pi\), so \eqref{eq:general_weighted_principal_eigs} becomes
\begin{align}
\label{eq:weighted_principal_eigs}
\lambda_j^\pm=\frac{1}{2}\left[1\pm\sqrt{1-4\pi(1-\pi)\sin^2\varphi_j}\right],\qquad j=1,\ldots,d.
\end{align}
In the balanced case \(\pi=1/2\), this further reduces to
\begin{align*}
\lambda_j^\pm=\frac{1\pm\cos\varphi_j}{2}.
\end{align*}

Consequently, the optimal \(k\)-dimensional shared projector in Proposition~\ref{prop:optimal_mixed_projector} selects the directions corresponding to the largest eigenvalues of \(M_{\rm mix}\). Principal directions with small angles concentrate most of their mixed subspace energy in one direction, whereas directions with large angles require additional latent dimensions to be well represented. If \(\varphi_j=0\), then \(\lambda_j^-=0\), reflecting that the corresponding principal-angle block degenerates into a shared one-dimensional direction.

\begin{proof}[Proof of Theorem~\ref{thm:mixed_oracle_projector_comparison}]
\noindent\textbf{Score structure under mixed training.}
Let
\begin{align*}
p_{\rm mix}:=\omega_1p_1+\omega_2p_2,\qquad \omega_1+\omega_2=1,
\end{align*}
and denote by \(p_{{\rm mix},t}\) the corresponding forward-diffused density. Since the forward diffusion is linear,
\begin{align*}
p_{{\rm mix},t}(x)=\omega_1p_{1,t}(x)+\omega_2p_{2,t}(x).
\end{align*}
Define the posterior mixing weights
\begin{align*}
\pi_i(x,t):=\frac{\omega_ip_{i,t}(x)}{\omega_1p_{1,t}(x)+\omega_2p_{2,t}(x)},\qquad i=1,2.
\end{align*}
Then
\begin{align*}
\nabla\log p_{{\rm mix},t}(x)=\pi_1(x,t)\nabla\log p_{1,t}(x)+\pi_2(x,t)\nabla\log p_{2,t}(x).
\end{align*}
For each component, define
\begin{align*}
g_i(y,t):=h(t)\nabla\log p_{i,t}^{\rm LD}(y)+y,\qquad \rho_i(t):=1-\frac{h(t)}{\tilde h_i(t)}=\frac{\alpha^2(t)\sigma_i^2}{\tilde h_i(t)}.
\end{align*}
For \(i=1,2\), define
\begin{align*}
M_{g,i}(t):=\E_{X_t\sim p_{i,t}}\left[g_i(A_i^\top X_t,t)g_i(A_i^\top X_t,t)^\top\right],
\end{align*}
and
\begin{align*}
\lambda_{g,i}^{\max}(t):=\lambda_{\max}(M_{g,i}(t)).
\end{align*}
Let
\begin{align*}
P_i:=A_iA_i^\top,\qquad P_i^\perp:=I_D-P_i,
\end{align*}
and
\begin{align*}
G_{i,t}(x):=A_ig_i(A_i^\top x,t)+\rho_i(t)P_i^\perp x.
\end{align*}
Then the mixed score can be written as
\begin{align*}
\nabla\log p_{{\rm mix},t}(x)=\frac{1}{h(t)}G_{{\rm mix},t}(x)-\frac{1}{h(t)}x,
\end{align*}
where
\begin{align*}
G_{{\rm mix},t}(x):=\pi_1(x,t)G_{1,t}(x)+\pi_2(x,t)G_{2,t}(x).
\end{align*}

Let \(U\in\mathbb R^{D\times m}\) be an ideal latent projector with \(U^\top U=I_m\). In the comparison below, \(U\) will be instantiated either as the source-aligned projector \(V_1\) with \(m=d_1\), or as the shared mixed-data projector \(W_k\) with \(m=k\). The word ``ideal'' is used only at the oracle-analysis level: \(\mathcal B_{\rm mix}(U)\) measures the best approximation power of the score network class after a projector has been chosen, and does not describe the finite-sample cost of learning the projector itself.
Define
\begin{align*}
P_U:=UU^\top,\qquad P_U^\perp:=I_D-P_U,\qquad H_i(U):=U^\top A_i,\quad i=1,2.
\end{align*}
Consider the mixed-data score class induced by \(U\):
\begin{align*}
\mathcal S_{\rm mix}(U):=\left\{s_{U,\theta}(x,t)=\frac{1}{h(t)}Uf_\theta(U^\top x,t)-\frac{1}{h(t)}x:f_\theta\in\mathcal F_{\rm ReLU}^{(m)}\right\}.
\end{align*}
For any score network \(s\), define the mixed score-matching risk
\begin{align*}
\mathcal L_{\rm mix}(s):=\frac{1}{T-t_0}\int_{t_0}^T\E_{X_t\sim p_{{\rm mix},t}}\left[\|s(X_t,t)-\nabla\log p_{{\rm mix},t}(X_t)\|_2^2\right]\,\rmd t.
\end{align*}
The mixed ReLU oracle risk under the projector \(U\) is
\begin{align*}
\mathcal B_{\rm mix}(U):=\inf_{s\in\mathcal S_{\rm mix}(U)}\mathcal L_{\rm mix}(s).
\end{align*}

For \(i=1,2\), let \(q_{i,U,t}\) denote the density of \(U^\top X_t\) when \(X_t\sim p_{i,t}\). Define the projected posterior mixing weights by
\begin{align*}
\pi_i^U(z,t):=\frac{\omega_iq_{i,U,t}(z)}{\omega_1q_{1,U,t}(z)+\omega_2q_{2,U,t}(z)},\qquad i=1,2.
\end{align*}
Let \(C\in\{1,2\}\) denote the component label, with \(\mathbb P(C=i)=\omega_i\) and \(X_t\mid C=i\sim p_{i,t}\). Then
\begin{align*}
\pi_i^U(z,t)=\mathbb P(C=i\mid U^\top X_t=z).
\end{align*}
Thus, \(\pi_i^U\) is the Bayes posterior component weight after observing only the compressed coordinate \(U^\top X_t\). If the exact component conditional predictors
\begin{align*}
\mu_{i,U,t}(z):=\E\left[U^\top G_{i,t}(X_t)\mid U^\top X_t=z,\ C=i\right]
\end{align*}
were available, then the optimal measurable predictor of \(U^\top G_{{\rm mix},t}(X_t)\) given \(U^\top X_t=z\) would be
\begin{align*}
\sum_{i=1}^2\pi_i^U(z,t)\mu_{i,U,t}(z).
\end{align*}
As in the frozen-projector analysis, directly controlling these conditional predictors is difficult. We therefore use an explicit componentwise comparator:
\begin{align*}
\psi_{i,U}(z,t):=H_i(U)g_i(H_i(U)^\dagger z,t),\qquad i=1,2.
\end{align*}
The mixed explicit comparator is then defined as
\begin{align*}
f_{{\rm mix},U}^{\rm comp}(z,t):=\pi_1^U(z,t)\psi_{1,U}(z,t)+\pi_2^U(z,t)\psi_{2,U}(z,t).
\end{align*}
Finally, define
\begin{align*}
s_{{\rm mix},U}^{\rm comp}(x,t):=\frac{1}{h(t)}Uf_{{\rm mix},U}^{\rm comp}(U^\top x,t)-\frac{1}{h(t)}x,
\end{align*}
and
\begin{align*}
\mathcal U_{\rm mix}(U):=\mathcal L_{\rm mix}(s_{{\rm mix},U}^{\rm comp}).
\end{align*}

Define the ReLU approximation error of the mixed comparator under \(U\) by
\begin{align*}
\mathcal A_{\rm mix}(f_\theta;U):=\frac{1}{T-t_0}\int_{t_0}^T\frac{1}{h^2(t)}\E_{X_t\sim p_{{\rm mix},t}}\left[\left\|f_\theta(U^\top X_t,t)-f_{{\rm mix},U}^{\rm comp}(U^\top X_t,t)\right\|_2^2\right]\,\rmd t.
\end{align*}
Then, for any \(\eta>0\),
\begin{align*}
\mathcal B_{\rm mix}(U)\leqslant (1+\eta)\mathcal U_{\rm mix}(U)+\left(1+\frac{1}{\eta}\right)\inf_{f_\theta\in\mathcal F_{\rm ReLU}^{(m)}}\mathcal A_{\rm mix}(f_\theta;U).
\end{align*}
Indeed, for any \(f_\theta\in\mathcal F_{\rm ReLU}^{(m)}\), one decomposes
\begin{align*}
Uf_\theta(U^\top X_t,t)-G_{{\rm mix},t}(X_t)&=\left[Uf_{{\rm mix},U}^{\rm comp}(U^\top X_t,t)-G_{{\rm mix},t}(X_t)\right]\\
&\quad+U\left[f_\theta(U^\top X_t,t)-f_{{\rm mix},U}^{\rm comp}(U^\top X_t,t)\right],
\end{align*}
and applies \(\|a+b\|_2^2\leqslant(1+\eta)\|a\|_2^2+(1+1/\eta)\|b\|_2^2\), using \(U^\top U=I_m\).

The measurable-comparator risk decomposes into an output-side structural term and an in-space comparator term:
\begin{align*}
\mathcal U_{\rm mix}(U)=\mathcal O_{\rm mix}(U)+\mathcal I_{\rm mix}(U),
\end{align*}
where
\begin{align*}
\mathcal O_{\rm mix}(U):=\frac{1}{T-t_0}\int_{t_0}^T\frac{1}{h^2(t)}\E_{X_t\sim p_{{\rm mix},t}}\left[\left\|P_U^\perp G_{{\rm mix},t}(X_t)\right\|_2^2\right]\,\rmd t,
\end{align*}
and
\begin{align*}
\mathcal I_{\rm mix}(U):=\frac{1}{T-t_0}\int_{t_0}^T\frac{1}{h^2(t)}\E_{X_t\sim p_{{\rm mix},t}}\left[\left\|f_{{\rm mix},U}^{\rm comp}(U^\top X_t,t)-U^\top G_{{\rm mix},t}(X_t)\right\|_2^2\right]\,\rmd t.
\end{align*}
This follows by decomposing \(G_{{\rm mix},t}(X_t)\) into its \(P_U\)- and \(P_U^\perp\)-components and using the orthogonality of \(\operatorname{span}(U)\) and its complement.

The output-side term is controlled by the joint subspace residual. Since
\begin{align*}
G_{{\rm mix},t}(x)=\pi_1(x,t)G_{1,t}(x)+\pi_2(x,t)G_{2,t}(x),
\end{align*}
convexity gives
\begin{align*}
\left\|P_U^\perp G_{{\rm mix},t}(x)\right\|_2^2\leqslant\sum_{i=1}^2\pi_i(x,t)\left\|P_U^\perp G_{i,t}(x)\right\|_2^2.
\end{align*}
Using \(\pi_i(x,t)p_{{\rm mix},t}(x)=\omega_ip_{i,t}(x)\), we obtain
\begin{align*}
\E_{p_{{\rm mix},t}}\left[\left\|P_U^\perp G_{{\rm mix},t}(X_t)\right\|_2^2\right]\leqslant\sum_{i=1}^2\omega_i\E_{p_{i,t}}\left[\left\|P_U^\perp G_{i,t}(X_t)\right\|_2^2\right].
\end{align*}
For each component, the same calculation as in the frozen-projector analysis yields
\begin{align*}
\E_{p_{i,t}}\left[\left\|P_U^\perp G_{i,t}(X_t)\right\|_2^2\right]\leqslant\lambda_{g,i}^{\max}(t)\|P_U^\perp A_i\|_F^2+\frac{\alpha^4(t)\sigma_i^4}{\tilde h_i(t)}\operatorname{Tr}(P_U^\perp P_i^\perp).
\end{align*}
Therefore,
\begin{align*}
\mathcal O_{\rm mix}(U)\leqslant\sum_{i=1}^2\omega_i\bar c_i\|P_U^\perp A_i\|_F^2+\sum_{i=1}^2\omega_i\bar n_i\operatorname{Tr}(P_U^\perp P_i^\perp),
\end{align*}
where
\begin{align*}
\bar c_i:=\frac{1}{T-t_0}\int_{t_0}^T\frac{\lambda_{g,i}^{\max}(t)}{h^2(t)}\,\rmd t,\qquad \bar n_i:=\frac{1}{T-t_0}\int_{t_0}^T\frac{\alpha^4(t)\sigma_i^4}{h^2(t)\tilde h_i(t)}\,\rmd t.
\end{align*}
If \(c_i=\bar c_i\) in \(\Gamma_k(W)\), the first term is exactly the weighted joint residual
\begin{align*}
\sum_{i=1}^2\omega_i\bar c_i\|P_U^\perp A_i\|_F^2.
\end{align*}
In particular, for \(U=W_k\), where \(W_k\) is the top-\(k\) eigenspace of \(M_{\rm mix}=\omega_1\bar c_1A_1A_1^\top+\omega_2\bar c_2A_2A_2^\top\), we have
\begin{align*}
\mathcal O_{\rm mix}(W_k)\leqslant\sum_{j=k+1}^D\lambda_j(M_{\rm mix})+\sum_{i=1}^2\omega_i\bar n_i\operatorname{Tr}(P_{W_k}^\perp P_i^\perp).
\end{align*}
For \(U=V_1\), the same bound becomes
\begin{align*}
\mathcal O_{\rm mix}(V_1)\leqslant\omega_1\bar c_1\|P_{V_1}^\perp A_1\|_F^2+\omega_2\bar c_2\|P_{V_1}^\perp A_2\|_F^2+\sum_{i=1}^2\omega_i\bar n_i\operatorname{Tr}(P_{V_1}^\perp P_i^\perp).
\end{align*}
Thus, \(W_k\) improves the output-side structural term whenever the reduction in the joint residual offsets the additional dimension-dependent costs introduced later through approximation and statistical complexity.

It remains to interpret the in-space term \(\mathcal I_{\rm mix}(U)\). Define
\begin{align*}
Z:=U^\top X_t,\qquad e_{i,U}(X_t,t):=\psi_{i,U}(Z,t)-U^\top G_{i,t}(X_t).
\end{align*}
Then
\begin{align*}
&\quad f_{{\rm mix},U}^{\rm comp}(Z,t)-U^\top G_{{\rm mix},t}(X_t)\\
&=\sum_{i=1}^2\pi_i(X_t,t)e_{i,U}(X_t,t)+\sum_{i=1}^2(\pi_i^U(Z,t)-\pi_i(X_t,t))\psi_{i,U}(Z,t)\\
&=\sum_{i=1}^2\pi_i(X_t,t)e_{i,U}(X_t,t)+(\pi_1^U(Z,t)-\pi_1(X_t,t))(\psi_{1,U}(Z,t)-\psi_{2,U}(Z,t)).
\end{align*}
Consequently, by \(\|a+b\|_2^2\leqslant2\|a\|_2^2+2\|b\|_2^2\) and \(\|\sum_i\pi_i e_i\|_2^2\leqslant\sum_i\pi_i\|e_i\|_2^2\), we have
\begin{align*}
&\quad\mathcal I_{\rm mix}(U)\\
&\leqslant \frac{2}{T-t_0}\int_{t_0}^T\frac{1}{h^2(t)}\sum_{i=1}^2\omega_i\E_{X_t\sim p_{i,t}}\left[\left\|\psi_{i,U}(U^\top X_t,t)-U^\top G_{i,t}(X_t)\right\|_2^2\right]\,\rmd t\\
&\quad+\frac{2}{T-t_0}\int_{t_0}^T\frac{1}{h^2(t)}\E_{X_t\sim p_{{\rm mix},t}}\left[\left(\pi_1^U(U^\top X_t,t)-\pi_1(X_t,t)\right)^2\left\|\psi_{1,U}(U^\top X_t,t)-\psi_{2,U}(U^\top X_t,t)\right\|_2^2\right]\,\rmd t.
\end{align*}
The first term is a component reconstruction error: it measures how well the compressed coordinate \(U^\top X_t\) can recover the componentwise score target. The second term is a posterior-compression error: it measures the loss caused by replacing the full posterior weight \(\pi_i(X_t,t)\) with the projected posterior weight \(\pi_i^U(U^\top X_t,t)\). This posterior-compression term is specific to mixed training and has no analogue in the single-domain frozen-transfer bound.

\noindent\textbf{Regularity of the componentwise comparators.}
We first record how the choice of \(U\) affects the regularity of the componentwise comparators. Recall that
\begin{align*}
\psi_{i,U}(z,t):=H_i(U)g_i(H_i(U)^\dagger z,t),\qquad H_i(U):=U^\top A_i,\quad i=1,2.
\end{align*}
By Assumption~\ref{asm:s_paral_Lip}, the map \(f_i^*(\cdot,t)\) is \(\beta_i\)-Lipschitz uniformly in \(t\). Since
\begin{align*}
g_i(y,t)=\frac{h(t)}{\tilde h_i(t)}f_i^*(y,t)+\left(1-\frac{h(t)}{\tilde h_i(t)}\right)y,
\end{align*}
and \(0\leqslant h(t)/\tilde h_i(t)\leqslant1\), it follows that \(g_i(\cdot,t)\) is \((\beta_i+1)\)-Lipschitz uniformly in \(t\). Hence, for any \(z,z'\in\mathbb R^m\),
\begin{align*}
\|\psi_{i,U}(z,t)-\psi_{i,U}(z',t)\|_2\leqslant L_{\psi,i}(U)\|z-z'\|_2,
\end{align*}
where
\begin{align*}
L_{\psi,i}(U):=\|H_i(U)\|_{\rm op}\|H_i(U)^\dagger\|_{\rm op}(\beta_i+1).
\end{align*}
Moreover, if
\begin{align*}
G_{i,0}:=\sup_{t\in[t_0,T]}\|g_i(0,t)\|_2<\infty,
\end{align*}
then on the box \(\|z\|_\infty\leqslant R\),
\begin{align*}
\|\psi_{i,U}(z,t)\|_2\leqslant K_{\psi,i,U}(R):=\|H_i(U)\|_{\rm op}G_{i,0}+L_{\psi,i}(U)\sqrt m R.
\end{align*}
Finally, define the compact-domain time regularity modulus
\begin{align*}
\tau_{\psi,i,U}(R):=\sup_{t\in[t_0,T],\,\|z\|_\infty\leqslant R}\|\partial_t\psi_{i,U}(z,t)\|_2.
\end{align*}
Whenever \(g_i\) is differentiable in \(t\), we have
\begin{align*}
\partial_t\psi_{i,U}(z,t)=H_i(U)\partial_tg_i(H_i(U)^\dagger z,t).
\end{align*}
Thus, if
\begin{align*}
\tau_{g,i}(R):=\sup_{t\in[t_0,T],\,\|y\|_2\leqslant R}\|\partial_tg_i(y,t)\|_2<\infty,
\end{align*}
then
\begin{align*}
\tau_{\psi,i,U}(R)\leqslant \|H_i(U)\|_{\rm op}\tau_{g,i}\left(\|H_i(U)^\dagger\|_{\rm op}\sqrt m R\right).
\end{align*}

\noindent\textbf{Regularity of the projected posterior weights.}
For \(i=1,2\), the projected density \(q_{i,U,t}\) admits the representation
\begin{align*}
q_{i,U,t}(z)=\int_{\mathbb R^{d_i}}\varphi_{m,\tilde h_i(t)}(z-\alpha(t)H_i(U)y)p_{i,z}(y)\,\rmd y,
\end{align*}
where \(\varphi_{m,\tilde h_i(t)}\) is the \(m\)-dimensional Gaussian density with covariance \(\tilde h_i(t)I_m\). Since \(\tilde h_i(t)=\alpha^2(t)\sigma_i^2+h(t)\) is bounded away from zero on \([t_0,T]\), the Gaussian convolution makes \(q_{i,U,t}\) positive and smooth in \(z\). Under the positivity, smoothness, and tail assumptions on \(p_{i,z}\), differentiation under the integral sign gives finite constants on each compact box:
\begin{align*}
m_{q,i,U}(R)&:=\inf_{t\in[t_0,T],\,\|z\|_\infty\leqslant R}q_{i,U,t}(z)>0,\\
L_{q,i,U}^{(z)}(R)&:=\sup_{t\in[t_0,T],\,\|z\|_\infty\leqslant R}\|\nabla_z q_{i,U,t}(z)\|_2<\infty,\\
L_{q,i,U}^{(t)}(R)&:=\sup_{t\in[t_0,T],\,\|z\|_\infty\leqslant R}|\partial_t q_{i,U,t}(z)|<\infty.
\end{align*}
No equality between \(\sigma_1\) and \(\sigma_2\) is required; the constants above may depend on \(\sigma_1,\sigma_2\) separately.

Since
\begin{align*}
\pi_i^U(z,t)=\frac{\omega_iq_{i,U,t}(z)}{\omega_1q_{1,U,t}(z)+\omega_2q_{2,U,t}(z)},
\end{align*}
we have, for the two-component mixture,
\begin{align*}
\nabla_z\pi_1^U(z,t)=\pi_1^U(z,t)\pi_2^U(z,t)\left(\nabla_z\log q_{1,U,t}(z)-\nabla_z\log q_{2,U,t}(z)\right).
\end{align*}
Therefore, on \(\|z\|_\infty\leqslant R\),
\begin{align*}
\|\nabla_z\pi_1^U(z,t)\|_2\leqslant L_{\pi,U}(R),
\end{align*}
where one may take
\begin{align*}
L_{\pi,U}(R):=\frac14\left(\frac{L_{q,1,U}^{(z)}(R)}{m_{q,1,U}(R)}+\frac{L_{q,2,U}^{(z)}(R)}{m_{q,2,U}(R)}\right).
\end{align*}
Similarly,
\begin{align*}
|\partial_t\pi_1^U(z,t)|\leqslant \tau_{\pi,U}(R),
\end{align*}
where
\begin{align*}
\tau_{\pi,U}(R):=\frac14\left(\frac{L_{q,1,U}^{(t)}(R)}{m_{q,1,U}(R)}+\frac{L_{q,2,U}^{(t)}(R)}{m_{q,2,U}(R)}\right).
\end{align*}
The same bounds hold for \(\pi_2^U\), since \(\pi_2^U=1-\pi_1^U\).

\noindent\textbf{Regularity of the projected mixed comparator.}
We now combine the componentwise comparator regularity and the projected posterior regularity. Recall
\begin{align*}
f_{{\rm mix},U}^{\rm comp}(z,t)=\pi_1^U(z,t)\psi_{1,U}(z,t)+\pi_2^U(z,t)\psi_{2,U}(z,t).
\end{align*}
Equivalently,
\begin{align*}
f_{{\rm mix},U}^{\rm comp}(z,t)=\psi_{2,U}(z,t)+\pi_1^U(z,t)\left(\psi_{1,U}(z,t)-\psi_{2,U}(z,t)\right).
\end{align*}
For \(\|z\|_\infty,\|z'\|_\infty\leqslant R\), using the bounds above gives
\begin{align*}
\|f_{{\rm mix},U}^{\rm comp}(z,t)-f_{{\rm mix},U}^{\rm comp}(z',t)\|_2\leqslant L_{{\rm mix},U}(R)\|z-z'\|_2,
\end{align*}
where one may take
\begin{align*}
L_{{\rm mix},U}(R):=L_{\psi,1}(U)+L_{\psi,2}(U)+L_{\pi,U}(R)\left(K_{\psi,1,U}(R)+K_{\psi,2,U}(R)\right).
\end{align*}
Similarly, for \(\|z\|_\infty\leqslant R\) and \(t,t'\in[t_0,T]\),
\begin{align*}
\|f_{{\rm mix},U}^{\rm comp}(z,t)-f_{{\rm mix},U}^{\rm comp}(z,t')\|_2\leqslant \tau_{{\rm mix},U}(R)|t-t'|,
\end{align*}
where one may take
\begin{align*}
\tau_{{\rm mix},U}(R):=\tau_{\psi,1,U}(R)+\tau_{\psi,2,U}(R)+\tau_{\pi,U}(R)\left(K_{\psi,1,U}(R)+K_{\psi,2,U}(R)\right).
\end{align*}
Thus the regularity constants entering the ReLU approximation and covering-number arguments depend on \(U\) through the component conditioning factors \(H_i(U)^\dagger\), the projected posterior regularity constants \(L_{\pi,U}(R),\tau_{\pi,U}(R)\), and the latent dimension \(m\).

\noindent\textbf{Comparison between \(U=V_1\) and \(U=W_k\).}
The preceding bounds make explicit how the projector choice affects the regularity of the projected mixed comparator. If \(U=V_1\) and \(V_1\) is well aligned with \(A_1\), then \(H_1(V_1)=V_1^\top A_1\) is well conditioned, and \(L_{\psi,1}(V_1)\) is moderate. However, if \(A_2\) is far from the source subspace, then \(H_2(V_1)=V_1^\top A_2\) may be rank deficient or ill conditioned. In that case, the component-2 comparator can suffer from large reconstruction error, and the regularity constant \(L_{\psi,2}(V_1)\) may deteriorate through \(\|H_2(V_1)^\dagger\|_{\rm op}\).

By contrast, \(W_k\) is chosen to reduce the joint subspace residual
\begin{align*}
\Gamma_k(W)=\omega_1c_1\|P_W^\perp A_1\|_F^2+\omega_2c_2\|P_W^\perp A_2\|_F^2.
\end{align*}
When \(k\) is large enough for \(W_k\) to capture the important directions of both \(A_1\) and \(A_2\), the matrices \(H_i(W_k)=W_k^\top A_i\) are better conditioned, reducing the component-comparator constants \(L_{\psi,i}(W_k)\) and the component reconstruction part of \(\mathcal I_{\rm mix}(W_k)\). In the ideal case where \(\operatorname{span}(A_i)\subseteq\operatorname{span}(W_k)\), all nonzero singular values of \(H_i(W_k)\) equal one, and the componentwise spatial regularity reduces to the intrinsic constant \(\beta_i+1\).

However, the posterior-weight regularity constants \(L_{\pi,W_k}(R)\) and \(\tau_{\pi,W_k}(R)\) are controlled by the projected densities \(q_{i,W_k,t}\). These constants need not improve monotonically with \(k\): a higher-dimensional projection may preserve more information for distinguishing the two components, but it may also create sharper posterior transitions or smaller projected-density lower bounds on a fixed box. Therefore, the main guaranteed benefit of \(W_k\) is the reduction of the joint structural residual and the potential improvement of component reconstruction conditioning, while the posterior-compression and dimension-dependent statistical costs must be tracked separately.
\end{proof}

\end{document}